\documentclass{article} % For LaTeX2e

\PassOptionsToPackage{table}{xcolor}

\usepackage{iclr2027_conference,times}

\usepackage{amsmath,amsfonts,bm}

\def\eqref#1{equation~\ref{#1}}
\def\1{\bm{1}}

\DeclareMathAlphabet{\mathsfit}{\encodingdefault}{\sfdefault}{m}{sl}
\SetMathAlphabet{\mathsfit}{bold}{\encodingdefault}{\sfdefault}{bx}{n}

\DeclareMathOperator{\Tr}{Tr}

\usepackage{xcolor}
\usepackage{hyperref}
\usepackage{url}
\usepackage{latexsym}
\usepackage[T1]{fontenc}
\usepackage[utf8]{inputenc}
\usepackage{microtype}

\usepackage{graphicx}

\let\origincludegraphics\includegraphics
\renewcommand{\includegraphics}[2][]{%
  \IfFileExists{#2}{\origincludegraphics[#1]{#2}}{%
    \IfFileExists{#2.png}{\origincludegraphics[#1]{#2.png}}{%
      \IfFileExists{#2.pdf}{\origincludegraphics[#1]{#2.pdf}}{%
        \IfFileExists{#2.jpg}{\origincludegraphics[#1]{#2.jpg}}{%
          \origincludegraphics[#1]{template.png}%
        }%
      }%
    }%
  }%
}

\usepackage{amsmath}
\usepackage{amssymb}
\usepackage{amsfonts}
\usepackage{mathtools}
\usepackage{empheq}     % box around multi-line numbered equations
\usepackage{amsthm}
\usepackage{booktabs}
\usepackage{makecell}

\usepackage[capitalize,noabbrev]{cleveref}

\theoremstyle{plain}
\newtheorem{theorem}{Theorem}[section]
\newtheorem{proposition}[theorem]{Proposition}
\newtheorem{lemma}[theorem]{Lemma}

\theoremstyle{definition}
\newtheorem{definition}[theorem]{Definition}

\theoremstyle{remark}

\usepackage{framed}
\usepackage{tcolorbox}
\tcbuselibrary{skins,breakable}
\colorlet{stmtfill}{blue!5}
\colorlet{proofbar}{gray!35}
\colorlet{stmtbar}{blue!55}
\colorlet{deffill}{orange!8}

\tcbset{
  stmtpanel/.style={
    breakable, enhanced jigsaw,
    colback=stmtfill, colframe=stmtfill,
    boxrule=0pt, arc=5pt, outer arc=5pt,
    boxsep=0pt, left=8pt, right=8pt, top=8pt, bottom=8pt,
  },
}

\newtcolorbox{stmtframe}{stmtpanel}

\newtcolorbox{defframe}{stmtpanel}

\newenvironment{proofframe}{%
  \MakeFramed{\advance\hsize-\width \FrameRestore}%
}{\endMakeFramed}

\newcommand{\wrapstmt}[1]{%
  \expandafter\let\csname orig#1\expandafter\endcsname\csname #1\endcsname
  \expandafter\let\csname endorig#1\expandafter\endcsname\csname end#1\endcsname
  \renewenvironment{#1}%
    {\stmtframe\csname orig#1\endcsname}%
    {\csname endorig#1\endcsname\endstmtframe}%
}
\wrapstmt{theorem}
\wrapstmt{proposition}
\wrapstmt{lemma}
\wrapstmt{corollary}

\usepackage{etoolbox}
\AtBeginEnvironment{proof}{\proofframe}
\AtEndEnvironment{proof}{\endproofframe}

\usepackage{subcaption} % For subfigures
\usepackage{wrapfig}    % For wrap-around figures
\usepackage{comment}
\usepackage{xurl}       % Breaks URLs at more locations
\usepackage{float}
\usepackage{caption}
\usepackage{adjustbox}  % for \adjustbox
\usepackage{multirow}
\usepackage{enumitem}
\usepackage{array}
\usepackage{tabularx}   % X columns in tables/oracle_settings.tex

\makeatletter
\def\oai@toc{toc}
\def\oai@sectionstr{section}
\def\oai@subsectionstr{subsection}
\long\def\oai@grabtype#1#2#3\oai@STOP{\def\oai@TY{#2}}
\newcommand{\StartAppendixTOCcapture}{%
  \let\oai@addtocontents\addtocontents
  \renewcommand{\addtocontents}[2]{%
    \oai@addtocontents{##1}{##2}%
    \def\oai@tmp{##1}%
    \ifx\oai@tmp\oai@toc
      \oai@grabtype##2\contentsline{}{}\oai@STOP
      \ifx\oai@TY\oai@sectionstr    \oai@addtocontents{apc}{##2}\fi
      \ifx\oai@TY\oai@subsectionstr \oai@addtocontents{apc}{##2}\fi
    \fi}%
}
\newcommand{\AppendixTOC}{%
  \begingroup
    \hypersetup{pdfborder={0 0 0}}%
    \color{blue}%
    \@starttoc{apc}%
  \endgroup}
\makeatother

\usepackage[textsize=tiny]{todonotes}

\providecommand{\norm}[1]{\lVert #1 \rVert}
\newcommand{\Iproxy}{\mathcal{I}_{\mathrm{proxy}}}
\DeclareMathOperator*{\argtopk}{arg\,top\,\mathit{k}}
\DeclareMathOperator{\GumbelTopk}{Gumbel\text{-}Top\text{-}\mathit{k}}

\title{Effective Synthetic Data Curation Requires Group-Level Signals}

\author{Cathy Jiao \quad Chenyan Xiong \\
Language Technologies Institute, Carnegie Mellon University \\
\texttt{\{cljiao,cx\}@cs.cmu.edu}}

\iclrfinalcopy % Uncomment for camera-ready version, but NOT for submission.

\begin{document}

\maketitle

\begin{abstract}
Synthetic data now is essential to LLM training, used to strengthen advanced capabilities such as autonomous and long-horizon task execution. Yet recent work shows that training on it at scale can degrade model generation, making it important to decide what synthetic data is worth training on. While current data curation practices do so with \emph{individual-level} signals (i.e., estimates of each data sample's training utility in isolation), across pre-training and post-training settings we show that this is insufficient for synthetic data, and that \emph{group-level} signals (i.e., estimates of utility that account for interactions among data samples) are necessary for effective data curation. First, we show that individual-level signals are blind to how samples jointly affect training: synthetic datasets with different compositions can be indistinguishable under individual-level influence yet differ sharply under group-level influence, and curating by the latter yields better downstream performance, particularly in generative capability. Second, we find that group-level signals matter more as training pipelines become increasingly synthetic: among widely used data curation methods, only those incorporating them improve over baseline, with gains increasing when weights capturing relations among samples are amplified. Finally, we translate these findings into practice -- for model developers under a compute budget, we offer a cheap diagnostic that prioritizes which groups of synthetic data most need group-level estimation, recovering much of the benefit of full group-level scoring at a fraction of the compute cost.
\end{abstract}
\section{Introduction}

Synthetic data is crucial to large language model (LLM) development, and is now heavily used across all stages of LLM training, from synthetic corpora in pre-training \citep{li2023textbooks, abdin2024phi4technicalreport} to inducing reasoning post-training \citep{nvidia2024nemotron4340btechnicalreport, guo2025deepseekr1}. It supplies targeted training signal for advancing LLM capabilities that are difficult to learn from organic web data alone, such as extended-horizon coding autonomy \citep{pmlr-v267-pan25g}, competition-level mathematics \citep{trinh2024alphageometry, ren2025deepseekproverv2advancingformalmathematical, hubert2026olympiad}, and reliable tool use \citep{qin2023toolllm, liu2024apigen}.

Although synthetic data is essential to continued progress in LLM capabilities, using it requires care. Recent work shows that training on it at scale can degrade LLM generative capabilities (a phenomenon known as ``\emph{model collapse}'') since synthetic data exhibits narrower coverage than organic web data and inherits generator biases \citep{dohmatob2024strongmodelcollapse, guo2024curious, shumailov2024ai, chen2024diversity}. Moreover, these consequences are costly for LLM developers, who waste substantial compute to recover from collapsed training runs \citep{chowdhery2023palm, wortsman2024small, grattafiori2024llama}. Therefore, using synthetic data effectively requires understanding how it contributes to (or harms) model training, so that it can be selected carefully.

While current data curation practices guide this selection with individual-level signals (i.e., estimates of each data sample's training utility in isolation), we show that these signals are insufficient, and that effective synthetic data curation instead requires group-level signals (i.e., estimates of training utility that account for interactions among data samples). First, individual-level signals are blind to these interactions: using data influence measures to quantify the utility of synthetic datasets, we find that datasets with different compositions (e.g., domains and diversity) can be indistinguishable under individual-level influence yet differ under group-level influence. Curating training data by group-level influence also yields better downstream performance, particularly in \emph{generative capability}. 

Next, we show that as LLM training pipelines grow increasingly reliant on synthetic data, incorporating group-level signals in standard data curation methods is necessary to enhance model performance. Analyzing several data curation methods used in practice \citep{bm25, xia2024lessselectinginfluentialdata, yu2025data, wang2024greats}, we find that, in this setting, methods which use group-level signals avoid model degradation and improve over baseline performance, whereas individual-level methods do neither. Gains increase further when the weights that capture relational information among data samples are amplified, suggesting that interactions within datasets should be modeled explicitly as pipelines rely more on synthetic data.

Finally, we build on our findings to address a practical concern: group-level methods can be computationally expensive, since capturing interactions among data samples often requires additional memory and compute, and practitioners with limited resources need to know where to allocate budget for them.  To guide this decision, we offer efficient diagnostics: by leveraging within-group gradient diversity, we identify groups where individual-level signals are likely to be misleading. Allocating budget to scoring these groups recovers much of the group-level signal at a fraction of the compute cost. To summarize, our contributions and findings are:

\begin{enumerate}[topsep=0pt, itemsep=3pt, parsep=0pt]
\item \textbf{Individual-level signals are insufficient for curating synthetic data. } Datasets that are indistinguishable under individual-level influence differ greatly in group-level influence, and curating by group-level influence yields better downstream performance.
\item \textbf{Group-level signals matter more as data pipelines grow increasingly synthetic.} Among widely used data curation methods, we find only methods which incorporate group-level signals improve over baseline, with gains in generative capability.
\item \textbf{Leveraging group-level signals under a compute budget.} We offer practitioners cheap diagnostics, leveraging within-group gradient diversity, to decide where group-level signals are worth computing under a tight budget.
\end{enumerate}
\section{Preliminaries}\label{sec:background}

We begin with background on data influence, specifically oracle scoring, the instrument we use in our later analyses for synthetic data curation.

\paragraph{Oracle scoring.} Data influence measures the effect of a data sample on model behavior, and one way to quantify it is ``oracle scoring'', which compares a given model trained with and without the sample \citep{pmlr-v70-koh17a, pruthi2020estimating, basu2020second}. Formally, given a training dataset $D_{\mathrm{train}} = \{z_i\}_{i=1}^{N}$, a reference set $D_{\mathrm{ref}} = \{z'_i\}_{i=1}^{M}$ with $M \ll N$, and a model with parameters $\theta_{D_{\mathrm{train}}}$ obtained by training on $D_{\mathrm{train}}$, we define the oracle score as:

\begin{defframe}
\begin{definition}[Oracle Score: Group and Individual]
\label{def:oracle_influence}
%Let $\theta_D$ denote the parameters obtained by training on dataset $D$.
The \emph{oracle score} of a set of training examples
$G = \{z_j\}_{j=1}^{|G|}$ measures the change in model performance when $G$
is included versus excluded from the training data, evaluated with respect
to the model loss on a reference set $D_{\mathrm{ref}}$:
\begin{equation}
\mathcal{I}_{\mathrm{orcl}}(G, \theta)
= L\!\left(D_{\mathrm{ref}} \mid \theta_{D_{\mathrm{train}} \cup G}\right)
- L\!\left(D_{\mathrm{ref}} \mid \theta_{D_{\mathrm{train}}}\right).
\label{eqn:group_if}
\end{equation}
We refer to $\mathcal{I}_{\mathrm{orcl}}(G, \theta)$ as the \emph{group oracle
score}. For a single example $z$ ($|G| = 1$) we write
$\mathcal{I}_{\mathrm{orcl}}(z, \theta) \coloneqq \mathcal{I}_{\mathrm{orcl}}(\{z\}, \theta)$
and call it the \emph{individual oracle score}.
\end{definition}
\end{defframe}

In standard data influence settings, the reference set is chosen to \emph{reflect} a target capability (e.g., math or code text), so that loss on $D_{\mathrm{ref}}$ is a measurable proxy for that capability. Prior work has shown that selecting training data to reduce the oracle reference loss yields gains on the corresponding downstream tasks in LLM pre-training and post-training \citep{xia2024lessselectinginfluentialdata, yu2024mates}.

\paragraph{Oracle scoring for datasets.} Existing data curation pipelines typically rely on scorers that operate at the \emph{individual level} -- such as heuristic deduplication filters and text quality scorers -- which score each sample in isolation \citep{li2024datacomplm, penedo2024fineweb}. Conceptually, these methods approximate the group oracle score by aggregating individual oracle scores, which we formalize as the \emph{individual proxy oracle} below.

\begin{defframe}
\begin{definition}[Individual Proxy Oracle]
\label{asm:additive}
The group oracle is approximated by the mean of individual oracle scores,
\begin{equation}
\mathcal{I}_{\mathrm{orcl}}(G, \theta)
\;\approx\;
\Iproxy(G, \theta) = \frac{1}{|G|}\sum_{z \in G} \mathcal{I}_{\mathrm{orcl}}(z, \theta).
\label{eqn:additive_assumption}
\end{equation}
\end{definition}
\end{defframe}

In practice, oracle scores are expensive since they require model training. However, for the individual proxy oracle, cheaper approximation methods exist for LLM data curation, such as LESS and LoGRA \citep{choe2024logra, xia2024lessselectinginfluentialdata}, making the individual proxy oracle in \cref{eqn:additive_assumption} tractable for group-level influence estimation compared to the true group-level oracle. 

\begin{figure*}[t]
\centering
% Print subcaption labels as (1a)/(1b) rather than (a)/(b). Clearing
% \p@subfigure (normally \thefigure) keeps \ref at "1a" instead of "11a".
\renewcommand{\thesubfigure}{\thefigure\alph{subfigure}}
\makeatletter\renewcommand{\p@subfigure}{}\makeatother
\begin{subfigure}[t]{0.68\textwidth}
    \centering
    \includegraphics[width=\linewidth]{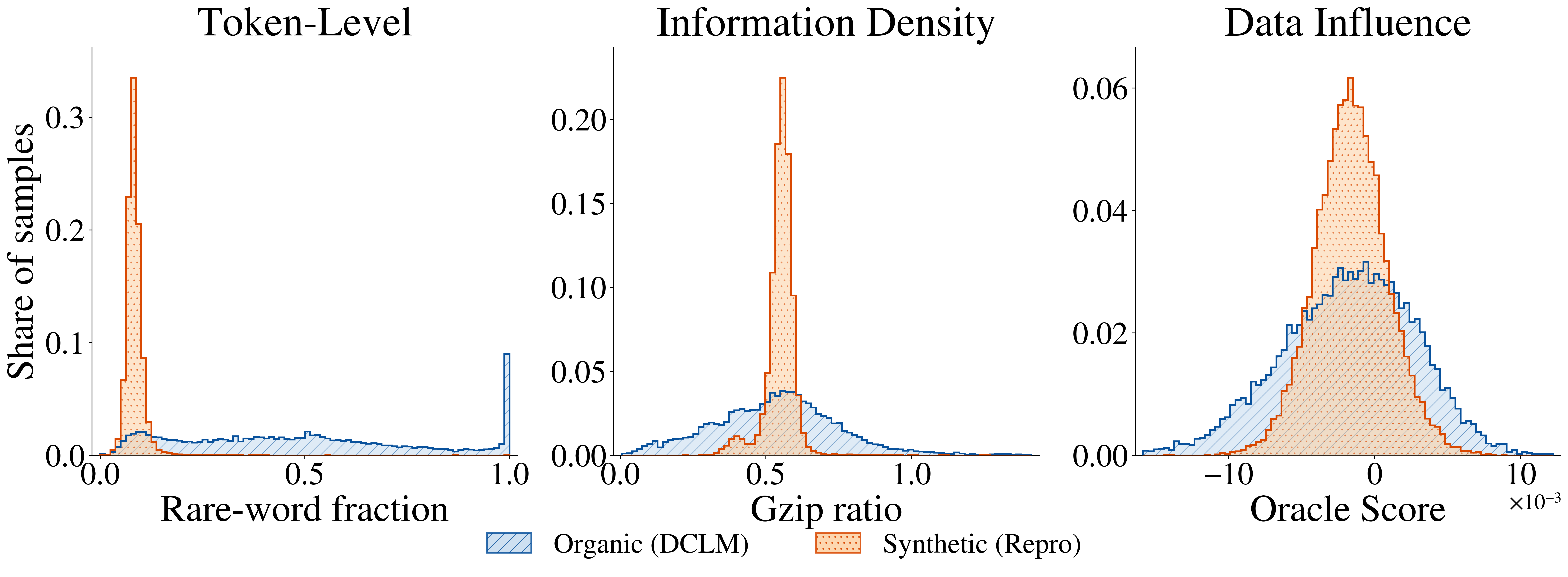}
%    \caption{Token level, information density, and data influence analysis on organic and synthetic data sequences (n=1000), from Repro~\citep{yu2025reprotraininglanguagemodels} and DCLM\citep{li2024datacomplm}, respectively.}
    \caption{\textbf{Synthetic data is more homogeneous.} Spread of token-level, information density, and data influence measures over $n{=}1000$ samples each from organic (DCLM) and synthetic (Repro) data.}
    \label{fig:bank_spread_boxes}
\end{subfigure}
\hfill
\begin{subfigure}[t]{0.29\textwidth}
    \centering
    \includegraphics[width=\linewidth]{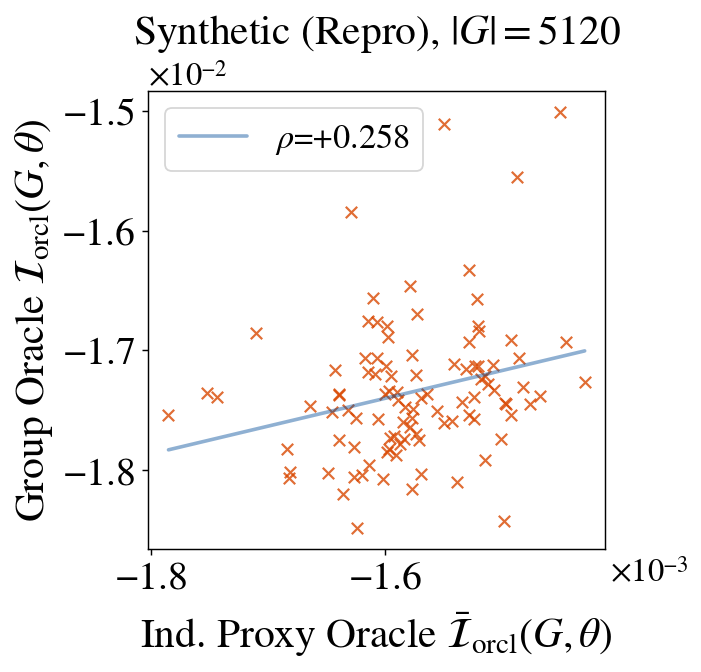}
%    \caption{Ind. proxy and group oracle correlation.}
    \caption{\textbf{Group value is not additive:} the ind. proxy oracle tracks the group oracle only weakly.}
    \label{fig:uniform_group_oracle_vs_mean_indiv_paper}
\end{subfigure}
\label{fig:background}
\vspace{-0.5cm}
\end{figure*}

With these measures in place, a natural question is why synthetic data calls for particular attention. Below, we discuss two key challenges that arise in curating synthetic datasets when we solely rely on individual-level scores.

\paragraph{Challenge 1: Synthetic data is homogeneous.} A large synthetic corpus is often generated from a few seed prompts \citep{li2023textbooks, maini2024rephrasing} (e.g., instruction-following data bootstrapped from a small pool of human-written tasks \citep{wang2023self, taori2023stanford}), making it more homogeneous than an organic corpus collected from the web. This homogeneity makes it harder to distinguish the synthetic samples that most improve model performance. As shown in Figure~\ref{fig:bank_spread_boxes}, synthetic data samples have lower variance than organic ones not only at the token-level and in information density, \emph{but also} in individual-level data influence.

\paragraph{Challenge 2: A dataset's value is non-additive.} Moving beyond individual data points, a dataset's value is shaped by the combined contribution of its members: interactions between samples can induce cancellation or amplification effects \citep{koh2019accuracy, basu2020second}, making a dataset's value non-additive in its samples. These properties are invisible to an individual scorer. Figure~\ref{fig:uniform_group_oracle_vs_mean_indiv_paper} shows an example, where the individual proxy oracle does not track the group oracle, causing a mismatch between the datasets each ranks highly for model training, which we explore in the following sections.
\section{Group-level Signals for Synthetic Data Influence}\label{sec:single_group}

The previous section showed that the individual proxy oracle and the group oracle can disagree. Here, we show through controlled experiments that this disagreement matters in practice: synthetic data groups that look equivalent under the individual proxy can differ significantly in group-level influence and downstream value. The remainder of the section describes the task settings in Section~\ref{sec:single_group_tasks}, the experimental procedure in Section~\ref{sec:single_group_setup}, and the results in Section~\ref{sec:single_group_results}.

\subsection{Task Settings}\label{sec:single_group_tasks}

We study the gap between individual and group-level influence in pre-training and post-training settings. The datasets and models used in both settings are described below.

\paragraph{Pre-training.} The first setting is synthetic data pre-training, which is increasingly being incorporated into frontier LLM training pipelines \citep{liu2024deepseek, grattafiori2024llama, yang2025qwen3}. In this setting, we train on data from the Repro-Rephrased-72B \citep{yu2025reprotraininglanguagemodels} dataset, which has been refined from low-quality web scrapes, and oracle scoring uses Repro-400M (trained on 6.59B Repro tokens) \citep{yu2025reprotraininglanguagemodels} as the base model against a 128-sample FLAN \citep{chung2024scaling} reference set, following prior oracle-scoring work \citep{yu2024mates, yu2025data, datelm}. We evaluate on eight standard pre-training benchmarks (ARC-Challenge, CommonsenseQA, CoQA, HellaSwag, LAMBADA, MMLU, PIQA, SQuAD).

\paragraph{Post-training.} The second setting is post-training using reinforcement learning with verifiable rewards (RLVR), a key factor in inducing reasoning in LLMs for coding, math, and agentic tasks. Here, we train on data from the GooseReason \citep{lu2026goldengoosesimpletrick} dataset, and oracle scoring uses Qwen2.5-1.5B-Instruct \citep{qwen2025qwen25technicalreport} as the base model against a 128-sample GPQA-extend \citep{rein2024gpqa} reference set, chosen for its graduate-level, multi-domain reasoning coverage. We evaluate on reasoning/math/coding benchmarks (MBPP, MBPP+, GPQA-D, BBH, IFEval, AIME24).

Oracle scoring parameters for both settings can be found in Appendix~\ref{appendix:oracle_scoring_parameters}

\subsection{Experiment Setup}\label{sec:single_group_setup}
Below, we discuss our general experiment setup, which we instantiate identically for both settings above and which consists of three phases: candidate pool construction, data selection, and model training. We describe the overall procedure here and \emph{defer the specific details to Appendix~\ref{appendix:tables_figures_task_settings}}. 

\begin{wrapfigure}[14]{r}{0.5\textwidth}
\centering
\vspace{-\baselineskip}
\begin{subfigure}[t]{0.45\linewidth}
    \centering
    \includegraphics[width=\linewidth]{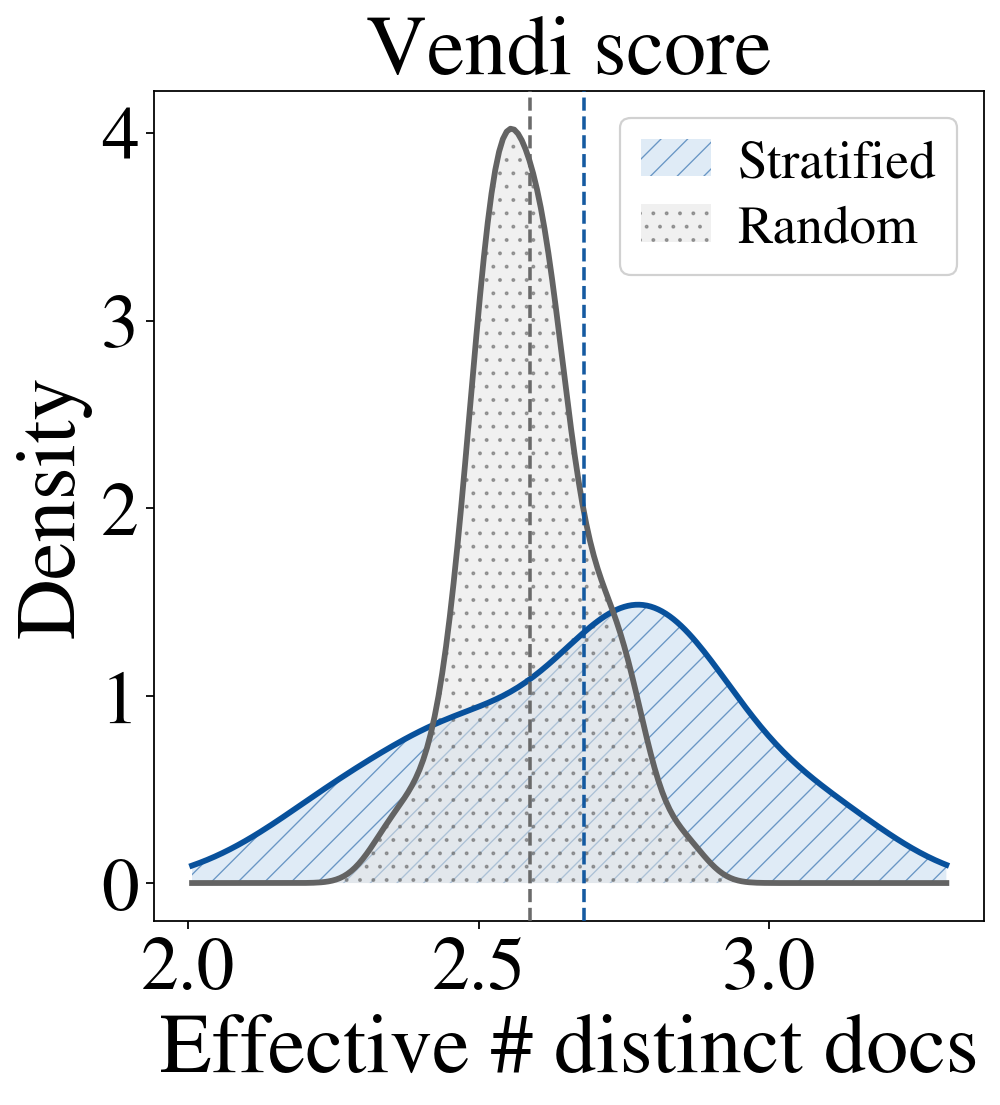}
    \caption{Semantic diversity.}
    \label{fig:divmetric_combined_vendi}
\end{subfigure}
\hfill
\begin{subfigure}[t]{0.53\linewidth}
    \centering
    \includegraphics[width=\linewidth]{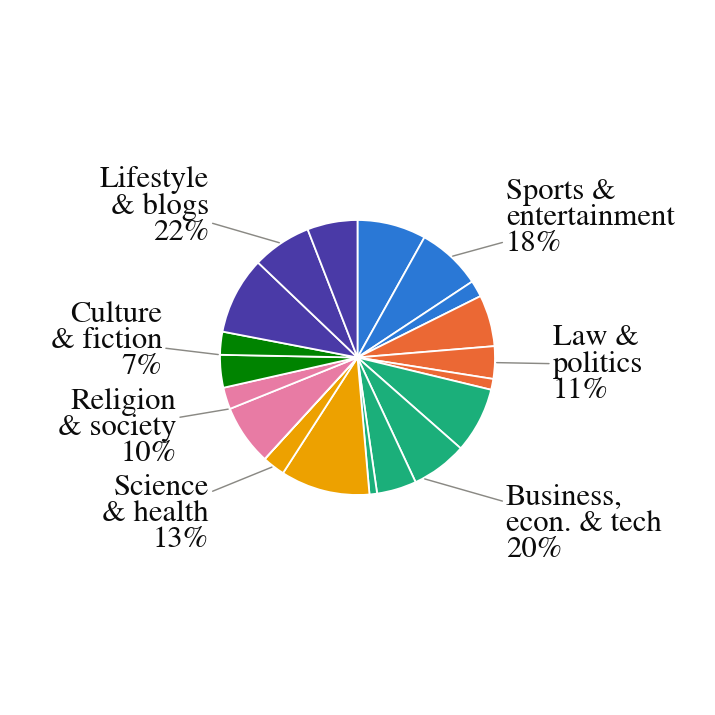}
    \caption{Topic diversity.}
    \label{fig:divmetric_combined_topic}
\end{subfigure}
\caption{Semantic/topic diversity of our candidate pool on the pre-training corpus.}
\label{fig:diversity_metrics_combined}
\end{wrapfigure}

\noindent\textbf{Phase 1: Candidate Pool Construction.} To isolate the interactions between data samples that the individual proxy fails to capture, we form groups with (i) similar individual proxy scores and (ii) different compositions. The former holds the individual proxy fixed as a control, while the latter varies how samples within a group interact and ensures the candidate pool spans a broad range of data, covering varied levels of diversity and topics.

From each corpus (Repro and GooseReason), we build a pool of $H$ equal-sized groups $\mathcal{G}=\{G_1,\dots,G_H\}$. To control individual proxy scores, we draw each group's members from above and below a target score in a narrow band such that the group's mean matches the target. To vary composition, we partition the corpus into $20$ semantic clusters (using BGE embeddings with $k$-means) and draw each group from between $1$ (homogeneous) and $20$ (diverse) clusters. Additional details on candidate pool construction are discussed in Appendix~\ref{appendix:cluster_sampling}.
 
Figure~\ref{fig:divmetric_combined_vendi} shows that our groups span a wider semantic diversity range than uniformly random groups, and Figure~\ref{fig:divmetric_combined_topic} shows the breakdown of topics across the $20$ clusters (a full breakdown of per-corpus cluster topics are in Appendix Figure~\ref{fig:cluster_topics_pie}).

\noindent\textbf{Phase 2: Data selection.} From the candidate pool $\mathcal{G}$, each scoring method selects a set of $k$ groups to form the training sets $\mathcal{D}_{\mathrm{proxy}}$ and $\mathcal{D}_{\mathrm{orcl}}$. Following prior work that uses data influence for data selection \citep{yu2024mates, datelm}, we draw these groups via \emph{Gumbel-Top-$k$} sampling \citep{kool2019stochastic} so as to better reflect the distribution of oracle scores.
\begin{align*}
  \mathcal{D}_{\mathrm{proxy}} &\leftarrow \GumbelTopk\big(\{\Iproxy(G, \theta)/\lambda \mid G \in \mathcal{G}\}\big) \\
  \mathcal{D}_{\mathrm{orcl}} &\leftarrow \GumbelTopk\big(\{\mathcal{I}_{\mathrm{orcl}}(G, \theta)/\lambda \mid G \in \mathcal{G}\}\big),
\end{align*}
where  $\lambda \in (0, \infty)$ is the temperature.

\paragraph{Phase 3: Training and evaluation.} We compare three selection methods under a matched budget: the group oracle ($\mathcal{D}_{\mathrm{orcl}}$), the individual proxy oracle ($\mathcal{D}_{\mathrm{proxy}}$), and a random baseline that draws $k$ groups uniformly from the same pool $\mathcal{G}$. Starting from the same base model (Repro-400M for pre-training and Qwen2.5-1.5B-Instruct for post-training), we train one model per set with identical hyperparameters and step counts, so differences in downstream performance come from selection alone. We report accuracy on the benchmarks in Section~\ref{sec:single_group_tasks}.

\begin{table}[t]
\centering
\caption{Results for pre- and post-training data selection. Per-task accuracy (\%) on Repro-400M (400 steps) and Qwen2.5-1.5B-Instruct (100 steps), for groups selected by random sampling, the individual proxy oracle, and the group oracle. Best average within each group size is in \textbf{bold}.}
\providecolor{pretraincolor}{RGB}{220,232,247}
\providecolor{posttraincolor}{RGB}{250,228,214}
\setlength{\tabcolsep}{1.5pt}
\footnotesize
\begin{adjustbox}{max width=\textwidth}
\begin{tabular}{@{}llccccccccc@{}}
\toprule
\rowcolor{pretraincolor}
\multicolumn{11}{@{}l}{\textit{(a) Pre-training: Repro-400M}} \\
\midrule
\rowcolor{pretraincolor!45}
\textbf{Group Size} & \textbf{Method} & \textbf{HellaSwag} & \textbf{ARC-C} & \textbf{PIQA} & \textbf{MMLU} & \textbf{CSQA} & \textbf{LAMBADA} & \textbf{SQuAD} & \textbf{CoQA} & \textbf{Avg.} \\
\midrule
%\multicolumn{2}{@{}l}{Baseline} & 30.82 & 25.85 & 64.36 & 23.94 & 19.82 & 37.30 & 7.47 & 27.61 & 29.65 \\
& Baseline & 30.82 & 25.85 & 64.36 & 23.94 & 19.82 & 37.30 & 7.47 & 27.61 & 29.65 \\
\midrule
$5120$ & Random & 31.84 & 27.43 & 66.21 & 25.53 & 20.07  & 37.29 & 9.69 & 28.83 & 30.86 \\
 & Ind. Proxy Orcl. & 32.01 & 26.96 & 66.59 & 26.01 & 21.05 & 38.09 & 10.25 & 28.62 & 31.20 \\
 & Group Orcl. & 31.78 & 26.54 & 67.30 & 25.10 & 20.56 & 38.44 & 16.19 & 29.97 & \textbf{31.98} \\
\midrule
$10240$ & Random & 31.58 & 25.94 & 66.35 & 24.74 & 20.31 & 38.66 & 10.47 & 28.49 & 30.82 \\
 & Ind. Proxy Orcl. & 31.53 & 26.54 & 66.05 & 24.30 & 20.15 & 37.63 & 9.65 & 28.49 & 30.54 \\
 & Group Orcl. & 31.40 & 26.45 & 66.38 & 24.74 & 20.07 & 38.87 & 9.65 & 30.42 & \textbf{31.00} \\
\midrule
\rowcolor{posttraincolor}
\multicolumn{11}{@{}l}{\textit{(b) Post-training: Qwen2.5-1.5B-Instruct}} \\
\midrule
\rowcolor{posttraincolor!45}
\textbf{Group Size} & \textbf{Method} & \textbf{MBPP} & \textbf{MBPP+} & \textbf{GPQA-D} & \textbf{BBH} & \textbf{IFEval} & \textbf{AIME24} & & & \textbf{Avg.} \\
\midrule
& Baseline & 41.20 & 45.50 & 25.93 & 37.46 & 44.20 & 3.00 & & & 32.88 \\
\midrule
$128$ & Random & 44.00 & 48.59 & 26.54 & 40.42 & 44.60 & 1.11 & & & 34.21 \\
 & Ind. Proxy Orcl. & 42.60 & 46.03 & 26.26 & 40.87 & 44.40 & 1.00 & & & 33.53 \\
 & Group Orcl. & 45.40 & 52.12 & 26.43 & 40.56 & 45.80 & 1.00 & & & \textbf{35.22} \\
\midrule
$512$ & Random & 43.40 & 48.41 & 25.00 & 40.39 & 45.20 & 1.33 & & & 33.96 \\
 & Ind. Proxy Orcl. & 43.80 & 49.21 & 26.43 & 37.64 & 45.20 & 2.67 & & & 34.16 \\
 & Group Orcl. & 44.20 & 50.53 & 26.94 & 40.92 & 46.00 & 2.33 & & & \textbf{35.15} \\
\bottomrule
\end{tabular}
\end{adjustbox}
\label{tab:proxy_group}
\end{table}

\subsection{Results}\label{sec:single_group_results}

\begin{wrapfigure}[13]{r}{0.5\textwidth}
\centering
\vspace{-\baselineskip}
\begin{subfigure}[t]{0.48\linewidth}
    \centering
    \includegraphics[width=\linewidth]{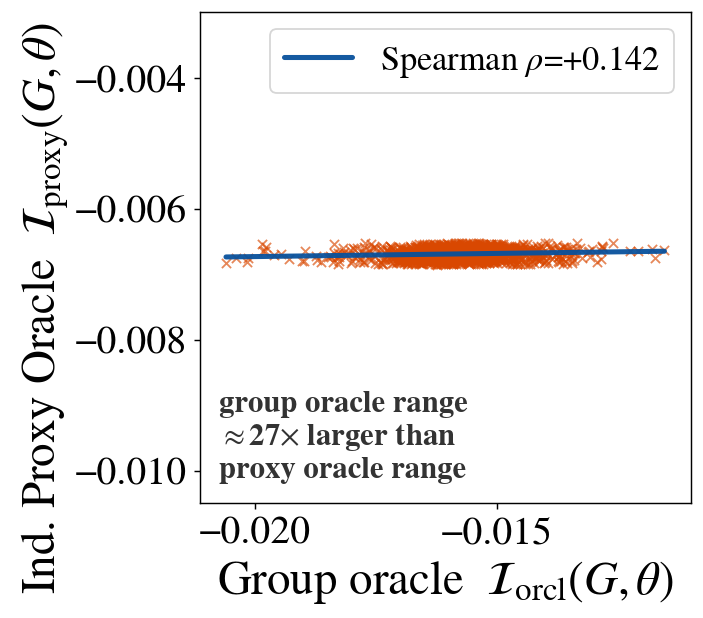}
    \caption{Pre-train.}
    \label{fig:single_vs_group_wrap_repro}
\end{subfigure}
\hfill
\begin{subfigure}[t]{0.49\linewidth}
    \centering
    \includegraphics[width=\linewidth]{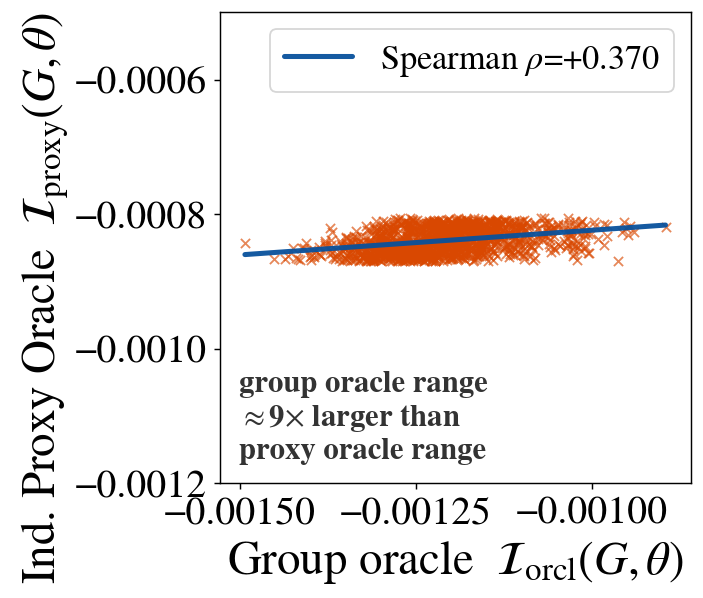}
    \caption{Post-train.}
    \label{fig:single_vs_group_wrap_goose}
\end{subfigure}
\caption{Range of group oracle and individual proxy oracle scores across our constructed groups (on groups of $5120$ and $128$ samples). See Appendix~\ref{appendix:tables_figures_diversity} for full sweep over group sizes.}
\label{fig:single_vs_group_wrap}
\end{wrapfigure}

We examine how group oracle scores vary across the candidate pool, then compare group oracle, individual proxy oracle, and random data selection on downstream model performance.

\paragraph{Finding 1: Group oracle recovers signal the individual proxy misses.} Figure~\ref{fig:single_vs_group_wrap} shows that although all groups in the candidate pool have near-identical individual proxy scores, their group oracle scores are mismatched and span a much wider range. This mismatch matters in practice: Table~\ref{tab:proxy_group} shows that, across all task settings and group sizes, training on data selected by the group oracle outperforms both random selection and individual proxy oracle selection.

\paragraph{Finding 2: Gains concentrate on generative capability.} Next, to see where the group oracle's advantage over the individual proxy oracle comes from, we examine the task types. Figure~\ref{fig:downstream_breakdown_repro} breaks these gains down by task type. On free-form generation (LAMBADA, SQuAD, CoQA), group oracle selection achieves a $+16.9\%$ relative gain over the base model, far above individual proxy oracle selection ($+6.3\%$) and random selection ($+4.7\%$). On multiple-choice tasks (HellaSwag, ARC-C, PIQA, MMLU, CSQA), all three are comparable ($+3.8\%$ to $+4.8\%$). The RLVR setting shows the same pattern (Figure~\ref{fig:downstream_breakdown_goose}): over the course of GRPO training, pass@8 increases under group oracle selection, while under individual proxy oracle selection it ends below its starting value. 

\begin{wrapfigure}[14]{r}{0.5\textwidth}
\centering
\vspace{-\baselineskip}
\begin{subfigure}[t]{0.48\linewidth}
    \centering
    \includegraphics[width=\linewidth]{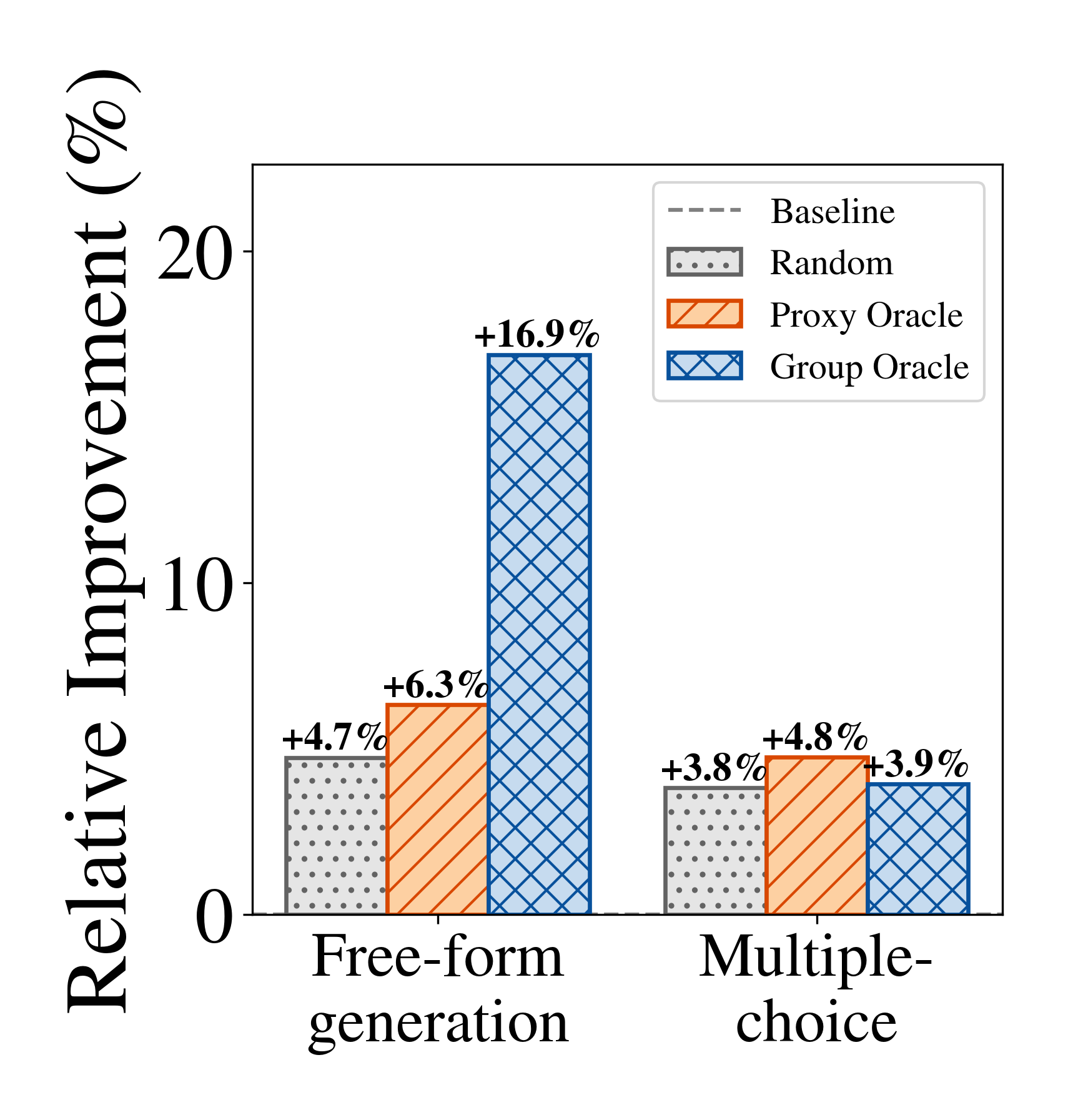}
    \caption{Pre-train (size 5120).}
    \label{fig:downstream_breakdown_repro}
\end{subfigure}
\hfill
\begin{subfigure}[t]{0.48\linewidth}
    \centering
    \includegraphics[width=\linewidth]{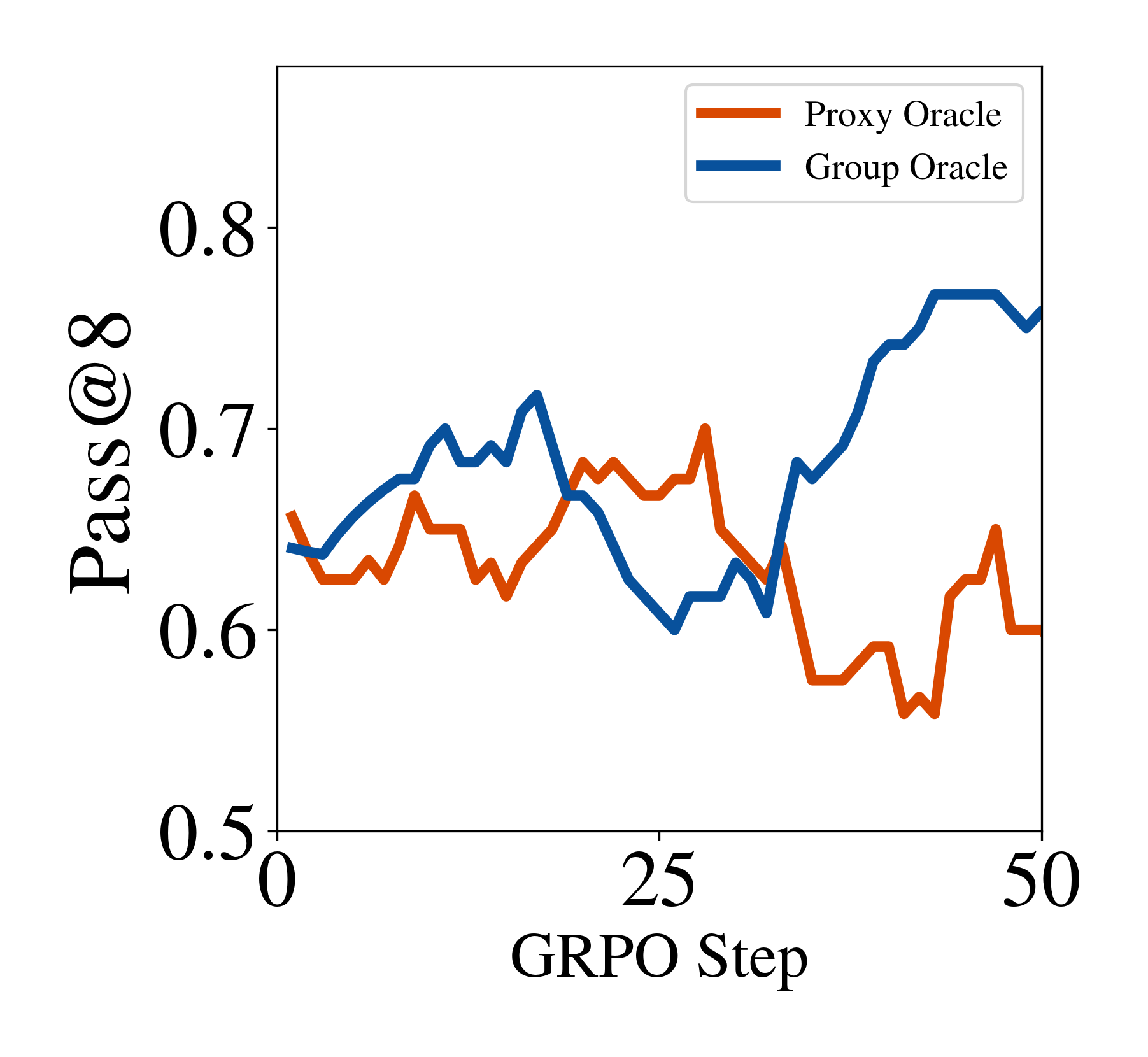}
    \caption{Post-train (size 512).}
    \label{fig:downstream_breakdown_goose}
\end{subfigure}
\caption{\emph{(a)} Relative improvement over the base model by task type in pre-training. \emph{(b)} Pass@8 over GRPO training steps in post-training. }
\label{fig:downstream_breakdown}
\end{wrapfigure}

Together, our results point to two blind spots in synthetic data curation. First, practitioners should not rely solely on individual-level scoring, as interactions between samples influence model training in ways that individual-level scoring does not capture. Instead, practitioners should leverage group-level scoring, which captures the full value of a dataset; we discuss this further in Section~\ref{sec:diagnostics}. 

Second, multiple-choice benchmarks can hide this gap, since all selection methods perform similarly on them. It is therefore important to also evaluate on generative tasks, where the difference between individual- and group-level scoring becomes visible.
\section{Group-level Signals for Synthetic Data Curation}
\label{sec:data_influence_methods}
Next, we turn to the data selection methods that model developers use in practice, and show how group-level signals become more necessary as training pipelines grow increasingly synthetic. Section~\ref{sec:collapse_setup} describes the experimental setup and Section~\ref{sec:collapse_results} the results.

\subsection{Experiment Setup}\label{sec:collapse_setup}

We study synthetic data recycling, where noisy web text is rephrased into clean training data. Recycling eases data scarcity by generating multiple samples per seed, and is used to pre-train recent LLMs such as Kimi K2 and Qwen3 \citep{team2025kimi, yang2025qwen3}. Below, we discuss our general experiment setup, consisting of three phases: synthetic pool construction, data selection, and training/evaluation. We describe the overall procedure here and \emph{defer specific details to Appendix~\ref{appendix:supp_data_curation_exp}}. 

\paragraph{Phase 1: Synthetic Pool Construction.}
We begin with a large corpus of noisy organic documents and use a rephraser model to paraphrase them, producing clean synthetic pools. Specifically, we take the organic corpus to be DCLM \citep{li2024datacomplm} and the rephraser to be Repro \citep{yu2025reprotraininglanguagemodels}, a 4B model fine-tuned to paraphrase web text. 

Each synthetic pool has the same fixed size $N = 716{,}800$ sequences but a different \emph{rephrase factor} $r \in \{1, 10, 50\}$, which controls the number of rephrases per organic data sample: for a given $r$, we sample $N/r$ source documents from the organic corpus and draw $r$ paraphrases of each. Here, $r=1$ and $r=10$ span the range used in practice (e.g., Kimi K2 rephrases its knowledge corpus up to $10$ times \citep{team2025kimi}), while $r=50$ serves as a stress test of highly synthetic training data.

\begin{wraptable}[13]{r}{0.4\textwidth}
\centering
\scriptsize
\setlength{\tabcolsep}{4pt}
\renewcommand{\arraystretch}{1.6}
\begin{tabular}{@{}c@{\hspace{4pt}}l >{\raggedright\arraybackslash}p{0.34\linewidth}@{}}
\toprule
& \textbf{Method} & \textbf{Representation} \\
\midrule
\multirow{3}{*}{\rotatebox[origin=c]{90}{\scriptsize Individual}}
& GradSim \tiny\citep{pruthi2020estimating} & Gradient \\
& FineWeb-Edu \tiny\citep{penedo2024fineweb} & Classifier score \\
& BM25 \tiny\citep{bm25} & Lexical (tokens) \\
\cmidrule(l){2-3}
\multirow{2}{*}{\rotatebox[origin=c]{90}{\tiny Group}}
& GREATS \tiny\citep{wang2024greats} & Gradient \\
& GMRel \tiny\citep{yu2025data} & Learned influence \\
\bottomrule
\end{tabular}
\caption{Taxonomy of individual-and group-level data selection methods. Details in Appendix~\ref{appendix:group_influence_methods}}
\label{fig:method_taxonomy}
\end{wraptable}

\paragraph{Phase 2: Data selection.} We compare multiple data selection methods (see Table~\ref{fig:method_taxonomy}) from two families: individual-level methods, which score each sample in isolation, and group-level methods, which also model relations among samples. Each method selects data from the Phase 1 pools to construct training sets. For efficiency at this scale, we adopt the cluster-based inference of \cite{yu2025data}, with which each method forms $20$ groups of size $5120$. See Appendix~\ref{appendix:cluster_inference} for full details.

\paragraph{Phase 3: Training and evaluation.} For each method we train the same Repro-400M base model as in Section~\ref{sec:single_group} \citep{yu2025reprotraininglanguagemodels}, on its formed groups under identical optimization: batch size $512$ for $200$ steps, peak learning rate $3\times10^{-3}$ under a cosine schedule (no warmup, weight decay $0.033$). We evaluate the trained models on the same pre-training suite as in Section~\ref{sec:single_group}.

\subsection{Results}\label{sec:collapse_results}

\begin{figure*}[t]
\centering
\begin{subfigure}[t]{0.22\textwidth}
    \centering
    \includegraphics[width=\linewidth]{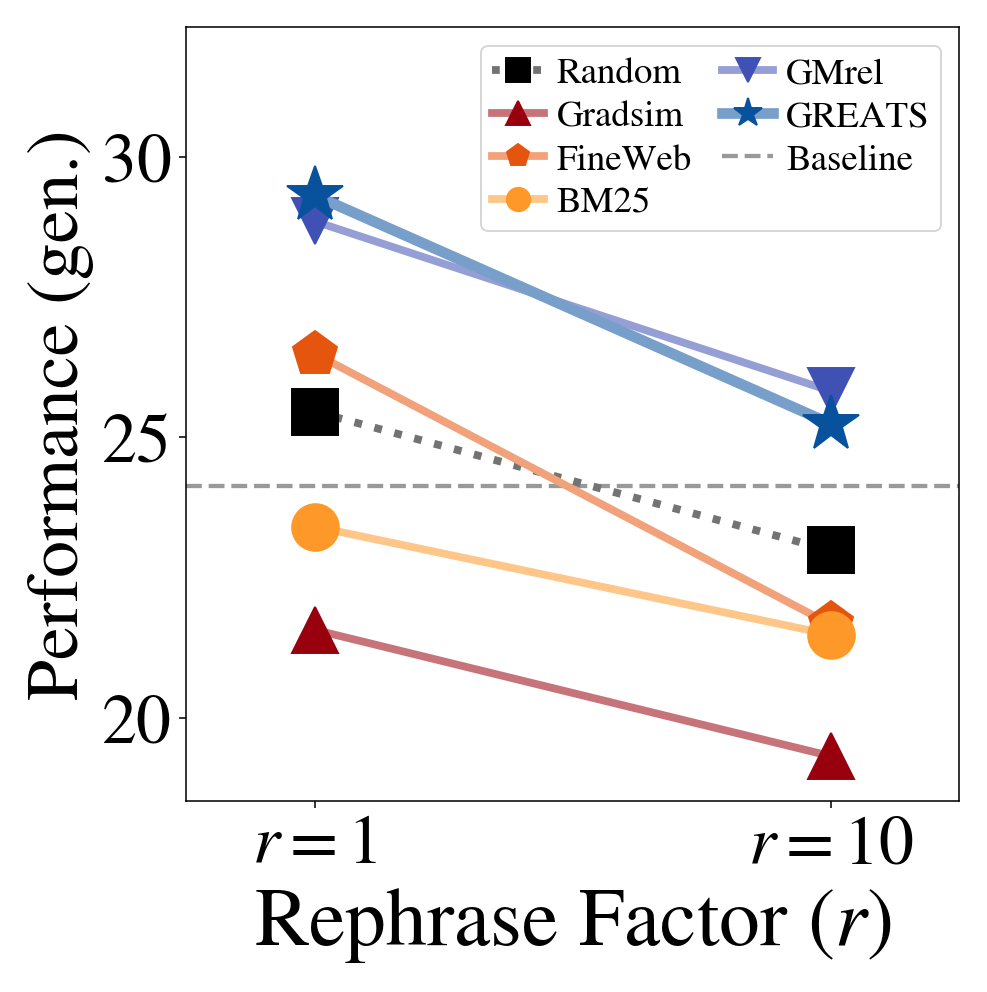}
    \caption{Generative tasks.}
    \label{fig:part2_gen_r1_r10}
\end{subfigure}
\hfill
\begin{subfigure}[t]{0.22\textwidth}
    \centering
    \includegraphics[width=\linewidth]{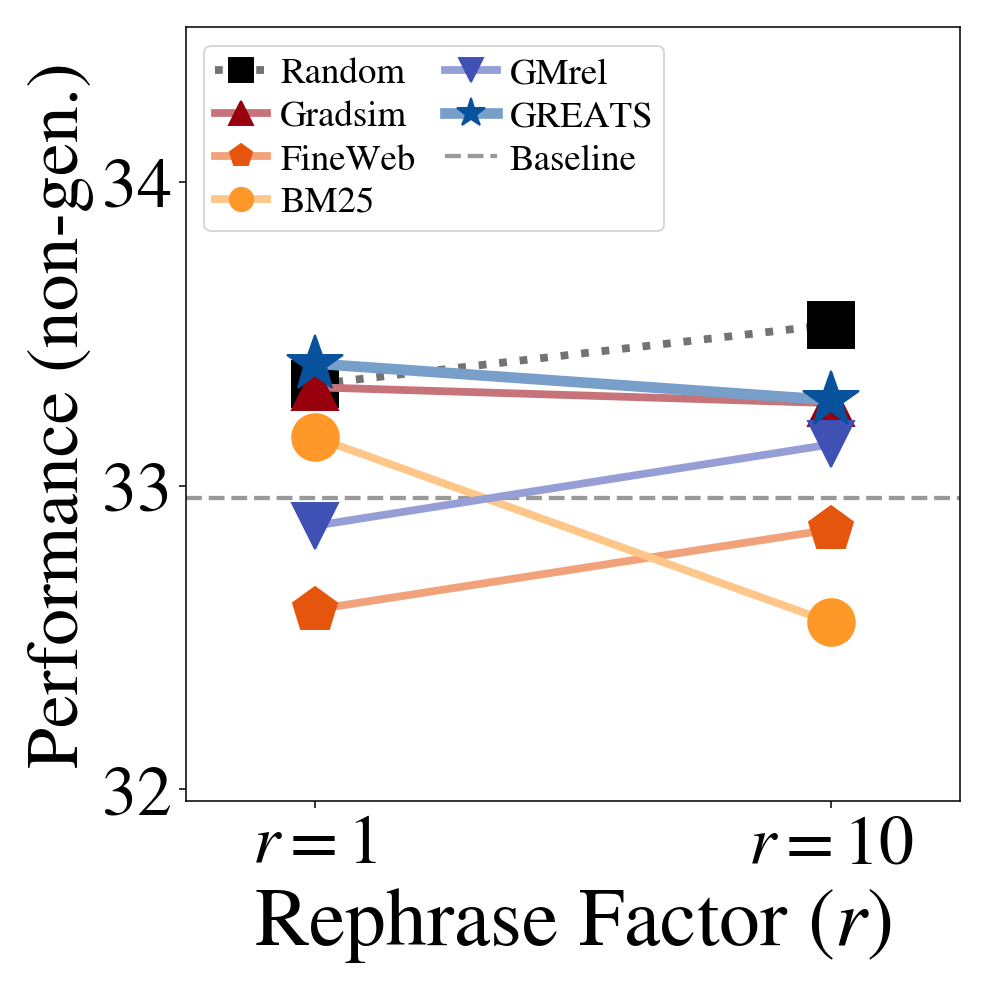}
    \caption{Non-Gen. Tasks.}
    \label{fig:part2_nongen_r1_r10}
\end{subfigure}
\hfill
\begin{subfigure}[t]{0.22\textwidth}
    \centering
    \includegraphics[width=\linewidth]{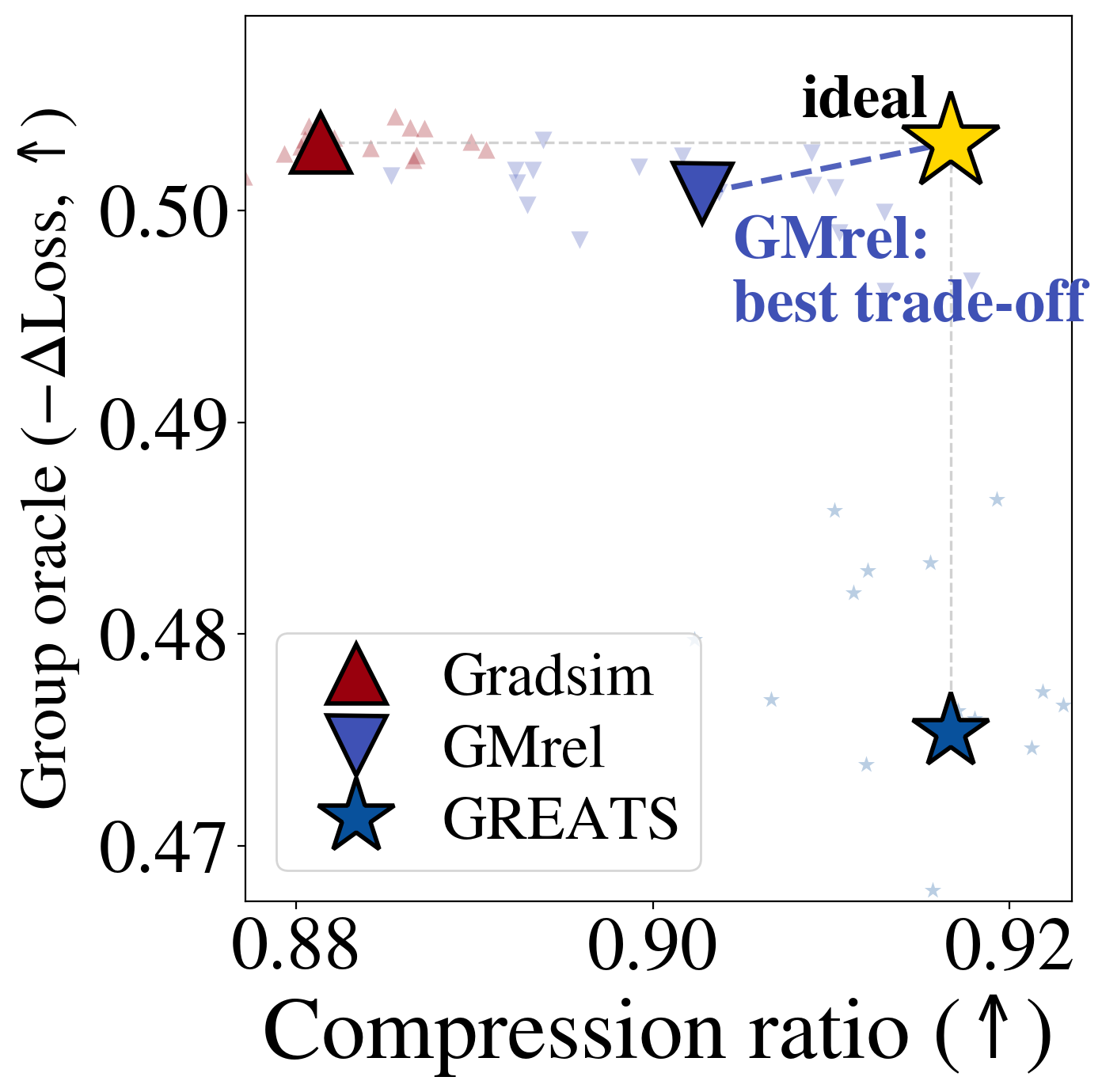}
    \caption{Value-div. tradeoff.}
    \label{fig:oracle_tradeoff_comp_ratio_k1}
\end{subfigure}
\hfill
\begin{subfigure}[t]{0.24\textwidth}
    \centering
    \includegraphics[width=\linewidth]{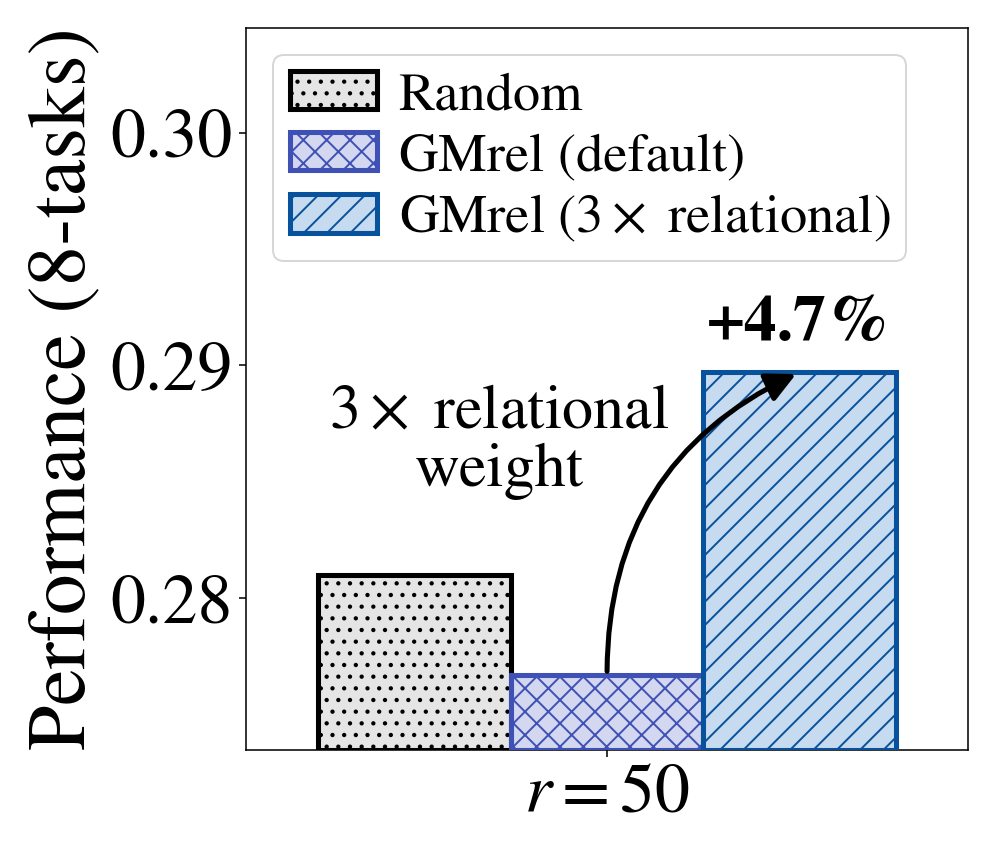}
    \caption{Relational factor.}
    \label{fig:gmrel_taus_r50}
\end{subfigure}
\caption{\emph{(a, b)} Performance of individual-/group- level selection methods as rephrase factor increases. \emph{(c)} Group-oracle value and diversity trade-off. \emph{(d)} Increasing GMRel's relational weight at $r=50$.}
\label{fig:collapse_results}
\vspace{-0.5cm}
\end{figure*}

We evaluate each selection method as the source pool becomes increasingly synthetic. We focus on the rephrase factors $r=1$ and $r=10$, which span the range used in frontier LLM pre-training \citep{team2025kimi}, and examine $r=50$ separately below, deferring the full breakdown for all $r$ to Appendix~\ref{appendix:tables_figures_collapse}.

\paragraph{Finding 1: Individual-level selection methods collapse on generative tasks.} Figures~\ref{fig:part2_nongen_r1_r10} and~\ref{fig:part2_gen_r1_r10} show downstream performance on non-generative and generative tasks, respectively, when individual-level selection methods (GradSim, FineWeb-Edu, and BM25) and group-level selection methods (GMRel and GREATS) are used to curate training data. While Figure~\ref{fig:part2_nongen_r1_r10} shows that all methods plateau on multiple-choice tasks, the difference between the two families is visible in Figure~\ref{fig:part2_gen_r1_r10}, where the group-level selection methods outperform the individual-level selection methods, similar to the observations in Section~\ref{sec:single_group_results}.

\paragraph{Finding 2: Group-level selection methods balance value and diversity.} Next, we examine the characteristics of the groups formed by individual-level and group-level selection methods. Figure~\ref{fig:oracle_tradeoff_comp_ratio_k1} plots group oracle value against diversity for the methods that approximate oracle influence (GradSim, GMRel, and GREATS) at $r=1$. GradSim greedily selects the highest-scoring samples, which gives it a high group oracle value but the least diverse groups. This lack of diversity may explain why it, like the other individual-level methods, underperforms on generative tasks. GMRel and GREATS, in contrast, form more diverse groups at a slight cost to group oracle value. This suggests that balancing value and diversity is important for effective synthetic data selection.

\paragraph{Finding 3: Stronger relational weighting helps with highly synthetic training data.} Next, we analyze the extent to which group-level selection methods are effective in highly synthetic data pipelines, using the $r=50$ stress test (see Table~\ref{tab:collapse_results} in Appendix~\ref{appendix:tables_figures_collapse}). Here, Figure~\ref{fig:gmrel_taus_r50} shows that tripling GMRel's relational weight improves its relative downstream performance by $4.7\%$, surpassing random selection and baseline. This indicates that as the training data pipelines become more synthetic, group-level selection methods should weigh inter-sample relationships more heavily.

As in Section~\ref{sec:single_group}, our findings suggest that forming groups of data to train on requires careful consideration. As data pipelines come to depend on synthetic and self-generated data, group effects in data selection methods should be modeled directly to prevent suboptimal selection.
\section{Diagnostics for Group-Level Data Selection}\label{sec:diagnostics}

In this section, we turn to the practical cost of group-level scoring. Because group-level scores capture relational information between samples, they require more compute than individual-level scores, raising the question of where practitioners with limited compute should spend a group-scoring budget. We show that this question has a principled answer: the gap between individual and group scoring is bounded by a quantity that is cheap to compute, which serves as an efficient diagnostic that reliably predicts the gap in practice and directs group oracle calls to the groups that are most poorly reflected by individual-level scoring.

\paragraph{Characterizing the oracle gap.} We start by considering the question: \emph{how far is the individual proxy oracle from the true group oracle?} To answer this, we turn to Proposition~\ref{prop:single_group_gap}, which bounds how far the individual proxy oracle can be from the true group oracle by the \emph{within-group gradient diversity} (i.e., the spread of per-sample gradients within a group).

\begin{proposition}[Oracle gap]
\label{prop:single_group_gap}
The individual proxy and true group oracles differ by at most
\begin{equation}
\big|\,\Iproxy(G, \theta) - \mathcal{I}_{\mathrm{orcl}}(G, \theta)\,\big| \;\le\; \frac{\beta\,\eta^2}{2}\,\sigma_G^2,
\label{eqn:single_group_gap}
\end{equation}
where $\beta \ge \norm{\nabla^2 L(\theta)}$ bounds the curvature of the reference loss and $\sigma_G^2 = \frac{1}{|G|}\sum_{z\in G}\norm{g_z - g_G}^2$ is the within-group gradient diversity. Proof in Appendix~\ref{appendix:oracle_scoring}.
\end{proposition}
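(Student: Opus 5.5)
The plan is to make the oracle scores computable by fixing a first-order model of training. I would assume, as the influence-function literature does, that adding a set $S$ to training amounts to a single gradient step of size $\eta$ from the current parameters $\theta$ along the mean training gradient of $S$. For an individual example this means $\theta_{D_{\mathrm{train}}\cup\{z\}} = \theta - \eta g_z$, where $g_z$ is the per-sample gradient. For a group it means $\theta_{D_{\mathrm{train}}\cup G} = \theta - \eta g_G$, where $g_G = \frac{1}{|G|}\sum_{z\in G} g_z$ is the group-mean gradient (the same $g_G$ that appears in $\sigma_G^2$). I would also take $\beta$ to bound $\norm{\nabla^2 L}$ on the convex hull of the points $\theta - \eta g_z$, not only at $\theta$ itself.

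Under this model I define $f(v) = L(D_{\mathrm{ref}} \mid \theta - \eta v) - L(D_{\mathrm{ref}} \mid \theta)$. Then $\mathcal{I}_{\mathrm{orcl}}(z,\theta) = f(g_z)$ and $\mathcal{I}_{\mathrm{orcl}}(G,\theta) = f(g_G)$. By Definition~\ref{asm:additive}, $\Iproxy(G,\theta) = \frac{1}{|G|}\sum_{z\in G} f(g_z)$. The quantity to bound is therefore exactly a Jensen gap: the average of $f$ over the points $g_z$ minus $f$ at their mean. By the chain rule, $\nabla^2 f(v) = \eta^2 \nabla^2 L(\theta - \eta v)$, so $\norm{\nabla^2 f} \le \beta\eta^2$ on the relevant region.

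The key step is to Taylor expand $f$ around the mean $g_G$ rather than around $0$. For each $z\in G$ I write
\begin{equation*}
f(g_z) = f(g_G) + \nabla f(g_G)^\top (g_z - g_G) + R_z,
\end{equation*}
where the remainder satisfies $|R_z| \le \frac{\beta\eta^2}{2}\norm{g_z - g_G}^2$ (mean-value/Lagrange form of the remainder). Averaging over $z\in G$ makes the linear term vanish, because $\frac{1}{|G|}\sum_{z\in G}(g_z - g_G) = 0$. This leaves $\Iproxy(G,\theta) - \mathcal{I}_{\mathrm{orcl}}(G,\theta) = \frac{1}{|G|}\sum_{z\in G} R_z$. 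The triangle inequality then gives the bound $\frac{\beta\eta^2}{2}\cdot\frac{1}{|G|}\sum_{z\in G}\norm{g_z-g_G}^2 = \frac{\beta\eta^2}{2}\sigma_G^2$, which is \cref{eqn:single_group_gap}.

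The algebra is routine; the main obstacle is justifying the modelling step. The one-step linearized update has to be stated explicitly as the training model under which the oracle is analysed, and $\beta$ must be read as a curvature bound along the segments between $\theta-\eta g_G$ and each $\theta-\eta g_z$, so that the Lagrange remainder applies. If I instead expanded naively around $\theta$, the quadratic terms would not cancel cleanly. I would get a bound involving $\norm{g_z}^2$ rather than the centered spread $\sigma_G^2$. That is why centering the expansion at $g_G$ is the essential choice.
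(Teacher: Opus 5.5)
Your proposal is correct and follows essentially the same route as the paper. Both model the oracles via the one-step update $\theta \to \theta - \eta g$ and rewrite the gap as a Jensen gap of the loss-change function. Both then apply a Lagrange-remainder Taylor expansion centered at the group mean, so the linear term cancels, and finish with an operator-norm bound on each remainder. The differences are cosmetic: you absorb $\eta$ into $f$ (so $\nabla^2 f = \eta^2 \nabla^2 L$), whereas the paper evaluates $\Phi$ at $\eta g_z$. Your requirement that $\beta$ bound the Hessian on the convex hull of the points $\theta - \eta g_z$, not just at $\theta$, matches the global $\beta$-smoothness the appendix proof actually uses.
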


Intuitively, the more the samples pull the model in different directions, the more their disagreements cancel in the group's mean update, an interaction invisible to a proxy that scores samples in isolation. 

\begin{wrapfigure}[15]{r}{0.55\linewidth}
\centering
\begin{subfigure}[t]{0.40\linewidth}
    \centering
    \includegraphics[width=\linewidth]{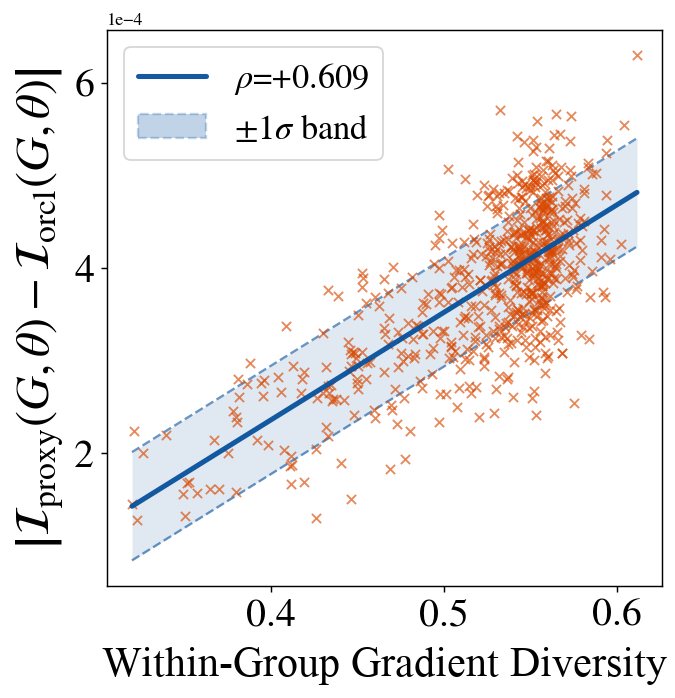}
    \caption{Oracle gap vs Diversity.}
    \label{fig:divsweep}
\end{subfigure}
\begin{subfigure}[t]{0.55\linewidth}
    \centering
    \includegraphics[width=\linewidth]{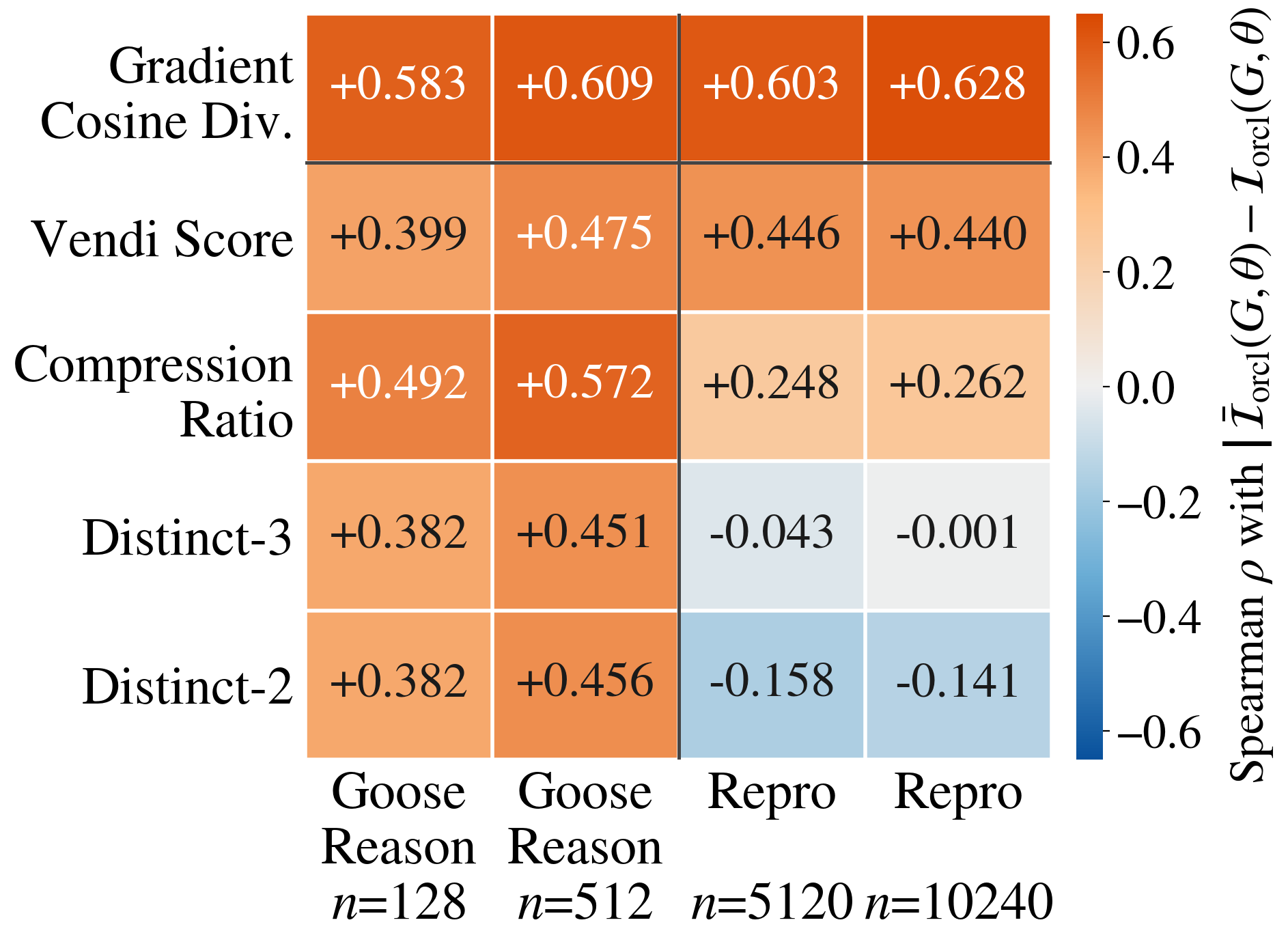}
    \caption{Oracle gap heatmap.}
    \label{fig:rho_heatmap}
\end{subfigure}
\caption{Gap between the group and individual proxy oracle diverges as within-group diversity increases.}
\label{fig:divergence_sweep_main}
\end{wrapfigure}

\begin{comment}
\begin{wrapfigure}[16]{r}{0.45\linewidth}
\includegraphics[width=\linewidth]{images/recommendations/rho_heatmap_slim.png}
\label{fig:rho_heatmap}
\caption{Gap between the group oracle $\mathcal{I}_{\mathrm{orcl}}(G, \theta)$ and individual proxy oracle $\Iproxy(G_j, \theta)$ diverges as intra-group diversity increases (on groups of $10240$ and $512$ samples).}
\label{fig:divergence_sweep_main}
\end{wrapfigure}
\end{comment}

\begin{comment}
\begin{wrapfigure}[16]{r}{0.5\linewidth}
\centering
\begin{subfigure}[t]{0.5\linewidth}
    \centering
    \includegraphics[width=\linewidth]{images/background/gradiv_cos_repro_n10240_relabel.png}
    \caption{Repro \citep{yu2025reprotraininglanguagemodels}.}
    \label{fig:divsweep_main_repro}
\end{subfigure}
\begin{subfigure}[t]{0.48\linewidth}
    \centering
    \includegraphics[width=\linewidth]{images/background/gradiv_cos_goose_n512_relabel.png}
    \caption{GooseReason \citep{lu2026goldengoosesimpletrick}.}
    \label{fig:divsweep_main_goose}
\end{subfigure}
\caption{Gap between the group oracle $\mathcal{I}_{\mathrm{orcl}}(G, \theta)$ and individual proxy oracle $\Iproxy(G_j, \theta)$ diverges as intra-group diversity increases (on groups of $10240$ and $512$ samples).}
\label{fig:divergence_sweep_main}
\end{wrapfigure}
\end{comment}

%on Repro \citep{yu2025reprotraininglanguagemodels} and GooseReason \citep{lu2026goldengoosesimpletrick}, for groups of $10{,}240$ and $512$

\paragraph{Gradient diversity predicts the oracle gap.}
The bound in Proposition~\ref{prop:single_group_gap} holds empirically: Figure~\ref{fig:divsweep} shows that as within-group gradient diversity increases, the gap between $\Iproxy$ and $\mathcal{I}_{\mathrm{orcl}}$ widens (see Appendix~\ref{appendix:tables_figures_divsweep} for additional settings).

Gradient diversity is also the strongest predictor of the gap among additional diversity metrics (see Appendix~\ref{appendix:diversity_metrics} for details), as shown in Figure~\ref{fig:rho_heatmap}. Its Spearman correlation with the gap is high ($\rho = 0.58$--$0.63$) across both datasets and all group sizes. By contrast, the best alternatives peak at $0.48$ (Vendi) and $0.57$ (compression ratio), and they vary widely. Moreover, token-level metrics are unreliable: distinct-2/-3 correlate moderately on GooseReason but not on Repro.

Given these findings, we show how gradient diversity can be used as a diagnostic to decide where to spend group oracle compute.

\paragraph{Gradient diversity as a budget allocator.} Proposition~\ref{prop:single_group_gap} implies that oracle calls are best spent on groups with high within-gradient diversity, where the proxy can be far off and is therefore most likely to misrank groups. This suggests a simple allocation strategy. Formally, given a pool of groups $\mathcal{G} = \{G_1, \dots, G_H\}$ and a budget of $B$ group oracle calls, we define a \emph{diagnostic} as a cheap score $\nu : \mathcal{G} \to \mathbb{R}$, where a larger value signals a larger predicted oracle gap. Given $\nu$, we audit the $B$ top-ranked groups, denoted by $\mathcal{A}_\nu$,
\begin{equation}
  \mathcal{A}_\nu \;=\; \argtopk_{G \in \mathcal{G}}^{\,B} \; \nu(G),
\label{eqn:audit_set}
\end{equation}
and score every group with the best oracle we can afford for it,
\begin{equation}
  \widehat{\mathcal{I}}_{\mathcal{A}}(G, \theta) \;=\;
  \begin{cases}
    \mathcal{I}_{\mathrm{orcl}}(G, \theta), & G \in \mathcal{A}_\nu,\\[2pt]
    \Iproxy(G, \theta),                     & \text{otherwise.}
  \end{cases}
\label{eqn:budget_allocation}
\end{equation}

\paragraph{Setup.} Given the above procedure, we assess the performance of different diagnostics. The diagnostics we compare are as follows: 
\begin{enumerate}[topsep=0pt, itemsep=3pt, parsep=0pt]
\item \emph{GradDiv\textsubscript{full}}: uses $\nu(G) = \sigma_G^2$ from Proposition~\ref{prop:single_group_gap} 
\item \emph{GradDiv\textsubscript{$q$}}: uses the same estimator on a uniform subsample of $\lceil q\,|G| \rceil$ samples, $q \in (0, 1]$ (i.e., $q\,|G|$ gradients per group)
\item \emph{Gradient-free}: use Vendi, compression ratio, or distinct-$n$ (Appendix~\ref{appendix:diversity_metrics}; no gradients)
\item \emph{Random allocation}: draws $\nu(G) \sim \mathrm{Unif}(0,1)$ i.i.d. 
\end{enumerate}

Moreover, in Eqn~\ref{eqn:budget_allocation} we substitute $\Iproxy$ with GradSim (from Section~\ref{sec:collapse_setup}), as it is a cheap estimate of the individual proxy oracle more commonly used in practice~\citep{pruthi2020estimating, xia2024lessselectinginfluentialdata}.

We test these diagnostics on the constructed groups from Section~\ref{sec:single_group}, where the individual proxy oracle is blind to the group effects. We measure how well each diagnostic recovers the group oracle's ranking with two metrics: \emph{top-$k\%$ recall}, the fraction of the group oracle's top-$k\%$ groups that also rank in the top $k\%$ under $\widehat{\mathcal{I}}_{\mathcal{A}}$, and Spearman $\rho$ between $\widehat{\mathcal{I}}_{\mathcal{A}}$ and $\mathcal{I}_{\mathrm{orcl}}$ over the full pool. 
\begin{figure*}[t]
\centering
\begin{subfigure}[t]{0.24\textwidth}
    \centering
    \includegraphics[width=\linewidth]{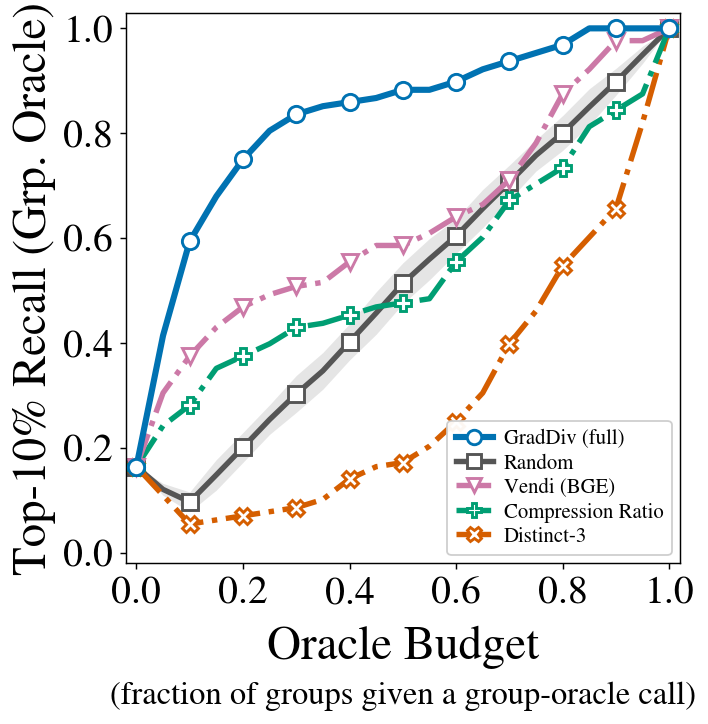}
    \caption{Choice of diagnostic.}
    \label{fig:allocation_heuristics_repro_n5120_recall10}
\end{subfigure}
\hfill
\begin{subfigure}[t]{0.24\textwidth}
    \centering
    \includegraphics[width=\linewidth]{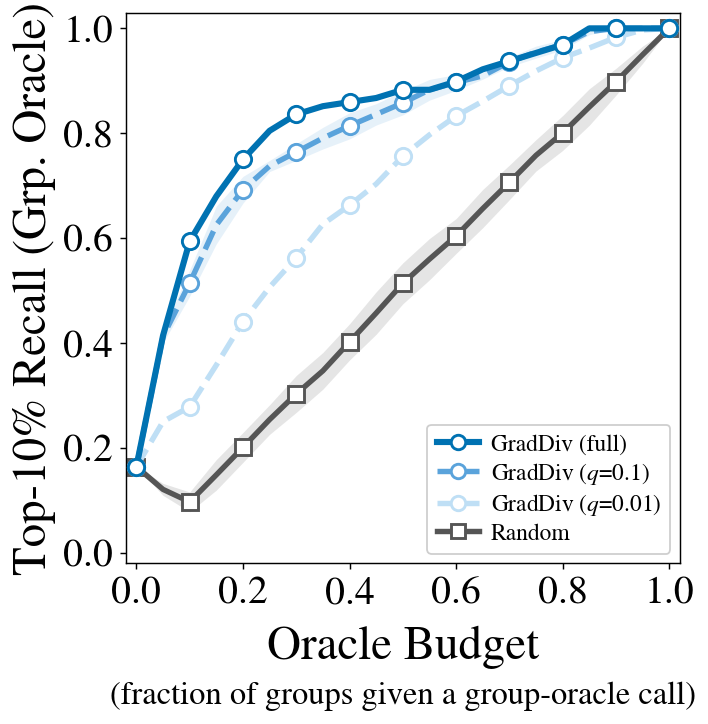}
    \caption{Gradient subsampling.}
    \label{fig:allocation_fraction_repro_n5120_recall10}
\end{subfigure}
\hfill
\begin{subfigure}[t]{0.24\textwidth}
    \centering
    \includegraphics[width=\linewidth]{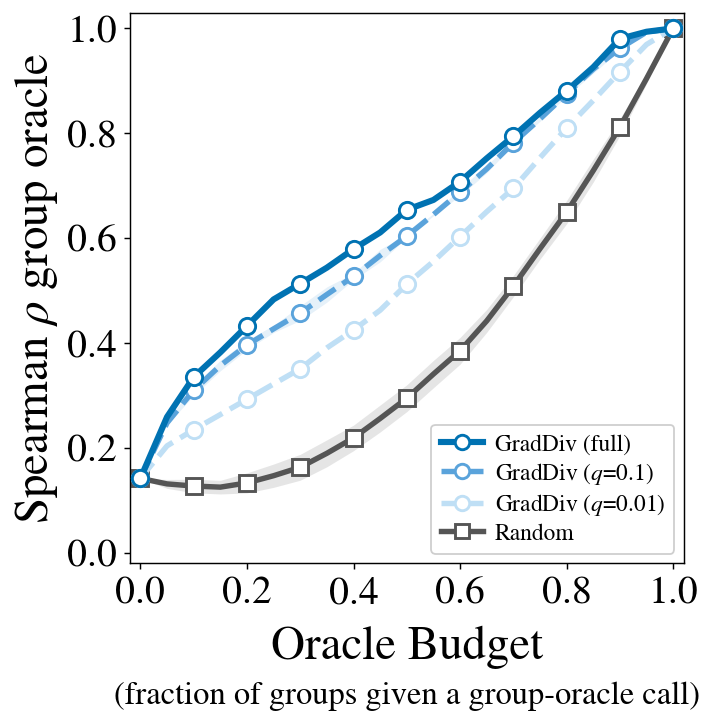}
    \caption{Rank agreement.}
    \label{fig:allocation_fraction_repro_n5120_spearman}
\end{subfigure}
\hfill
\begin{subfigure}[t]{0.24\textwidth}
    \centering
    \includegraphics[width=\linewidth]{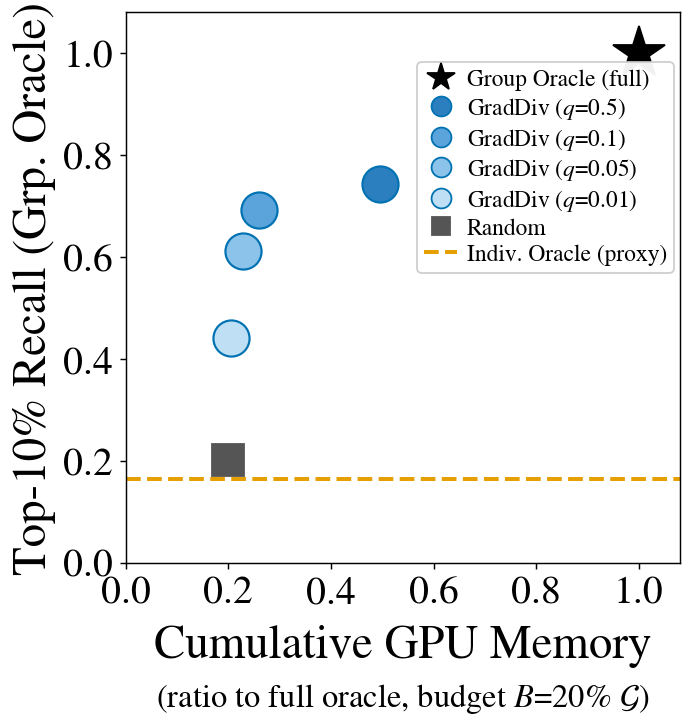}
    \caption{Memory versus quality.}
    \label{fig:memtime_vs_quality_f02_repro_n5120_recall10}
\end{subfigure}
\caption{Allocating a limited group-oracle budget with cheap diagnostics. See Appendix~\ref{appendix:budget_plots} for additional analysis plots.}
\label{fig:budget}
\vspace{-0.5cm}
\end{figure*}

\paragraph{Finding 1: Gradient diversity can recover the group oracle.} Figure~\ref{fig:allocation_heuristics_repro_n5120_recall10} shows that GradDiv\textsubscript{full} is best at retrieving the top-ranked groups under a fixed budget: at a budget of $20\%$ group-oracle calls, it recovers $75\%$ of the top-$10\%$ groups, compared to $20\%$ under random allocation. In contrast, the gradient-free diagnostics recover only part of this signal: Vendi and compression ratio reach $0.44$ and $0.40$ recall at the same budget, while the token-level distinct-$3$ falls below the random baseline.

\paragraph{Finding 2: Smaller gradient subsamples suffice.} Next, we examine whether the diagnostic stage itself can be made cheaper. Figures~\ref{fig:allocation_fraction_repro_n5120_recall10} and~\ref{fig:allocation_fraction_repro_n5120_spearman} show that GradDiv\textsubscript{$q$} gives similar benefit at $q = 0.1$ (only sampling $10\%$ of a group's gradients) compared to GradDiv\textsubscript{full} on both recall ($0.69$ versus $0.75$ at a $20\%$ budget) and Spearman correlation ($0.40$ versus $0.44$). This indicates that only a coarse estimate of GradDiv\textsubscript{full} is needed to rank groups, allowing the diagnostic to avoid much of the per-sample gradient cost.

\paragraph{GPU memory analysis.} Finally, we examine the compute cost. Figure~\ref{fig:memtime_vs_quality_f02_repro_n5120_recall10} reports the total cumulative memory usage at a $20\%$ budget (relative to scoring all groups using the group oracle), showing that GradDiv\textsubscript{$q$} exhibits a better memory-performance trade-off compared to full group oracle scoring and random allocation. In particular, at $q = 0.1$, GradDiv\textsubscript{$q$} recovers $74\%$ of the group oracle's recall while using $27\%$ of its memory. Full details on memory computation can be found in Appendix~\ref{appendix:budget_analysis}.
\section{Conclusion}
In this paper, we asked which signals identify what synthetic data is worth training on, and showed that \emph{group-level} signals, which capture interactions among samples, are needed. Across pre-training and post-training, synthetic datasets indistinguishable under individual-level influence differ under group-level influence, which also yields better downstream performance, with gains concentrated in generative capability. Among widely used data curation methods, only those using group-level signals improve over baseline, with further gains when relational weights are amplified. Finally, we recommend within-group gradient diversity as a diagnostic for when group-level signals are worth their cost. Our findings motivate future work on scalable group-level curation methods that model this relational structure directly.
\section{Acknowledgments}
This research was partially funded by the National Institute of Standards and Technology (ror.org/05xpvk416) and Carnegie Mellon University (https://ror.org/05x2bcf33) AI Measurement Science and Engineering Center (AIMSEC). 

We sincerely appreciate Saahith Janapati and Chen Xu for providing helpful feedback on this work.
\bibliography{references}
\bibliographystyle{iclr2027_conference}

%%%%%%%%%%%%%%%%%%%%%%%%%%%%%%%%%%%%%%%%%%%%%%%%%%%%%%%%%%%%
% APPENDIX
%%%%%%%%%%%%%%%%%%%%%%%%%%%%%%%%%%%%%%%%%%%%%%%%%%%%%%%%%%%%
\appendix
\newpage
\section*{Appendix}

% Appendix-only table of contents. Capture must start BEFORE the sections below
% so their TOC entries are mirrored into the .apc file; \AppendixTOC then prints
% them (sections + subsections only).
\StartAppendixTOCcapture
\AppendixTOC
\vspace{1em}

\section{Related Work}\label{appendix:related_work}

\textbf{Synthetic data for LLM Training.} Synthetic data is now used at every stage of LLM training. In pre-training, it supplements or rewrites web text to raise its information density, either through textbook-style generation \citep{li2023textbooks, abdin2024phi4technicalreport} or by rephrasing and recycling existing documents \citep{maini2024rephrasing, nguyen2025recycling, yu2025reprotraininglanguagemodels}. In mid-training, it is used to target specific capabilities before post-training, such as long-context, math, and code \citep{grattafiori2024llama3herdmodels, yang2024qwen25mathtechnicalreportmathematical, olmo20252olmo2furious, zeng2026davincidevagentnativemidtrainingsoftware}. In post-training, it supplies instruction-following and reasoning supervision that is difficult to source from humans at scale, through self-generated instructions \citep{wang2023self, taori2023stanford, mitra2024agentinstructgenerativeteachingagentic}, distillation from larger teachers \citep{adler2024nemotron, guo2025deepseekr1}, and self-improvement over verified model outputs \citep{zelikman2022star, singh2023beyond, yuan2023scalingrelationshiplearningmathematical}. It is also used to construct the environments and tasks that post-training runs over, including tool-use APIs and trajectories \citep{qin2023toolllm, liu2024apigen} and verifiable tasks for reinforcement learning \citep{lu2026goldengoosesimpletrick}.

\textbf{Synthetic data selection.} While past work has shown that individual-level data selection is effective for organic data (including representation-based \citep{pezeshkpour2021empirical}, gradient-based \citep{grosse2023studying, kwon2024datainf, xia2024lessselectinginfluentialdata}, and lexical methods \citep{bm25}), synthetic data often exhibits different characteristics. These characteristics include lower information density \citep{maini2024rephrasing, kang2025demystifying}, under-representation of rare entities or domains \citep{zhu2024synthesize, yu2026generating}, and more templated phrasing \citep{niklaus2026can}. Synthetic data curation therefore requires additional considerations, such as task complexity, environment construction, and model biases \citep{long-etal-2024-llms, Nad__2025}. Selection strategies for synthetic data largely score each generation on its own, filtering by quality or reward models \citep{chen2024alpagasus, wettig2024qurating} or by similarity to a target task \citep{xia2024lessselectinginfluentialdata, bm25}. A smaller line of work instead scores generations against one another, promoting diversity or limiting redundancy across the retained set \citep{chen2024diversity, zhu2024synthesize}.

\textbf{Group-Level Data Influence.} Since data inclusion decisions are typically made over groups rather than individual samples, a line of work has examined how influence estimates behave at the group level. Group influence is not the sum of its parts: the effects of individual samples can amplify or cancel one another, and the change induced by any single sample is small relative to that of the group it belongs to \citep{koh2019accuracy, basu2020second, NEURIPS2024_d8684e49}. The most common approach nonetheless scores a group by aggregating the influence estimates of its samples \citep{wang2023farewell, zhangharnessing, zhang2025fairshare}. Cooperative-game formulations instead average a sample's marginal contribution across subsets \citep{ghorbani2019data, jia2019towards, wang2023data}, though the resulting scores remain per-sample and are again summed to value a group. A third line of work models the group directly, either by predicting model behavior as a function of the training subset \citep{ilyas2022datamodels, park2023trak, engstrom2024dsdm} or by estimating influence alongside relational information between samples in the same group \citep{wang2024greats, sun2025enhancingtrainingdataattribution, yu2025data}.
\section{Supplementary Materials}\label{appendix:supplements}

This section presents additional details on the experiments from Sections \ref{sec:background}, \ref{sec:single_group}, \ref{sec:data_influence_methods}, and \ref{sec:diagnostics}.

\subsection{Oracle Scoring Configuration}\label{appendix:oracle_scoring_parameters}
In this section, we present the parameters for the oracle scoring procedure from Section~\ref{sec:single_group_tasks}, for both the pre-training and post-training settings. Across all oracle scoring settings, we take a candidate group $G = \{z_j\}_{j=1}^{|G|}$ and perform one AdamW step on $G$, starting from a model with parameters $\theta_{D_{\mathrm{train}}}$ previously obtained by training on $D_{\mathrm{train}}$ (in our setting, $\theta_{D_{\mathrm{train}}}$ is a pre-trained model checkpoint). We evaluate both $\theta_{D_{\mathrm{train}}}$ and $\theta_{D_{\mathrm{train}} \cup G}$ on a reference set $D_{\mathrm{ref}} = \{z'_i\}_{i=1}^{M}$. We reset the weights to $\theta_{D_{\mathrm{train}}}$ between candidate groups. The parameter details are shown in Table~\ref{tab:oracle-scoring-params}.

\begin{table}[ht]
\centering
\small
\setlength{\tabcolsep}{4pt}
\renewcommand{\arraystretch}{1.15}
\begin{tabularx}{\textwidth}{l >{\raggedright\arraybackslash}X >{\raggedright\arraybackslash}X}
\toprule
\textbf{Parameter} & \textbf{Pre-Training Setting} & \textbf{Post-training Setting} \\
\midrule
Probe model      & Repro-400M (trained on 6.59B Repro tokens \citep{yu2025data}) & Qwen2.5-1.5B-Instruct \\
Tokenizer        & Pythia-410m & Qwen2.5-1.5B-Instruct \\
Optimizer        & AdamW, 1 step & AdamW, 1 step \\
Learning rate    & \texttt{2e-5} &  1e-5 \\
Precision        & \texttt{bf16} &  \texttt{bf16} \\
Grad.\ accumulation & Yes & Yes \\
Micro batch size & \texttt{2} & \texttt{1} \\
Candidate group sizes & $5120$, $10240$ samples per group & $128$, $512$ samples per group \\
Sequence length  & 2048 & 2048 \\
Reference set    & FLAN (128 samples) & GPQA (128 held-out samples) \\
\bottomrule
\end{tabularx}
\caption{Oracle scoring configurations for the pre-training and post-training settings.}
\label{tab:oracle-scoring-params}
\end{table}

\subsection{Proxy-Matched Diversity Sampling}\label{appendix:cluster_sampling}

This section details the sampling procedure summarized in Section~\ref{sec:single_group_setup}. The goal is to create a pool of groups in which every group falls within an individual proxy oracle score range, and diversity is the free variable. We describe the process below.

\paragraph{Embedding and clustering.} Given a large corpus, $D_{train}$, we first partition $D_{train}$ into multiple clusters. The cluster partition $\mathcal{C} = \{C_1, \dots, C_k\}$ that defines coverage is built once per setting, before any group creation. We embed each example with BGE-base-en-v1.5 \citep{xiao2024c}. BGE accepts at most $512$ tokens whereas our sequences are $2048$ tokens long, so each sequence is split into four consecutive $512$-token chunks, each chunk is encoded independently, and we take the $768$-dimensional [CLS] hidden state as the chunk embedding. The four chunk embeddings are averaged to give one $768$-dimensional vector per sequence. We then partition the resulting matrix with $k$-means  at $k = 20$, using Euclidean distance on the embeddings. Figure~\ref{fig:cluster_topics_pie} shows the topics covered by the resulting clusters for each corpus.

\paragraph{Band selection.} We first score a reference pool of $100$ uniformly-random groups from $D_{train}$ and take the central half of their individual proxy-oracle distribution as the band $[\tau_{\mathrm{lo}}, \tau_{\mathrm{hi}}]$. For the pre-training setting this gives $[-6.85, -6.57] \times 10^{-3}$ (center $\tau = -6.71 \times 10^{-3}$, width $2.8 \times 10^{-4}$); for the post-training setting, $[-8.7, -8.1] \times 10^{-4}$ (center $\tau = -8.4 \times 10^{-4}$, width $6 \times 10^{-5}$). Restricting to the central half rather than the full reference range keeps every group's individual proxy oracle inside the distribution a random selector would produce, so the pool is not itself an outlier population.  

\paragraph{Coverage schedule.} Each group $G_j$, where $j = 1, \dots, H$, is assigned a coverage level

\[   r_j = \mathrm{round}\!\left(     \exp\!\Big( \log r_{\min} + \tfrac{j-1}{H-1}\,       \big(\log r_{\max} - \log r_{\min}\big) \Big)   \right),   \qquad r_{\min} = 1,\ r_{\max} = k = 20 . \] 

\paragraph{Target assignment and decorrelation.} We form the grid $\beta = \mathrm{linspace}(\tau_{\mathrm{lo}}, \tau_{\mathrm{hi}}, H)$ and set $\tau_j = \beta_{\pi(j)}$ for a seeded uniform random permutation $\pi$. 

\paragraph{Member selection.} Given $(r_j, \tau_j)$, we allocate the group's examples across its $r_j$ clusters and select within each cluster by \emph{stratified two-sided sampling}: within a cluster, examples are sorted by individual proxy score, split at $\tau_j$, and drawn from both sides in the mixing proportion that makes the subset mean equal $\tau_j$, with picks spread uniformly over each side's rank range. This deliberately allows a \emph{wide} spread of individual scores while pinning their mean to $\tau_j$. Table~\ref{tab:divsweep_pools} reports the four pools. In every case the realized individual proxy-oracle residual is within tolerance for all $H$ groups.

\begin{table}[t] \centering \small \begin{tabular}{llrrrr} \toprule Setting & size & Quantity & $\max_j |\Iproxy(G_j,\theta) - \tau_j|$ & $\epsilon$ &   diversity range \\ \midrule Pretraining & $5120$  & $1000$ & $5.0 \times 10^{-5}$ & $5 \times 10^{-5}$ & $0.095$--$0.212$ \\ Pretraining & $10240$ & $500$  & $5.0 \times 10^{-5}$ & $5 \times 10^{-5}$ & $0.096$--$0.200$ \\ Post-training                  & $128$   & $1000$ & $5.0 \times 10^{-6}$ & $5 \times 10^{-6}$ & $0.156$--$0.435$ \\  Post-training                 & $512$   & $500$  & $4.8 \times 10^{-6}$ & $5 \times 10^{-6}$ & $0.157$--$0.425$ \\ \bottomrule \end{tabular} \caption{Diversity-swept candidate pools. Diversity is $1 - \overline{\cos}(G)$. Every group's individual proxy oracle lands within $\epsilon$ of its assigned band target, so the pools vary in diversity at (near-)fixed individual proxy oracle.} 
\label{tab:divsweep_pools} \end{table}
\newpage
\subsection{Data Influence Setup}\label{appendix:tables_figures_task_settings}

Table~\ref{tab:task_settings_full} lists the full settings for the pre-training and post-training experiments in Section~\ref{sec:single_group_setup}, organized by phase. Each training setting is run under two configurations that differ in group size. In the \emph{small}-group configuration, the candidate pool contains more, smaller groups ($H{=}1000$), and more groups are selected ($k{=}40$ for pre-training, $k{=}25$ for post-training). In the \emph{large}-group configuration, the pool contains fewer, larger groups ($H{=}500$), and fewer groups are selected ($k{=}20$ and $k{=}7$). 

Group sizes are integer multiples of the training batch size, so each group spans a whole number of optimizer steps (10 or 20 for pre-training, 4 or 16 for post-training). The base model is used both for oracle scoring (Phase 2) and as the training initialization (Phase 3).

\begin{table}[ht]
\centering
\small
\caption{Task settings for the pre-training and post-training experiments, organized by the three phases in Section~\ref{sec:single_group_setup}. Each training setting is run under two configurations: a \emph{small}-group and a \emph{large}-group setting.}
\label{tab:task_settings_full}
\begin{tabular}{lll}
\toprule
 & \textbf{Pre-training} & \textbf{Post-training} \\
\midrule
\multicolumn{3}{@{}l}{\textit{Phase 1: Candidate pool construction}} \\
\midrule
Source dataset                      & Repro-Rephrased-72B            & GooseReason \\
Semantic clusters                   & 20                             & 20 \\
Group sizes $s$                     & $5120$ {\scriptsize(small)}, $10240$ {\scriptsize(large)} & $128$ {\scriptsize(small)}, $512$ {\scriptsize(large)} \\
Candidate pool $H$                  & $1000$ {\scriptsize(small)}, $500$ {\scriptsize(large)} & $1000$ {\scriptsize(small)}, $500$ {\scriptsize(large)} \\
\midrule
\multicolumn{3}{@{}l}{\textit{Phase 2: Data selection}} \\
\midrule
Reference set                       & FLAN (128 samples)                     & GPQA-extend (128 samples) \\
Selected groups $k$                 & $40$ {\scriptsize(small)}, $20$ {\scriptsize(large)}    & $25$ {\scriptsize(small)}, $7$ {\scriptsize(large)} \\
Gumbel Temperature $\lambda$        & 0.5                     & 0.5 \\
\midrule
\multicolumn{3}{@{}l}{\textit{Phase 3: Training and evaluation}} \\
\midrule
Base model $M_{\theta_0}$           & Repro-400M checkpoint          & Qwen2.5-1.5B-Instruct \\
Batch size                          & 512 sequences                  & 32 prompts ($\times 8$ completions) \\
Optimizer                           & cosine (wd $0.033$, no warmup) & GRPO (warmup $0.05$, $\beta{=}0$, temp $0.6$) \\
Learning rate                       & $3\times10^{-3}$               & $1\times10^{-6}$ \\
Training steps                      & 400                            & 100 \\
Benchmarks                          & \makecell[l]{ARC-C, CSQA, CoQA, HellaSwag,\\LAMBADA, MMLU, PIQA, SQuAD} & \makecell[l]{MBPP, MBPP+, GPQA-D,\\BBH, IFEval, AIME24} \\
\bottomrule
\end{tabular}
\end{table}

\newpage
\subsection{Data Curation Setup}\label{appendix:supp_data_curation_exp}

Table~\ref{tab:collapse_settings} lists the full settings for the data curation experiments in Section~\ref{sec:data_influence_methods}, organized by phase.

\begin{table}[h]
\centering
\small
\caption{Experimental settings for the data curation experiments, organized by the three phases in Section~\ref{sec:collapse_setup}.
The pool is held at a fixed size while the rephrase factor $r$ controls the degree of
near-duplicate redundancy.}
\label{tab:collapse_settings}
\begin{tabular}{ll}
\toprule
\multicolumn{2}{@{}l}{\textit{Phase 1: Synthetic pool construction}} \\
\midrule
Source documents                & Organic web text (DCLM-RefinedWeb) \\
Rephraser                       & Repro rephraser-4B \citep{yu2025reprotraininglanguagemodels} \\
Sampling                        & temperature $1.0$, top-$p$ $0.9$ \\
Pool size $N$                   & $716{,}800$ sequences \\
Rephrase factor $r$             & $\{1, 10, 50\}$ \\
Unique source docs ($N/r$)      & $716{,}800\,/\,71{,}680\,/\,14{,}336$ \\
Sequence length                 & $2048$ tokens \\
\midrule
\multicolumn{2}{@{}l}{\textit{Phase 2: Data selection}} \\
\midrule
Selection methods               & Random, GradSim, FineWeb-Edu, BM25, GMRel, GREATS \\
Reference set                   & FLAN (128 samples) \citep{chung2024scaling} \\
Embedding model $\phi(\cdot)$   & BGE \\
Clusters                        & $30$ ($k$-means), largest $20$ retained \\
Group size $s$                  & $5{,}120$ sequences \\
Groups per run                  & $20$ (one per retained cluster), $102{,}400$ sequences \\
\midrule
\multicolumn{2}{@{}l}{\textit{Phase 3: Training and evaluation}} \\
\midrule
Base model $M_{\theta_0}$       & Repro-400M \citep{yu2025reprotraininglanguagemodels} \\
Batch size                      & $512$ sequences \\
Training steps                  & $200$ (one epoch over the selection) \\
Learning rate                   & $3\times10^{-3}$ (cosine, no warmup) \\
Weight decay                    & $0.033$ \\
Seed                            & $124$ \\
Benchmarks                      & \makecell[l]{ARC-C, CSQA, CoQA, HellaSwag,\\LAMBADA, MMLU, PIQA, SQuAD} \\
\bottomrule
\end{tabular}
\end{table}

\newpage
\subsection{Data Selection Methods}\label{appendix:group_influence_methods}

This section presents additional details regarding the data selection methods from Section~\ref{sec:collapse_setup}.

\subsubsection{FineWeb-Edu}
The FineWeb-Edu \citep{penedo2024fineweb} classifier is a lightweight encoder-based regressor (a BERT-based encoder with a single-output regression head) that rates a document's educational value from $0$ to $5$. We score every sequence in the pool independently. The classifier accepts at most $512$ tokens. Therefore, for a given data sample, we score each chunk, and average the chunk scores to obtain a single score per sequence. For groups of samples, we average the individual sequence scores to obtain a single score per group.

\subsubsection{BM25}
BM25 \citep{bm25} is a standard information retrieval method that ranks training samples based on lexical similarity to a query. The reference set sequences are decoded to text and whitespace-tokenized to form a BM25-Okapi document collection of documents, built once; each candidate sequence is then decoded and whitespace-tokenized in the same way and issued as a query against that collection. The score of a sequence is the mean of its BM25 scores, so higher scores indicate greater lexical relevance to the reference set and sequences are again ranked in descending order.

\subsubsection{GradSim}\label{appendix:gradsim}
GradSim \citep{pruthi2020estimating} estimates influence with the first-order inner product between the gradient of the loss on a training sample and the gradient of the loss on the reference set. We first compute a single reference gradient by back-propagating the token-weighted mean loss over the reference set at the base checkpoint $\theta$; the base checkpoint is the same one used for oracle scoring in each setting (Table~\ref{tab:oracle-scoring-params}). We then compute, for each candidate sequence $z$, the gradient $\nabla_\theta \mathcal{L}(z, \theta)$ at that same checkpoint. Each candidate is scored by the cosine similarity between its gradient and the reference gradient.

\subsubsection{GMRel}
GMRel \citep{yu2025data} trains a relational data influence model that estimates oracle data influence by weighting individual influence scores according to relationships among training data. We train one relational model per task setting, in three stages.

\textbf{Architecture.} The relational model is a BGE-base-en-v1.5 encoder $\psi(\cdot)$ with a
one-dimensional linear head $h(\cdot)$, plus two scalars learned jointly with the encoder: a
relationship temperature $\tau$ and a relational weight $\alpha$. A sequence's representation
$\psi(z)$ is the $\ell_2$-normalized \texttt{[CLS]} state of each of its $512$-token chunks,
averaged over the four chunks of a $2048$-token sequence, and its individual influence is
$\hat{I}(z) = h(\psi(z))$. For an ordered group $G = (z_1, \dots, z_n)$, the \emph{relationship}
between two members is the temperature-scaled representation inner product
\[
  R(z_i, z_j) \;=\; \frac{\langle \psi(z_i), \psi(z_j) \rangle}{\tau},
\]
and the group-aware influence of the $t$-th member is its individual influence modulated by its
average relationship to the members preceding it,
\[
  \hat{I}_{\mathrm{rel}}\big(z_t \mid z_1, \dots, z_{t-1}\big)
  \;=\; \alpha\, \hat{I}(z_t) \left( 1 - \frac{1}{t-1} \sum_{i < t} R(z_t, z_i) \right),
  \qquad t \ge 2,
\]
with $\hat{I}_{\mathrm{rel}}(z_1) = \hat{I}(z_1)$ for the first member. 

\textbf{Training data.} The relational model is trained on \emph{oracle rollouts} generated on the same base model used for oracle scoring
in that setting. In the pre-training setting, rollout candidates are drawn from the tokenized Repro-Rephrased-72B corpus (excluding data used for training in Section~\ref{sec:data_influence_methods}) \citep{yu2025reprotraininglanguagemodels}, such that each candidate contains four $2048$-token sequences, giving $800{,}000$ candidate sequences; the reference set is
the same $128$-sample FLAN set used for oracle scoring. In the post-training setting, candidates are
drawn from GooseReason \citep{lu2026goldengoosesimpletrick} against the
$128$-sample GPQA-extend reference set.

Each rollout samples $K = 10$ candidate sequences without replacement, resets the model to its base
weights, and then trains on the candidates \emph{sequentially} with AdamW, re-evaluating reference-set
loss after each one. The recorded label for a candidate is the cumulative change in reference loss. We collect $2{,}400$ rollouts ($24{,}000$ labeled
candidates) in the pre-training setting and $800$ rollouts ($8{,}000$ labeled candidates) in the
post-training setting.

\textbf{Training parameters.} The relational model is fit by MSE between its group-aware scores
$\hat{I}_{\mathrm{rel}}$ and these per-step deltas, holding out $200$ rollouts for evaluation. We
select the best checkpoint by Spearman correlation on the held-out rollouts. Table~\ref{tab:gmrel-params} lists the full configuration for both the rollout
generation and the model fitting.

\begin{table}[h]
\centering
\small
\setlength{\tabcolsep}{4pt}
\renewcommand{\arraystretch}{1.15}
\begin{tabularx}{\textwidth}{l >{\raggedright\arraybackslash}X >{\raggedright\arraybackslash}X}
\toprule
\textbf{Parameter} & \textbf{Pre-Training Setting} & \textbf{Post-training Setting} \\
\midrule
\multicolumn{3}{l}{\emph{Rollout generation (supervision)}} \\
Base model            & Repro-400M (as in Table~\ref{tab:oracle-scoring-params}) & Qwen2.5-1.5B-Instruct \\
Candidate pool        & Repro-Rephrased-72B & GooseReason \\
Source tokenizer      & Pythia-410m & Qwen2.5-1.5B-Instruct \\
Reference set         & FLAN (128 samples) & GPQA-extend (128 samples) \\
Candidates per rollout $K$ & $10$ & $10$ \\
Inner optimizer       & AdamW, 4 steps per candidate & AdamW, 1 step per candidate \\
Inner learning rate   & \texttt{2e-5} & \texttt{1e-5} \\
$(\beta_1, \beta_2)$, weight decay & $(0.9, 0.95)$, $0$ & $(0.9, 0.999)$, $0$ \\
Gradient clipping     & $1.0$ & none \\
Precision             & \texttt{bf16} & \texttt{bf16} \\
Rollouts collected    & $2{,}400$ ($24{,}000$ labeled candidates) & $800$ ($8{,}000$ labeled candidates) \\
\midrule
\multicolumn{3}{l}{\emph{Relational model fitting}} \\
Encoder               & BGE-base-en-v1.5 & BGE-base-en-v1.5 \\
Chunking per sequence & $4 \times 512$ tokens & $4 \times 512$ tokens \\
Objective             & MSE on per-step reference-loss deltas & MSE on per-step reference-loss deltas \\
Learning rate         & \texttt{5e-5} & \texttt{5e-5} \\
Weight decay          & $0.01$ & $0.01$ \\
Epochs                & $5$ & $5$ \\
Batch size            & $1$, gradient accumulation $4$ & $1$, gradient accumulation $4$ \\
Warmup steps          & $50$ & $50$ \\
Precision             & \texttt{bf16} & \texttt{bf16} \\
Held-out rollouts     & $200$ & $200$ \\
\bottomrule
\end{tabularx}
\caption{GMRel relational model configurations: oracle rollout generation (which produces the
supervision) and fitting of the relational influence model.}
\label{tab:gmrel-params}
\end{table}

At selection time each pool sequence's representation and individual influence are computed once and cached, and the greedy of Appendix~\ref{appendix:cluster_inference} uses the checkpoint's learned scalars ($\alpha \approx 0.996$, $\tau \approx 1.010$ for the pre-training model). Because the labels are reference-loss changes, more useful data has a \emph{more negative} score and the greedy selects the arg\,min. To vary relational strength we additionally form groups with $\tau$ overridden to $0.30$, roughly tripling the weight on the relational term; this is the comparison in Figure~\ref{fig:gmrel_taus_r50}.

\subsubsection{GREATS}
GREATS \citep{wang2024greats} augments the GradSim individual score with a penalty against the sequences already selected. Writing $g_z = \nabla_\theta \mathcal{L}(z,\theta)$ and $g_{\mathrm{ref}}$ for the reference gradient above, the individual term is $s_z = \langle g_z, g_{\mathrm{ref}}\rangle$ and the marginal gain of adding $z$ to a partially formed group $G$ is
\[
  \phi_z = s_z - \eta \Big\langle g_z, \sum_{z^* \in G} g_{z^*} \Big\rangle .
\]
Selection starts from $G = \emptyset$, takes $\arg\max_z s_z$ in the first round, and repeatedly adds $\arg\max_z \phi_z$, so a candidate that points in the same gradient direction as the current members is discounted even if its individual score is high. Because $\phi_z$ has to be recomputed against $G$ at every round, we cache each sequence's gradient once as a projection to $d = 8192$ dimensions (correlation $\ge 0.9999$ with the exact gradient inner products). We set step size $\eta$ to the training learning rate ($\eta = 0.003$). The gradients are the same per-sequence gradients GradSim selects on, so the two methods differ only in the selection rule.

\subsection{Group Formation}\label{appendix:cluster_inference}
In this section, we describe the group formation procedure used in Section~\ref{sec:collapse_setup}. For each data selection method considered in that section, we construct groups using a \emph{cluster-based inference} procedure \citep{yu2025data}. We adopt this procedure to reduce computational cost, since some data selection methods (e.g., GMRel \citep{yu2025data} and GREATS \citep{wang2024greats}) require iterative rounds of scoring over the entire dataset.

Specifically, for each synthetic dataset from Section~\ref{sec:data_influence_methods}, we embed every sequence with BGE-base-en-v1.5 \citep{xiao2024c} and partition the pool into $30$ clusters via $k$-means. We then sample $20$ clusters and use each data selection method to form one group of size $s = 5120$ from each sampled cluster, yielding $20$ groups per method.

Individual-level scorers (i.e., GradSim \citep{pruthi2020estimating}, FineWeb-Edu \citep{penedo2024fineweb}, and BM25 \citep{bm25}) form each group greedily by taking the top-scored samples within a cluster. GMRel \citep{yu2025data} and GREATS \citep{wang2024greats} instead form each group through a relational greedy process: starting from an empty group, they repeatedly add the candidate with the highest score after \emph{re-scoring against the partially formed group}, so that each addition accounts for its relation to the samples already selected, as described in Appendix~\ref{appendix:group_influence_methods}.

\subsection{Diversity Metrics}\label{appendix:diversity_metrics}

In this section, we describe the diversity metrics evaluated in Section~\ref{sec:diagnostics}. Each metric maps a group to a scalar, and in every case a larger value indicates a more diverse (less redundant) group. We then report the Spearman rank correlation between each metric and the oracle gap $\big|\,\Iproxy(G,\theta) - \mathcal{I}_{\mathrm{orcl}}(G,\theta)\,\big|$ over the groups in each setting.

\paragraph{Gradient diversity.} This is the gradient-based metric. For each sample $z$ we take the per-sample gradient $g_z = \nabla_\theta \mathcal{L}(z, \theta)$ at the same base checkpoint used for oracle scoring (Table~\ref{tab:oracle-scoring-params}), and JL-project it to $d = 8192$ dimensions with the TRAK Rademacher projector \citep{park2023trak}, as in Appendix~\ref{appendix:group_influence_methods}. Writing $u_z = p_z / \norm{p_z}$ for the unit-normalized projected gradients, the metric is one minus the mean pairwise cosine similarity,
\[
  \mathrm{GradDiv}(G) \;=\; 1 - \frac{1}{|G|(|G|-1)} \sum_{z \neq z'} \langle u_z, u_{z'} \rangle .
\]
We use the cosine form so that the metric depends only on gradient \emph{directions} and is not dominated by a few high-magnitude samples.

\paragraph{Vendi score.} The Vendi score \citep{friedman2022vendi} is an embedding-based metric that measures the \emph{effective number of distinct samples} in a group. We embed each sample with BGE-base-en-v1.5 \citep{xiao2024c} (averaging over a sample's sequences) and $\ell_2$-normalize, giving $X \in \mathbb{R}^{|G| \times 768}$. Letting $\lambda_1, \dots, \lambda_{|G|}$ be the eigenvalues of the normalized cosine-similarity kernel $K = \frac{1}{|G|} X X^\top$, the score is the exponential of the Shannon entropy of the eigenspectrum,
\[
  \mathrm{Vendi}(G) \;=\; \exp\!\Big(-\sum_{i} \lambda_i \log \lambda_i\Big).
\]
It ranges from $1$ (all samples identical) to $|G|$ (mutually orthogonal embeddings), and unlike a mean pairwise similarity it accounts for higher-order redundancy, such as several tight clusters of near-duplicates.

\paragraph{Compression ratio.} This is the information-theoretic metric, and captures redundancy \emph{across} samples that a compressor can exploit \citep{shaib-etal-2025-standardizing}. Writing $b_z$ for the raw byte string of sample $z$ and $\mathrm{zlib}(\cdot)$ for the compressed length at level $6$, we take the ratio of the compressed length of the concatenated group to the sum of the individually compressed lengths,
\[
  \mathrm{CompRatio}(G) \;=\; \frac{\big|\mathrm{zlib}(b_{z_1} \Vert \cdots \Vert b_{z_{|G|}})\big|}{\sum_{z \in G} \big|\mathrm{zlib}(b_z)\big|} .
\]
A value near $1$ means concatenating the samples yields no compression gain, i.e.\ they share little material; values well below $1$ indicate that samples are largely restatements of one another.

\paragraph{Distinct $n$-grams.} These are the token-level metrics. We pool the token sequences of all samples in the group (stripping padding), enumerate all token $n$-grams, and report the fraction that are unique,
\[
  \mathrm{Distinct}\text{-}n(G) \;=\; \frac{\big|\{\text{unique } n\text{-grams in } G\}\big|}{\big|\{\text{all } n\text{-grams in } G\}\big|} .
\]
We report Distinct-$2$ and Distinct-$3$. Because $n$-grams are pooled across the whole group rather than computed per sample, the metric reflects repetition between samples and not just within them.

\subsection{Budget Analysis}\label{appendix:budget_analysis}

This section details the memory computation from Section~\ref{sec:diagnostics}. We consider a pool of data groups $\mathcal{G} = \{G_1, \dots, G_H\}$, where each group contains $|G|$ sequences, and an allocation that audits $B$ of the $H$ groups with the group oracle. We report \emph{cumulative} GPU memory, i.e., memory integrated over the run (GiB$\cdot$hours). A run has two phases, group-oracle calls and per-sample gradient passes, so its cumulative memory is
\begin{equation}
  \mathrm{Mem} \;=\; \underbrace{\mathrm{mem}_{\mathrm{orcl}} \times T_{\mathrm{orcl}}}_{\text{oracle phase}} \;+\; \underbrace{\mathrm{mem}_{\mathrm{grad}} \times T_{\mathrm{grad}}}_{\text{gradient phase}} ,
  \label{eqn:cumulative_memory}
\end{equation}
where $\mathrm{mem}$ is the GPU memory resident during a phase and $T$ is the time spent in it. Since both phases are compute-bound on the same GPU, we substitute FLOPs for time. We detail the FLOPs analysis in Section~\ref{appendix:budget_flops} and the cumulative memory analysis in Section~\ref{appendix:budget_memory}.

\subsubsection{FLOPs Analysis}\label{appendix:budget_flops}
We first define the following variables for FLOPs analysis:
\begin{itemize}[topsep=2pt, itemsep=2pt, parsep=0pt]
\item $F_{\mathrm{fwd}}$: FLOPs for one forward pass of the probe model over one sequence. We report all FLOPs in units of $F_{\mathrm{fwd}}$.
\item $F_{\mathrm{bwd}} \approx 2F_{\mathrm{fwd}}$: FLOPs for one backward pass.
\item $R$: number of sequences in the reference set $D_{\mathrm{ref}}$.
\end{itemize}
Section~\ref{sec:diagnostics} applies two operations to a group $G$. A \emph{group-oracle call} (Appendix~\ref{appendix:oracle_scoring_parameters}) performs one AdamW step over $G$ and then evaluates the updated model on $D_{\mathrm{ref}}$. A \emph{gradient pass} computes the per-sample gradient $g_z$ of every $z \in G$, each requiring one forward and backward pass. Their FLOPs are
\begin{align*}
  F_{\mathrm{orcl}} \;&=\; \underbrace{|G|\,(F_{\mathrm{fwd}} + F_{\mathrm{bwd}})}_{\text{AdamW step over } G} \;+\; \underbrace{R\, F_{\mathrm{fwd}}}_{\text{reference evaluation}} \;=\; 3|G| + R, \\
  F_{\mathrm{grad}} \;&=\; \underbrace{|G|\,(F_{\mathrm{fwd}} + F_{\mathrm{bwd}})}_{\text{per-sample gradients over } G} \;=\; 3|G| .
\end{align*}
A gradient pass over a fraction $q$ of the group, as in GradDiv\textsubscript{$q$}, costs $q\,F_{\mathrm{grad}}$.\footnote{GradDiv\textsubscript{$q$} samples $\lceil q|G| \rceil$ sequences; the rounding does not affect any reported value.} 

\subsubsection{Cumulative Memory Analysis}\label{appendix:budget_memory}
Next, we define the following variables for memory analysis:
\begin{itemize}[topsep=2pt, itemsep=2pt, parsep=0pt]
\item $W$: GPU memory of one copy of the model parameters.
\item $A$: GPU memory of the activations of one micro-batch.
\end{itemize}
A group-oracle call holds the weights, the gradient, and the two AdamW moments, plus activations. A gradient pass holds only the weights and the gradient, plus activations; no optimizer state is instantiated. The resident memory of each phase is therefore
\begin{align*}
  \mathrm{mem}_{\mathrm{orcl}} \;&=\; \underbrace{W}_{\text{weights}} + \underbrace{W}_{\text{gradient}} + \underbrace{2W}_{\text{AdamW moments}} + \underbrace{A}_{\text{activations}} \;=\; 4W + A, \\
  \mathrm{mem}_{\mathrm{grad}} \;&=\; \underbrace{W}_{\text{weights}} + \underbrace{W}_{\text{gradient}} + \underbrace{A}_{\text{activations}} \;=\; 2W + A,
\end{align*}
and we write $\kappa = \mathrm{mem}_{\mathrm{grad}} / \mathrm{mem}_{\mathrm{orcl}} = (2W + A)/(4W + A)$ for the memory discount of a gradient pass relative to an oracle call.

Every allocation in Section~\ref{sec:diagnostics} audits $B$ groups with the group oracle and runs a gradient pass over a fraction $q$ of each of the $H$ groups, so all of them share one cost expression:
\begin{equation}
  \mathrm{Mem}(B, q) \;=\; \underbrace{(4W + A)\, B\, F_{\mathrm{orcl}}}_{\text{oracle calls}} \;+\; \underbrace{(2W + A)\, H\, q\, F_{\mathrm{grad}}}_{\text{gradient passes}} .
  \label{eqn:memtime_cost}
\end{equation}
The allocations differ only in $q$:
\begin{itemize}[topsep=2pt, itemsep=2pt, parsep=0pt]
\item \textbf{Group oracle.} Every group receives an oracle call and no gradients are computed.
\item \textbf{Random.} No diagnostic is computed, and the allocation is charged only for its oracle calls.
\item \textbf{GradDiv\textsubscript{full}.} All per-sample gradients of every group are computed to rank the groups; the top $B$ are audited and the rest are scored with GradSim, which reuses the same gradients.
\item \textbf{GradDiv\textsubscript{$q$}.} As above, with a uniform subsample of a fraction $q$ of each group.
\end{itemize}

\textbf{Ratio to the group oracle.} Figure~\ref{fig:memtime_vs_quality_f02_repro_n5120_recall10} plots each allocation's cumulative memory relative to the group oracle, $\mathrm{Mem}_{\mathrm{orcl}} = \mathrm{Mem}(H, 0)$.

\begin{proposition}[Cumulative memory ratio]
\label{prop:memtime_ratio}
For an allocation that audits $B$ groups and runs a gradient pass over a fraction $q$ of each group,
\begin{equation}
  \frac{\mathrm{Mem}(B, q)}{\mathrm{Mem}_{\mathrm{orcl}}}
  \;=\; \frac{B}{H} \;+\; \kappa\, q\, \frac{3|G|}{3|G| + R}
  \;\approx\; \frac{B}{H} \;+\; \kappa\, q ,
  \label{eqn:memtime_ratio}
\end{equation}
where the approximation holds when $R \ll 3|G|$.
\end{proposition}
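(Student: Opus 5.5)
The proof is a direct substitution into the cost expression \eqref{eqn:memtime_cost}, followed by one algebraic simplification. First I would write down the denominator. The group-oracle allocation audits every group and computes no gradients, so $\mathrm{Mem}_{\mathrm{orcl}} = \mathrm{Mem}(H,0) = (4W+A)\,H\,F_{\mathrm{orcl}}$. Next I divide the general expression $\mathrm{Mem}(B,q) = (4W+A)\,B\,F_{\mathrm{orcl}} + (2W+A)\,H\,q\,F_{\mathrm{grad}}$ by this quantity term by term.

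The oracle term gives $B/H$, since both the memory factor $4W+A$ and $F_{\mathrm{orcl}}$ cancel. The gradient term gives
\[
\frac{2W+A}{4W+A}\cdot q\cdot\frac{F_{\mathrm{grad}}}{F_{\mathrm{orcl}}}.
\]
Here the factor $H$ cancels, and the memory ratio is exactly $\kappa$ as defined in Appendix~\ref{appendix:budget_memory}. I then substitute the FLOP counts from Appendix~\ref{appendix:budget_flops}, $F_{\mathrm{grad}} = 3|G|$ and $F_{\mathrm{orcl}} = 3|G|+R$, both in units of $F_{\mathrm{fwd}}$ with $F_{\mathrm{bwd}}\approx 2F_{\mathrm{fwd}}$. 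This yields the exact identity in \eqref{eqn:memtime_ratio}.

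For the approximation, I would write
\[
\frac{3|G|}{3|G|+R} = \frac{1}{1+R/(3|G|)} = 1 - O\!\left(\frac{R}{3|G|}\right),
\]
which tends to $1$ when $R \ll 3|G|$. Since $\kappa q \le 1$, the absolute error this introduces into the ratio is at most $R/(3|G|)$. In the paper's setting $R=128$ and $|G|\ge 5120$, so the error is below one percent.

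The only real obstacle is being clear about the modeling conventions rather than any mathematical step. The argument relies on three of them: FLOPs stand in for time because both phases are compute-bound on the same GPU, so each phase's cumulative memory factors as resident memory times FLOPs; the activation memory $A$ is the same micro-batch quantity in both phases; and the random allocation corresponds to the case $q=0$. I would state these explicitly as assumptions inherited from \eqref{eqn:cumulative_memory}, so the identity holds exactly under that cost model.
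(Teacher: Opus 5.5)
Your proposal is correct and follows the paper's proof essentially step for step: you substitute $F_{\mathrm{orcl}} = 3|G| + R$ and $F_{\mathrm{grad}} = 3|G|$ into \cref{eqn:memtime_cost}, divide by $\mathrm{Mem}_{\mathrm{orcl}} = (4W+A)\,H\,(3|G|+R)$, and recognize the memory ratio as $\kappa$. Your explicit bound on the approximation error, $\kappa q\,R/(3|G|+R) \le R/(3|G|)$, slightly sharpens the paper's remark that the last factor is $\approx 1$.
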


\begin{proof}
Substituting $F_{\mathrm{orcl}} = 3|G| + R$ and $F_{\mathrm{grad}} = 3|G|$ into \cref{eqn:memtime_cost}, and dividing by $\mathrm{Mem}_{\mathrm{orcl}} = (4W + A)\, H\, (3|G| + R)$,
\begin{align*}
  \frac{\mathrm{Mem}(B, q)}{\mathrm{Mem}_{\mathrm{orcl}}}
  \;&=\; \frac{(4W + A)\, B\, (3|G| + R) + (2W + A)\, H\, q\, 3|G|}{(4W + A)\, H\, (3|G| + R)} \\[2pt]
  \;&=\; \frac{B}{H} \;+\; \frac{2W + A}{4W + A}\cdot q\, \frac{3|G|}{3|G| + R} ,
\end{align*}
and the middle factor is $\kappa$ by definition. Only the ratio of the two memory levels enters, so the result does not depend on the absolute model size or GPU. When $R \ll 3|G|$, the last fraction is $\approx 1$.
\end{proof}
Table~\ref{tab:budget_memory_model} summarises the resulting ratios.

\begin{table}[ht]
\centering
\footnotesize
\setlength{\tabcolsep}{6pt}
\begin{tabular}{lllcc}
\toprule
Allocation & $B$ & $q$ & Ratio (Eq.~\ref{eqn:memtime_ratio}) & Calculated Ratio \\
\midrule
Group oracle & $H$ & $0$ & $1$ & $1.000$ \\
GradDiv\textsubscript{full} & $B$ & $1$ & $\approx B/H + \kappa$ & $0.790$ \\
GradDiv\textsubscript{$q$} & $B$ & $0.5$ & \multirow{4}{*}{$\approx B/H + \kappa q$} & $0.495$ \\
GradDiv\textsubscript{$q$} & $B$ & $0.1$ & & $0.259$ \\
GradDiv\textsubscript{$q$} & $B$ & $0.05$ & & $0.229$ \\
GradDiv\textsubscript{$q$} & $B$ & $0.01$ & & $0.206$ \\
Random & $B$ & $0$ & $B/H$ & $0.200$ \\
\bottomrule
\end{tabular}
\caption{Cumulative GPU memory of each allocation relative to the full group oracle. The last column is the exact ratio of \cref{eqn:memtime_ratio} on the pre-training pool of Figure~\ref{fig:memtime_vs_quality_f02_repro_n5120_recall10} ($H = 1000$, $|G| = 5120$, $R = 128$, $B/H = 0.2$, $\kappa = 0.593$).}
\label{tab:budget_memory_model}
\end{table}

\newpage
\section{Proofs: Invidividual Proxy Oracle vs Group Oracle}
\label{appendix:oracle_scoring}

In this section, we prove \cref{prop:single_group_gap} and analyze the gap between the two ways of
assigning an oracle score to a group: averaging the per-example oracles, and scoring the group
jointly.

\subsection{Setup}

We adopt the notation of \cref{sec:background} and use the following quantities throughout:
\begin{center}
\begin{tabular}{ll}
$g_z = \nabla_\theta \ell(z;\theta)$ & gradient of the training loss on example $z$ at $\theta$ \\[2pt]
$g_G = \frac{1}{|G|}\sum_{z\in G} g_z$ & mean gradient of the group \\[2pt]
$H = \nabla^2 L(\theta)$ & Hessian of the reference loss at $\theta$ \\[2pt]
$\Sigma_G = \frac{1}{|G|}\sum_{z\in G} (g_z-g_G)(g_z-g_G)^\top$ & within-group gradient covariance \\[2pt]
$\sigma_G^2 = \frac{1}{|G|}\sum_{z\in G} \norm{g_z-g_G}^2 = \Tr(\Sigma_G)$ & within-group gradient diversity \\[2pt]
$\beta$ & smoothness constant of $L$ ($\norm{H}\le\beta$)
\end{tabular}
\end{center}

Let $\theta$ denote the model parameters,
$D_{\mathrm{ref}}$ the reference set, and $G=\{z_1,\dots,z_{|G|}\}$ a group of training examples. Recall from \cref{def:oracle_influence} in \cref{sec:background} the oracle scores for a data sample $z$ and a data group $G$:
\begin{align*}
&\text{(individual)} \quad \mathcal{I}_{\mathrm{orcl}}(z, \theta)
= \mathcal{L}\!\left(D_{\mathrm{ref}} \mid \theta_{D_{\mathrm{train}} \cup \{z\}}\right) - \mathcal{L}\!\left(D_{\mathrm{ref}} \mid \theta_{D_{\mathrm{train}}}\right) \\
&\text{(group)} \quad 
\mathcal{I}_{\mathrm{orcl}}(G, \theta)
= \mathcal{L}\!\left(D_{\mathrm{ref}} \mid \theta_{D_{\mathrm{train}} \cup G}\right) - \mathcal{L}\!\left(D_{\mathrm{ref}} \mid \theta_{D_{\mathrm{train}}}\right)
\end{align*}

which represents the change in model performance on reference set $D_{\mathrm{ref}}$ when a training sample $z$ or group $G$ is included versus excluded from the training set. 

Next, we introduce a helper function to capture the loss-change via parameter displacement.
\begin{defframe}
\textit{\paragraph{Loss-Change Function:} Let $d$ denote the number of model parameters, so that $\theta, g_z, g_G\in\mathbb{R}^d$. For a parameter displacement $v\in\mathbb{R}^d$, we define the loss-change function as:
\begin{equation}
\Phi(v) \;\overset{\text{def}}{=}\; L(\theta-v) - L(\theta),
\qquad
\Phi:\mathbb{R}^d\to\mathbb{R},
\label{eqn:appendix_phi}
\end{equation}}
\end{defframe}

%START
\begin{proofframe}
We rewrite the definitions in \cref{def:oracle_influence} above by abbreviating the reference loss as $L(\theta) \;\overset{\text{def}}{=}\; \mathcal{L}\!\left(D_{\mathrm{ref}}\mid\theta\right)$, and including the gradient steps $\theta\to\theta-\eta g_z$ and  $\theta\to\theta-\eta g_G$:
\begin{align*}
\mathcal{I}_{\mathrm{orcl}}(z,\theta) 
&\;=\; L(\theta-\eta g_z) - L(\theta)  \\
\mathcal{I}_{\mathrm{orcl}}(G,\theta) 
&\;=\; L(\theta-\eta g_G) - L(\theta)
\end{align*}

Furthermore, the individual proxy oracle of \cref{asm:additive} in \cref{sec:background} becomes:
\begin{equation*}
\Iproxy(G,\theta) = \frac{1}{|G|}\sum_{z\in G} L(\theta-\eta g_z) - L(\theta).
\label{eqn:appendix_oracle_proxy}
\end{equation*}
%END
With the loss-change function, for $\Iproxy$ and $\mathcal{I}_{\mathrm{orcl}}$ the relevant displacements are the gradients $\eta g_z\in\mathbb{R}^d$ and $\eta g_G\in\mathbb{R}^d$, respectively. Thus, $\Iproxy$ and $\mathcal{I}_{\mathrm{orcl}}$ are written as:
\begin{empheq}[box=\fbox]{align}
\Iproxy(G,\theta) &= \frac{1}{|G|}\sum_{z\in G} \Phi(\eta g_z) \label{eqn:orc_single_loss_change} \\
\mathcal{I}_{\mathrm{orcl}}(G,\theta) &= \Phi(\eta g_G). \label{eqn:orc_group_loss_change}
\end{empheq}
Thus $\Iproxy(G,\theta)$ is the average of $\Phi$ over the points
$\{\eta g_z\}$, while $\mathcal{I}_{\mathrm{orcl}}(G,\theta)$ is $\Phi$ evaluated at the mean group gradient $g_G$.
\end{proofframe}

\subsection{Leading-order characterization of the gap}

In this section, we characterize the difference between $\Iproxy$ and $\mathcal{I}_{\mathrm{orcl}}$. We first identify the leading-order behavior of the gap via a Taylor expansion of $\Phi$ about
$v=0$, then promote it to an exact bound in \cref{prop:single_group_gap}.

\begin{proposition}[Leading-order gap]
\label{prop:leading_order_gap}
Let $\eta$ be the gradient-step size, $H=\nabla^2 L(\theta)$ the Hessian of the reference loss at $\theta$, and $\Sigma_G$ the within-group gradient covariance. Then the leading-order gap between $\mathcal{I}_{\mathrm{orcl}}(G,\theta)$ and $\Iproxy(G,\theta)$ is
\begin{equation}
\Iproxy(G,\theta) - \mathcal{I}_{\mathrm{orcl}}(G,\theta)
\;=\; \frac{\eta^2}{2}\,\Tr\!\big(H\,\Sigma_G\big) \;+\; O(\eta^3).
\label{eqn:appendix_leading_gap}
\end{equation}
In particular, the individual proxy oracle $\Iproxy$ and the true group oracle $\mathcal{I}_{\mathrm{orcl}}$
agree to first order in $\eta$, and their leading discrepancy is a second-order effect that grows
with both the loss curvature $H$ and the within-group gradient diversity $\Sigma_G$.
\end{proposition}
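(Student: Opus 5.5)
Using the boxed representations \eqref{eqn:orc_single_loss_change} and \eqref{eqn:orc_group_loss_change}, the gap is $\frac{1}{|G|}\sum_{z\in G}\Phi(\eta g_z)-\Phi(\eta g_G)$. I will Taylor-expand $\Phi$ about $v=0$ and compare the two sides order by order in $\eta$. Since $\Phi(v)=L(\theta-v)-L(\theta)$, we have $\Phi(0)=0$, $\nabla\Phi(0)=-\nabla L(\theta)$, and $\nabla^2\Phi(0)=H$. Assuming $L$ is three times continuously differentiable near $\theta$, this gives
\begin{equation*}
\Phi(v) \;=\; -\langle \nabla L(\theta), v\rangle + \tfrac12\, v^\top H v + R_3(v), \qquad |R_3(v)|\le C\norm{v}^3,
\end{equation*}
where $C$ depends on a bound for the third derivative of $L$ on a neighbourhood containing all the points $\theta-\eta g_z$.

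Next I substitute $v=\eta g_z$, average over $z\in G$, and subtract the expansion at $v=\eta g_G$. The zeroth-order terms vanish. The first-order terms cancel exactly, because $\frac{1}{|G|}\sum_z \langle\nabla L(\theta),\eta g_z\rangle=\langle \nabla L(\theta),\eta g_G\rangle$ by linearity. This cancellation gives the claimed first-order agreement of $\Iproxy$ and $\mathcal{I}_{\mathrm{orcl}}$.

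For the second-order term I use the bias--variance identity
\begin{equation*}
\frac{1}{|G|}\sum_{z\in G} g_z^\top H g_z - g_G^\top H g_G \;=\; \frac{1}{|G|}\sum_{z\in G}(g_z-g_G)^\top H (g_z-g_G) \;=\; \Tr(H\Sigma_G).
\end{equation*}
It follows by expanding $(g_z-g_G)^\top H(g_z-g_G)$: the cross terms average to $-2g_G^\top H g_G$, which combines with the $+g_G^\top Hg_G$ term. The trace form then follows from cyclicity of the trace and the definition of $\Sigma_G$. Multiplying by $\eta^2/2$ gives the leading term in \eqref{eqn:appendix_leading_gap}.

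The main obstacle is controlling the remainder uniformly, so that it is genuinely $O(\eta^3)$. The difference of remainders is bounded by $C\eta^3\big(\frac{1}{|G|}\sum_z\norm{g_z}^3+\norm{g_G}^3\big)$. This is $O(\eta^3)$ for a fixed group, since the gradients are fixed finite vectors at $\theta$. We need $\eta$ small enough that the segments $\theta-t\eta g_z$, $t\in[0,1]$, stay in the neighbourhood where the third-derivative bound holds. I will state these as regularity assumptions: $L$ is $C^3$ near $\theta$, and the model uses the single gradient-step oracle of the setup above. The sharper bound of \cref{prop:single_group_gap} would later replace the Taylor remainder with the integral form of the second-order mean value theorem together with $\norm{H}\le\beta$.
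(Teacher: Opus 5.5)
Your proposal is correct and follows the paper's proof essentially step for step. Like the paper, you expand $\Phi$ to second order about $v=0$, cancel the linear terms by linearity of the group average, and reduce the second-order bracket to $\Tr(H\Sigma_G)$ via the mean-plus-deviation identity and cyclicity of the trace; your explicit bound on the cubic remainder (a third-derivative constant on a neighbourhood containing the points $\theta-\eta g_z$, with $\eta$ small enough) makes precise what the paper leaves implicit in its $O(\eta^3)$.
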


\begin{proof} We begin by expanding $\mathcal{I}_{\mathrm{orcl}}(G,\theta)$ and $\Iproxy(G,\theta)$ using Lemma~\ref{lem:phi_expansion}.

\emph{Step 1. Expanding $\mathcal{I}_{\mathrm{orcl}}(G,\theta)$}. Using Lemma~\ref{lem:phi_expansion} we can write \cref{eqn:orc_group_loss_change} as:
\begin{align}
\mathcal{I}_{\mathrm{orcl}}(G,\theta) 
&\;=\; \Phi(\eta g_G) \nonumber \\
&\;=\; -\nabla L(\theta)^\top(\eta g_G) \;+\; \tfrac12(\eta g_G)^\top H(\eta g_G) \;+\; O(\norm{\eta g_G}^3) \quad \text{(Lemma~\ref{lem:phi_expansion})} \nonumber \\
&\;=\; -\eta\,\nabla L(\theta)^\top g_G \;+\; \frac{\eta^2}{2}\,g_G^\top H g_G \;+\; O(\eta^3). \label{eqn:expand_oracle_group}
\end{align}

\emph{Step 2. Expanding $\Iproxy(G,\theta)$.} Similarly, using Lemma~\ref{lem:phi_expansion} we can write \cref{eqn:orc_single_loss_change} as:

\begin{align}
\Iproxy(G,\theta) 
&\;=\; \frac{1}{|G|}\sum_{z\in G}\Phi(\eta g_z) \nonumber \\
&\;=\; \frac{1}{|G|}\sum_{z\in G} \left( -\eta\,\nabla L(\theta)^\top g_z \;+\; \frac{\eta^2}{2}\, g_z^\top H g_z \;+\; O(\eta^3) \right) \quad \text{(Lemma~\ref{lem:phi_expansion})} \nonumber \\
&\;=\; -\eta\,\nabla L(\theta)^\top g_G \;+\; \frac{\eta^2}{2}\cdot\frac{1}{|G|}\sum_{z\in G} g_z^\top H g_z \;+\; O(\eta^3). \label{eqn:expand_oracle_single}
\end{align}

To obtain the gap between $\Iproxy(G,\theta)$ and $\mathcal{I}_{\mathrm{orcl}}(G,\theta)$, we
subtract their expanded forms.

\emph{Step 3. Gap between $\Iproxy(G,\theta)$ and $\mathcal{I}_{\mathrm{orcl}}(G,\theta)$.} Subtracting
the group-oracle expansion  (Equation~\ref{eqn:expand_oracle_group}) from the individual proxy-oracle expansion (Equation~\ref{eqn:expand_oracle_single}) , we have:
\begin{align}
\Iproxy(G,\theta) - \mathcal{I}_{\mathrm{orcl}}(G,\theta)
&\;=\; \left( -\eta\,\nabla L(\theta)^\top g_G \;+\; \frac{\eta^2}{2}\cdot\frac{1}{|G|}\sum_{z\in G} g_z^\top H g_z \;+\; O(\eta^3) \right) \nonumber \\
&\qquad\;-\; \left( -\eta\,\nabla L(\theta)^\top g_G \;+\; \frac{\eta^2}{2}\,g_G^\top H g_G \;+\; O(\eta^3) \right) \nonumber \\
&\;=\; \frac{\eta^2}{2}\cdot\frac{1}{|G|}\sum_{z\in G} g_z^\top H g_z \;-\; \frac{\eta^2}{2}\,g_G^\top H g_G \;+\; O(\eta^3) \nonumber \\
&\;=\; \frac{\eta^2}{2}\left[\frac{1}{|G|}\sum_{z\in G} g_z^\top H g_z \;-\; g_G^\top H g_G\right] + O(\eta^3),
\label{eqn:appendix_bracket}
\end{align}

Note that the linear terms $-\eta\,\nabla L(\theta)^\top g_G$ are identical in both expansions (i.e., the two quantities agree to first order) and
therefore cancel.

\emph{Step 4. Within-group gradient variance.} Next, we simplify the bracketed term in \cref{eqn:appendix_bracket}. We begin by writing each gradient as the group mean plus a deviation,
\begin{equation}
g_z \;=\; g_G + \delta_z, \qquad \delta_z \overset{\text{def}}{=} g_z - g_G,
\label{eqn:appendix_deviation}
\end{equation}
where, by the definition $g_G = \frac{1}{|G|}\sum_{z\in G} g_z$, the deviations necessarily sum to zero: $\frac{1}{|G|}\sum_{z\in G}\delta_z = g_G - g_G = 0$, a fact we use below.

Next, we expand $g_z^\top H g_z$ using the symmetry of $H$ (note: the Hessian
$H=\nabla^2 L(\theta)$ is symmetric because $L$ is twice continuously differentiable),
\begin{align}
g_z^\top H g_z
&\;=\; (g_G+\delta_z)^\top H(g_G+\delta_z) \nonumber \\
&\;=\; g_G^\top H g_G \;+\; g_G^\top H\delta_z \;+\; \delta_z^\top H g_G \;+\; \delta_z^\top H\delta_z \nonumber \\
&\;=\; g_G^\top H g_G \;+\; 2\,\delta_z^\top H g_G \;+\; \delta_z^\top H\delta_z .
\end{align}
Averaging over $z\in G$, we obtain:
\begin{align}
\frac{1}{|G|}\sum_{z\in G} g_z^\top H g_z
&\;=\; g_G^\top H g_G \;+\; 2\Big(\tfrac{1}{|G|}\sum_{z\in G}\delta_z\Big)^{\!\top}\! H g_G \;+\; \frac{1}{|G|}\sum_{z\in G}\delta_z^\top H\delta_z \nonumber \\
&\;=\; g_G^\top H g_G \;+\; \frac{1}{|G|}\sum_{z\in G}\delta_z^\top H\delta_z \quad \left(\text{since $\frac{1}{|G|}\sum_{z\in G}\delta_z$ = 0}\right).
\end{align}
%Substituting this into the bracket of \cref{eqn:appendix_bracket} cancels $g_G^\top H g_G$ and isolates the average of $\delta_z^\top H\delta_z$. 
Applying the cyclic trace identity
$a^\top H a = \Tr(H\,a a^\top)$ with $a=\delta_z$, recasts the above equation as a trace:
\begin{align}
\frac{1}{|G|}\sum_{z\in G} g_z^\top H g_z \;-\; g_G^\top H g_G
&\;=\; \frac{1}{|G|}\sum_{z\in G}\delta_z^\top H\delta_z \nonumber \\
&\;=\; \frac{1}{|G|}\sum_{z\in G}\Tr\!\big(H\,\delta_z\delta_z^\top\big) \nonumber \\
&\;=\; \Tr\!\Big(H\cdot\tfrac{1}{|G|}\sum_{z\in G}\delta_z\delta_z^\top\Big) \nonumber \\
&\;=\; \Tr\!\big(H\,\Sigma_G\big),
\end{align}
where $\Sigma_G \overset{\text{def}}{=} \tfrac{1}{|G|}\sum_{z\in G}\delta_z\delta_z^\top = \frac{1}{|G|}\sum_{z\in G}(g_z-g_G)(g_z-g_G)^\top$ is the
within-group gradient covariance. Substituting this back into \cref{eqn:appendix_bracket},
\begin{equation}
\Iproxy(G,\theta) - \mathcal{I}_{\mathrm{orcl}}(G,\theta)
\;=\; \frac{\eta^2}{2}\,\Tr\!\big(H\,\Sigma_G\big) \;+\; O(\eta^3),
\end{equation}
which is \cref{eqn:appendix_leading_gap}. As the linear terms cancelled in Step~3, the leading
discrepancy is the $O(\eta^2)$ curvature term, so the two quantities agree to first order.
\end{proof}

The gap is therefore a second-order effect, governed by the product of the reference-loss
curvature $H$ and the within-group gradient covariance $\Sigma_G$: it vanishes when the group is
perfectly coherent ($\Sigma_G=0$) or the loss is locally flat ($H=0$).

\begin{lemma}[Quadratic expansion of $\Phi$]
\label{lem:phi_expansion}
Writing $H \;\overset{\text{def}}{=}\; \nabla^2 L(\theta)$ for the Hessian of the reference loss at
$\theta$, the function $\Phi$ in \cref{eqn:appendix_phi} admits the second-order expansion
\begin{equation}
\Phi(v) \;=\; -\,\nabla L(\theta)^\top v \;+\; \tfrac12\, v^\top H\, v \;+\; O(\norm{v}^3).
\label{eqn:appendix_phi_taylor}
\end{equation}
\end{lemma}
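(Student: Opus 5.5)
The expansion follows from Taylor's theorem applied to the composition $v\mapsto L(\theta-v)$. I will assume $L$ is $C^3$ in a neighborhood of $\theta$; the paper already uses $C^2$ for the symmetry of $H$, and $C^3$ is the natural strengthening needed for a cubic remainder. I then define $\psi(v)=L(\theta-v)$.

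By the chain rule, $\nabla\psi(0)=-\nabla L(\theta)$ and $\nabla^2\psi(0)=\nabla^2L(\theta)=H$, because the two sign flips cancel in the second derivative. Since $\Phi(v)=\psi(v)-\psi(0)$, the constant term $L(\theta)$ drops out. This leaves the linear term $-\nabla L(\theta)^\top v$ and the quadratic term $\tfrac12 v^\top H v$.

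For the remainder, I will use the integral (or Lagrange) form of Taylor's theorem along the segment $t\mapsto \theta-tv$, $t\in[0,1]$:
\[
R(v)=\tfrac16\,D^3L(\theta-\xi v)[v,v,v]\quad\text{for some }\xi\in(0,1).
\]
On a closed ball $\{\norm{v}\le r\}$ contained in the $C^3$ neighborhood, the third derivative tensor is continuous and hence bounded by a constant $M$. This gives $|R(v)|\le \tfrac{M}{6}\norm{v}^3$, which is exactly the $O(\norm{v}^3)$ claim as $v\to0$.

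The only delicate point is regularity. Twice differentiability alone gives only an $o(\norm{v}^2)$ remainder, so the cubic rate requires either $C^3$ or a Lipschitz Hessian. With a Hessian Lipschitz constant $\rho$, the integral form yields $|R(v)|\le\tfrac{\rho}{6}\norm{v}^3$ directly. I would state this hypothesis explicitly.

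I would also note how the lemma is used downstream. In the applications $v=\eta g_z$ or $v=\eta g_G$, and the gradients are fixed quantities at $\theta$, so $O(\norm{v}^3)=O(\eta^3)$ with a constant uniform over the finite group. This justifies the $O(\eta^3)$ terms in Proposition~\ref{prop:leading_order_gap}.
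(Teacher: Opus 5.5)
Your proof is correct and takes essentially the same route as the paper. Both assume $L$ is $C^3$ near $\theta$, Taylor-expand $L$ to second order about $\theta$, and substitute $x=\theta-v$; your chain-rule computation for $\psi(v)=L(\theta-v)$ is exactly this substitution, and your Lagrange bound $|R(v)|\le \tfrac{M}{6}\norm{v}^3$ on a closed ball just makes the paper's $O(\norm{v}^3)$ quantitative. The only slip is a sign: the remainder is $-\tfrac16 D^3L(\theta-\xi v)[v,v,v]$, because the third-order term picks up $(-1)^3$, and this does not affect the bound.
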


\begin{proof}
Recall from \cref{eqn:appendix_phi} that the loss-change function is
\begin{equation*}
\Phi(v) \;=\; L(\theta-v) - L(\theta).
\end{equation*}
The only term that depends on $v$ is the loss $L$ evaluated at the displaced parameters $\theta-v$,
so we expand $L$ via its general second-order Taylor expansion about the base point $\theta$.
Assuming $L$ is three-times continuously differentiable in a neighborhood of $\theta$, for any point
$x$ near $\theta$, the general second-order Taylor expansion is
\begin{equation}
L(x) \;=\; L(\theta) \;+\; \nabla L(\theta)^\top(x-\theta) \;+\; \tfrac12\,(x-\theta)^\top \nabla^2 L(\theta)\,(x-\theta) \;+\; O(\norm{x-\theta}^3).
\label{eqn:appendix_taylor_general}
\end{equation}
Setting $x=\theta-v$ we have
\begin{align*}
L(\theta-v)
&\;=\; L(\theta) \;+\; \nabla L(\theta)^\top(-v) \;+\; \tfrac12\,(-v)^\top \nabla^2 L(\theta)\,(-v) \;+\; O(\norm{v}^3) \\
&\;=\; L(\theta) \;-\; \nabla L(\theta)^\top v \;+\; \tfrac12\, v^\top H\, v \;+\; O(\norm{v}^3),
\end{align*}
where $H \overset{\text{def}}{=} \nabla^2 L(\theta)$ is the Hessian of the reference loss $L$ evaluated
at $\theta$.
Substituting this back into $\Phi(v) = L(\theta-v) - L(\theta)$ from \cref{eqn:appendix_phi} gives
\begin{equation}
\Phi(v) \;=\; -\,\nabla L(\theta)^\top v \;+\; \tfrac12\, v^\top H\, v \;+\; O(\norm{v}^3),
\end{equation}
which is \cref{eqn:appendix_phi_taylor}.
\end{proof}
\subsection{Proof of the exact bound}

The leading-order statement of \cref{prop:leading_order_gap} becomes the exact inequality of
\cref{prop:single_group_gap} under $\beta$-smoothness. 

%The argument rests on an exact, second-order form of the Jensen gap of the loss-change function $\Phi$ over the group, which we isolate as a lemma.

\begin{stmtframe}
\noindent\textbf{Proposition~\ref{prop:single_group_gap} (Individual-vs-Group Oracle gap, restated).}\itshape\quad
If the reference loss is $\beta$-smooth (i.e.\ $\norm{H}\le\beta$, where $H$ is its Hessian), then
\begin{equation*}
\big|\,\Iproxy(G, \theta) - \mathcal{I}_{\mathrm{orcl}}(G, \theta)\,\big| \;\le\; \frac{\beta\,\eta^2}{2}\,\sigma_G^2,
\end{equation*}
where $\sigma_G^2 = \frac{1}{|G|}\sum_{z\in G}\norm{g_z - g_G}^2$ is the within-group diversity of the per-example gradients.
\end{stmtframe}

\begin{proof}[Proof]
First, write $m\overset{\text{def}}{=}\eta g_G$ for the mean of the points. Substituting
the boxed identities \cref{eqn:orc_single_loss_change,eqn:orc_group_loss_change} for $\Iproxy(G,\theta)$
and $\mathcal{I}_{\mathrm{orcl}}(G,\theta)$, the Jensen gap of $\Phi$ over those points is:
\begin{align}
\Iproxy(G,\theta) - \mathcal{I}_{\mathrm{orcl}}(G,\theta)
&\;=\; \frac{1}{|G|}\sum_{z\in G}\Phi(\eta g_z) \;-\; \Phi(\eta g_G) \nonumber \\
&\;=\; \frac{1}{|G|}\sum_{z\in G}\Phi(\eta g_z) \;-\; \Phi(m) \label{eqn:appendix_jensen_gap} \\
&\;=\; \frac{1}{2|G|}\sum_{z\in G}(\eta g_z - m)^\top \nabla^2\Phi(\xi_z)\,(\eta g_z - m). \quad \text{(Lemma~\ref{lem:jensen_gap})} \label{eqn:appendix_quadratic_remainder}
\end{align}

Here, Lemma~\ref{lem:jensen_gap} applies directly to \cref{eqn:appendix_jensen_gap}, and $\xi_z = m + t_z(\eta g_z - m)$ with $t_z\in[0,1]$ are points on the segment between $m$ and
$\eta g_z$.

\emph{Step 1. Bound each quadratic form.} By \cref{lem:phi_expansion} the Hessian of $\Phi$ is
$\nabla^2\Phi(\cdot)=H(\cdot)$, and $\beta$-smoothness of $L$ means $\norm{H(\cdot)}\le\beta$ at every
point, including each $\xi_z$; thus $\norm{\nabla^2\Phi(\xi_z)}\le\beta$. For any symmetric matrix $A$
and vector $u$, the operator-norm inequality gives $|u^\top A u|\le\norm{A}\,\norm{u}^2$, so with
$u=\eta g_z - m$,
\begin{equation}
\big|(\eta g_z - m)^\top \nabla^2\Phi(\xi_z)\,(\eta g_z - m)\big|
\;\le\; \beta\,\norm{\eta g_z - m}^2.
\end{equation}
Applying the triangle inequality to the average in \cref{eqn:appendix_quadratic_remainder},
\begin{align}
\big|\Iproxy(G,\theta) - \mathcal{I}_{\mathrm{orcl}}(G,\theta)\big|
&\;\le\; \frac{1}{2|G|}\sum_{z\in G}\beta\,\norm{\eta g_z - m}^2 \nonumber \\
&\;=\; \frac{\beta}{2}\cdot\frac{1}{|G|}\sum_{z\in G}\norm{\eta g_z - \eta g_G}^2.
\end{align}

\emph{Step 2. Identify the variance.} Factoring out $\eta^2$ and using the definition
$\sigma_G^2 \overset{\text{def}}{=} \frac{1}{|G|}\sum_{z\in G}\norm{g_z-g_G}^2$ of the within-group
gradient diversity,
\begin{align}
\frac{1}{|G|}\sum_{z\in G}\norm{\eta g_z - \eta g_G}^2
&\;=\; \eta^2\cdot\frac{1}{|G|}\sum_{z\in G}\norm{g_z - g_G}^2 \nonumber \\
&\;=\; \eta^2\,\sigma_G^2.
\end{align}
Combining with Step~1,
\begin{equation}
\big|\Iproxy(G,\theta) - \mathcal{I}_{\mathrm{orcl}}(G,\theta)\big|
\;\le\; \frac{\beta\,\eta^2}{2}\,\sigma_G^2,
\end{equation}
which is \cref{eqn:single_group_gap}. The bound is exact in $\eta$; the agreement to first order
follows separately from \cref{prop:leading_order_gap}.
\end{proof}

\begin{lemma}[Exact second-order Jensen gap]
\label{lem:jensen_gap}
Let $\Phi:\mathbb{R}^d\to\mathbb{R}$ be the loss-change function \cref{eqn:appendix_phi}, and assume
it is twice continuously differentiable. For a group $G$ with per-example gradients $\{g_z\}_{z\in G}$,
mean gradient $g_G=\frac{1}{|G|}\sum_{z\in G}g_z$, and step $m\overset{\text{def}}{=}\eta g_G$, there
exist points $\xi_z = m + t_z(\eta g_z - m)$ with $t_z\in[0,1]$, each on the segment between $m$ and
$\eta g_z$, such that
\begin{equation}
\frac{1}{|G|}\sum_{z\in G}\Phi(\eta g_z) \;-\; \Phi(m)
\;=\; \frac{1}{2|G|}\sum_{z\in G}(\eta g_z - m)^\top \nabla^2\Phi(\xi_z)\,(\eta g_z - m).
\label{eqn:appendix_jensen_lemma}
\end{equation}
\end{lemma}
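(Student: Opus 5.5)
The plan is to expand each $\Phi(\eta g_z)$ around the common mean point $m=\eta g_G$ using Taylor's theorem with a Lagrange remainder, and then average over the group. The first-order terms will cancel because the deviations from the mean sum to zero. Write $u_z \overset{\text{def}}{=} \eta g_z - m = \eta(g_z - g_G)$.

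For each $z\in G$, consider the one-variable function $\varphi_z(t)=\Phi(m+t u_z)$ on $[0,1]$. Since $\Phi$ is $C^2$, so is $\varphi_z$, with
\[
\varphi_z'(t)=\nabla\Phi(m+tu_z)^\top u_z, \qquad \varphi_z''(t)=u_z^\top\nabla^2\Phi(m+tu_z)\,u_z .
\]
Taylor's theorem with Lagrange remainder in one variable gives some $t_z\in[0,1]$ such that
\[
\varphi_z(1)=\varphi_z(0)+\varphi_z'(0)+\tfrac12\varphi_z''(t_z).
\]
Rewritten in terms of $\Phi$, this reads
\[
\Phi(\eta g_z)=\Phi(m)+\nabla\Phi(m)^\top u_z+\tfrac12\,u_z^\top\nabla^2\Phi(\xi_z)\,u_z,
\qquad \xi_z=m+t_z u_z .
\]
The point $\xi_z$ lies on the segment between $m$ and $\eta g_z$, as the lemma requires. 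In the degenerate case $u_z=0$, any $t_z$ works and both sides reduce to $\Phi(m)$.

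Next I average these identities over $z\in G$ and subtract $\Phi(m)$. The linear term becomes $\nabla\Phi(m)^\top\big(\frac{1}{|G|}\sum_z u_z\big)$. This vanishes because
\[
\frac{1}{|G|}\sum_z u_z=\eta\Big(\frac{1}{|G|}\sum_z g_z-g_G\Big)=0,
\]
which is the same cancellation used in Step 4 of the proof of Proposition~\ref{prop:leading_order_gap}. What remains is exactly $\frac{1}{2|G|}\sum_z u_z^\top\nabla^2\Phi(\xi_z)u_z$, which is \cref{eqn:appendix_jensen_lemma}.

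I expect no real obstacle. The only point that needs care is that the multivariate mean value theorem does not hold in equality form for vector-valued maps. That is why I reduce to the scalar restriction $\varphi_z$ along each segment, where the Lagrange form is valid. The resulting $\xi_z$ depend on $z$, which the statement explicitly permits.
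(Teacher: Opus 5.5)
Your proposal is correct and follows the paper's proof: Taylor-expand each $\Phi(\eta g_z)$ about $m=\eta g_G$ with a Lagrange remainder, average over $G$, and use $\frac{1}{|G|}\sum_{z\in G}(\eta g_z - m)=0$ to cancel the linear term. The only difference is that you derive the multivariate Lagrange form by restricting to the segment, $\varphi_z(t)=\Phi(m+t u_z)$, whereas the paper cites it directly; this is a welcome bit of extra rigor, not a different route.
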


\begin{proof}
We recall Taylor's theorem with the Lagrange form of the remainder. For a twice continuously differentiable $f:\mathbb{R}^d\to\mathbb{R}$ and any two points
$a,b\in\mathbb{R}^d$, there exists a point $\xi$ on the line segment joining $a$ and $b$, that is,
$\xi=a+t(b-a)$ for some $t\in[0,1]$, such that
\begin{equation}
f(b) \;=\; f(a) \;+\; \nabla f(a)^\top(b-a) \;+\; \tfrac12\,(b-a)^\top \nabla^2 f(\xi)\,(b-a).
\label{eqn:appendix_taylor_lagrange}
\end{equation}
We apply \cref{eqn:appendix_taylor_lagrange} to $f=\Phi$, with $a=m$ and $b=\eta g_z$ for each
$z\in G$. This produces, for each $z$, a point $\xi_z = m + t_z(\eta g_z - m)$, $t_z\in[0,1]$, on the
segment between $m$ and $\eta g_z$ such that
\begin{equation}
\Phi(\eta g_z)
\;=\; \Phi(m) \;+\; \nabla\Phi(m)^\top(\eta g_z - m) \;+\; \tfrac12\,(\eta g_z - m)^\top \nabla^2\Phi(\xi_z)\,(\eta g_z - m).
\label{eqn:appendix_exact_taylor}
\end{equation}
No higher-order terms are dropped: this is an equality, with the second-order term evaluated at
$\xi_z$. Averaging over $z\in G$, the linear term vanishes because, with $m=\eta g_G$, the deviations
sum to zero:
\begin{align*}
\frac{1}{|G|}\sum_{z\in G}\nabla\Phi(m)^\top(\eta g_z - m)
&\;=\; \nabla\Phi(m)^\top\Big(\tfrac{1}{|G|}\sum_{z\in G}\eta g_z - m\Big) \\
&\;=\; \nabla\Phi(m)^\top(\eta g_G - m) \\
&\;=\; 0.    
\end{align*}

Hence $\frac{1}{|G|}\sum_{z\in G}\Phi(\eta g_z) = \Phi(m) + \frac{1}{2|G|}\sum_{z\in G}(\eta g_z - m)^\top \nabla^2\Phi(\xi_z)(\eta g_z - m)$,
and subtracting $\Phi(m)$ gives \cref{eqn:appendix_jensen_lemma}.
\end{proof}

\newpage
\section{Additional Figures and Tables}\label{appendix:tables_figures}

This section contains supporting plots and figures.

\subsection{Synthetic Data Curation Full Results}\label{appendix:tables_figures_collapse}
This section shows the full results from the experiments in Section~\ref{sec:data_influence_methods}.

\begin{table}[h]
\centering
\footnotesize
\setlength{\tabcolsep}{4pt}
\caption{Per-task accuracy at the final step (200 steps) for the Repro-400M continuation, by selection method and rephrase factor $r$. Best average within each $r$ is in \textbf{bold}, and second best is \underline{underlined}.}
\label{tab:collapse_results}
\begin{tabular}{lccccccccc}
\toprule
Method & HellaSwag & ARC-C & PIQA & MMLU & CSQA & LAMBADA & SQuAD & CoQA & Avg \\
\midrule
\multicolumn{10}{l}{\textit{Rephrase factor $r=1$}} \\
\midrule
Random & 30.95 & 26.96 & 66.16 & 23.19 & 19.41 & 41.04 & 9.82 & 25.52 & 30.38 \\
GradSim & 30.81 & 26.28 & 65.34 & 22.98 & 21.21 & 36.37 & 3.65 & 24.69 & 28.92 \\
FineWeb-Edu & 30.89 & 25.51 & 64.53 & 22.87 & 19.16 & 38.09 & 17.79 & 23.60 & 30.31 \\
BM25 & 31.14 & 26.45 & 65.34 & 23.21 & 19.66 & 38.15 & 5.40 & 26.68 & 29.50 \\
GMRel ($\tau\!\approx\!1.01$) & 30.41 & 25.94 & 65.02 & 23.15 & 19.82 & 36.76 & 25.17 & 24.62 & \underline{31.36} \\
GREATS & 30.76 & 26.79 & 65.13 & 23.85 & 20.48 & 38.87 & 24.20 & 24.87 & \textbf{31.87} \\
\midrule
\multicolumn{10}{l}{\textit{Rephrase factor $r=10$}} \\
\midrule
Random & 31.09 & 26.45 & 65.13 & 24.10 & 20.88 & 35.49 & 6.23 & 27.24 & 29.58 \\
GradSim & 30.77 & 27.39 & 64.15 & 24.48 & 19.57 & 30.76 & 5.86 & 21.34 & 28.04 \\
FineWeb-Edu & 30.75 & 26.54 & 63.87 & 22.97 & 20.15 & 34.29 & 7.24 & 23.47 & 28.66 \\
BM25 & 30.41 & 26.30 & 63.35 & 22.81 & 19.88 & 33.95 & 7.13 & 23.37 &  28.40 \\
GMRel ($\tau\!\approx\!1.01$) & 30.77 & 26.45 & 64.20 & 24.44 & 19.82 & 32.66 & 20.61 & 24.21 & \textbf{30.40} \\
GREATS & 30.52 & 26.09 & 65.89 & 24.17 & 19.74 & 35.55 & 16.49 & 23.69 & \underline{30.27} \\
\midrule
\multicolumn{10}{l}{\textit{Rephrase factor $r=50$}} \\
\midrule
Random & 31.77 & 27.13 & 64.09 & 25.66 & 21.62 & 28.00 & 6.13 & 20.42 & \textbf{28.10} \\
GradSim & 31.12 & 26.19 & 62.24 & 24.28 & 23.42 & 25.33 & 7.24 & 16.09 & 26.99 \\
FineWeb-Edu & 30.22 & 25.85 & 62.84 & 26.03 & 20.97 & 25.58 & 6.47 & 16.36 & 26.79 \\
BM25 & 29.97 & 26.02 & 63.06 & 24.38 & 20.39 & 29.67 & 11.66 & 19.03 & \underline{28.02} \\
GMRel ($\tau\!\approx\!1.01$) & 31.01 & 26.88 & 63.44 & 24.24 & 20.23 & 28.53 & 3.92 & 23.09 & 27.67 \\
%GMRel ($\tau\!\approx\!0.3$) & 30.53 & 25.85 & 63.06 & 24.18 & 21.54 & 28.68 & 16.92 & 21.00 & \textbf{28.97} \\
GREATS & 30.50 & 26.28 & 62.95 & 25.13 & 21.13 & 29.54 & 6.50 & 21.20 & 27.90 \\
\midrule
Baseline & 30.82 & 25.85 & 64.36 & 23.94 & 19.82 & 37.30 & 7.47 & 27.61 & 29.65 \\
\bottomrule
\end{tabular}
\end{table}
\newpage

\subsection{Group versus Individual Proxy Oracle Gap Across Group Sizes}\label{appendix:tables_figures_divsweep}
This section contains the full sweep of the discrepancy between the group and individual proxy oracle scores across group sizes, summarized in Figure~\ref{fig:divergence_sweep_main} of the main body.
\begin{figure}[h]
\centering
\begin{subfigure}[t]{0.24\linewidth}
    \centering
    \includegraphics[width=\linewidth]{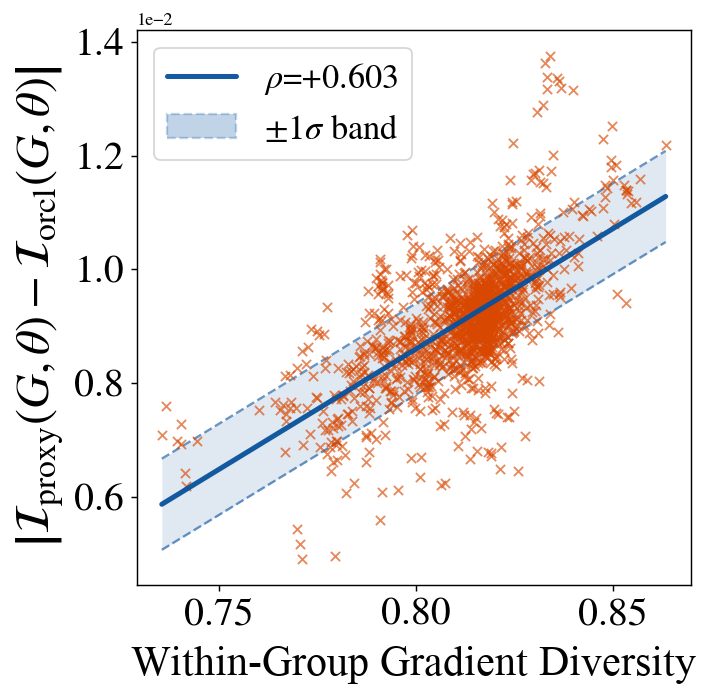}
    \caption{Pre-train (size 5120).}
    \label{fig:divsweep_repro_n5120}
\end{subfigure}
\begin{subfigure}[t]{0.24\linewidth}
    \centering
    \includegraphics[width=\linewidth]{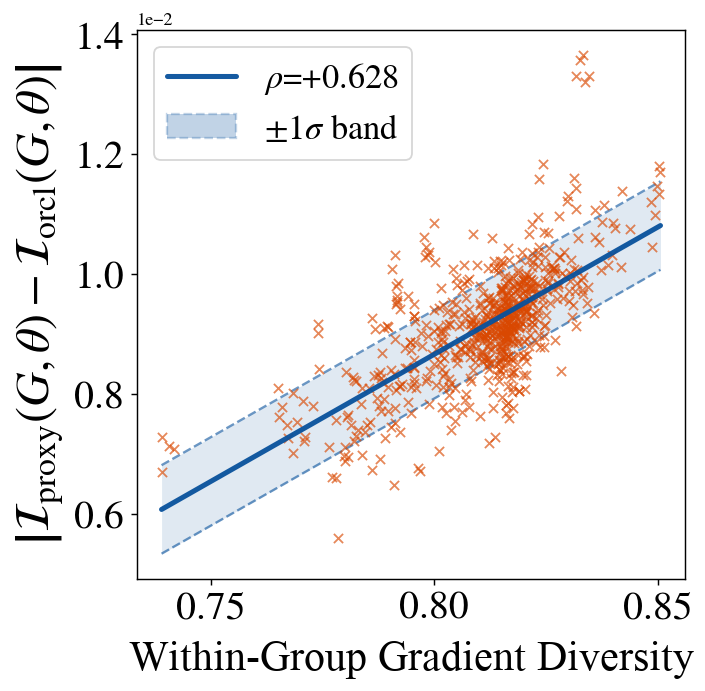}
    \caption{Pre-train (size 10240).}
    \label{fig:divsweep_repro_n10240}
\end{subfigure}
\begin{subfigure}[t]{0.24\linewidth}
    \centering
    \includegraphics[width=\linewidth]{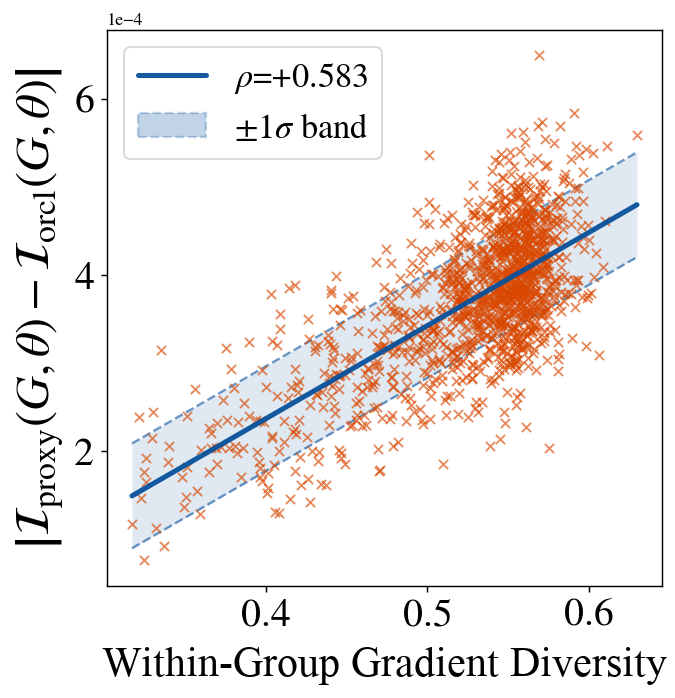}
    \caption{Post-train (size 128).}
    \label{fig:divsweep_goose_n128}
\end{subfigure}
\begin{subfigure}[t]{0.24\linewidth}
    \centering
    \includegraphics[width=\linewidth]{images/background/gradiv_cos_goose_n512_relabel.png}
    \caption{Post-train (size 512).}
    \label{fig:divsweep_goose_n512}
\end{subfigure}
\caption{Discrepancy between the group oracle $\mathcal{I}_{\mathrm{orcl}}(G, \theta)$ and individual proxy oracle $\Iproxy(G_j, \theta)$ scores across varying sizes of both the synthetic pretraining and the RLVR data groups.}
\label{fig:divergence_sweep}
\end{figure}

\begin{figure}[h]
\centering
\begin{subfigure}[t]{0.24\linewidth}
    \centering
    \includegraphics[width=\linewidth]{images/background/single_vs_group_repro_n5120_relabel.png}
    \caption{Pre-train (size 5120).}
    \label{fig:single_vs_group_repro_n5120}
\end{subfigure}
\begin{subfigure}[t]{0.24\linewidth}
    \centering
    \includegraphics[width=\linewidth]{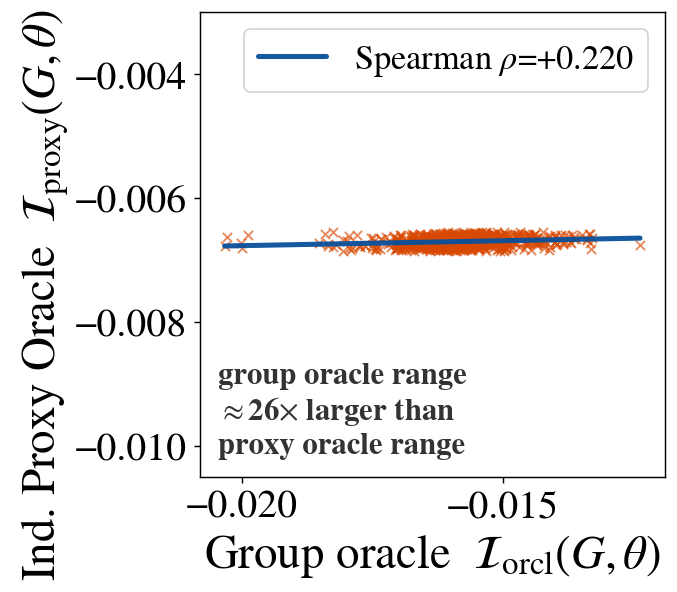}
    \caption{Pre-train (size 10240).}
    \label{fig:single_vs_group_repro_n10240}
\end{subfigure}
\begin{subfigure}[t]{0.24\linewidth}
    \centering
    \includegraphics[width=\linewidth]{images/background/single_vs_group_goose_n128_relabel.png}
    \caption{Post-train (size 128).}
    \label{fig:single_vs_group_goose_n128}
\end{subfigure}
\begin{subfigure}[t]{0.24\linewidth}
    \centering
    \includegraphics[width=\linewidth]{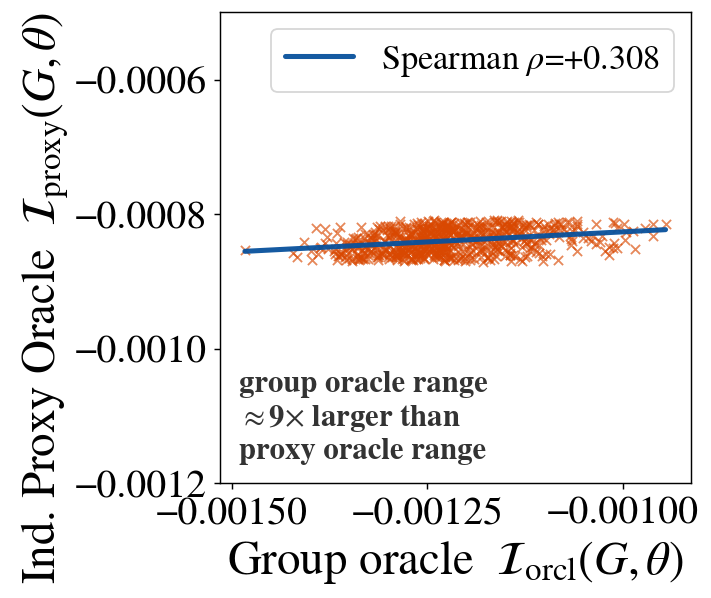}
    \caption{Post-train (size 512).}
    \label{fig:single_vs_group_goose_n512}
\end{subfigure}
\caption{Range of group oracle and individual proxy oracle scores across our constructed groups, for each group size in the pre-training and post-training settings.}
\label{fig:single_vs_group_divergence}
\end{figure}

\subsection{Cluster Topic Composition}\label{appendix:tables_figures_cluster_topics}
\begin{figure*}[h]
\centering
\begin{subfigure}[t]{0.45\textwidth}
    \centering
    \includegraphics[width=\linewidth]{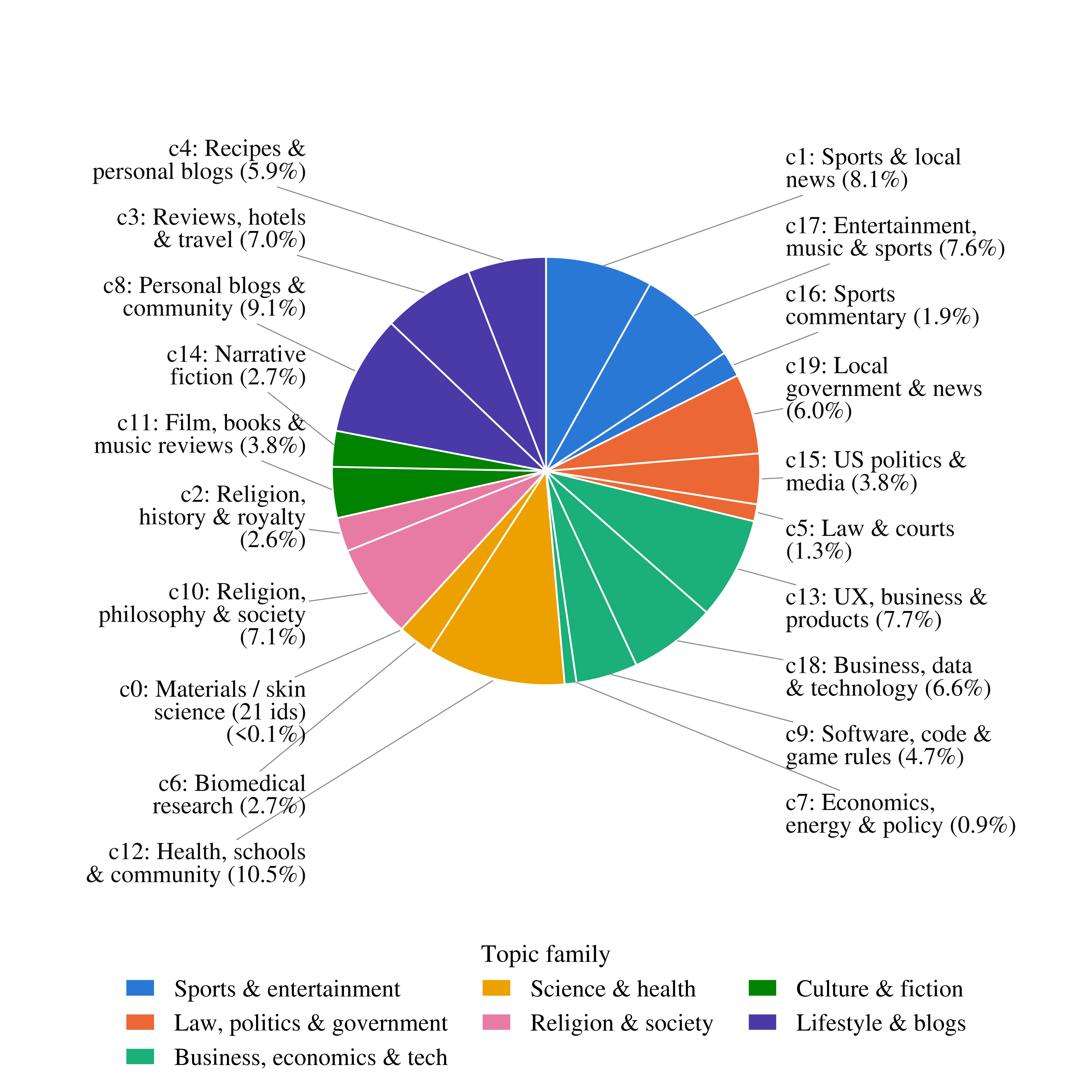}
    \caption{Pre-training (Repro).}
    \label{fig:cluster_topics_pie_repro}
\end{subfigure}
\begin{subfigure}[t]{0.45\textwidth}
    \centering
    \includegraphics[width=\linewidth]{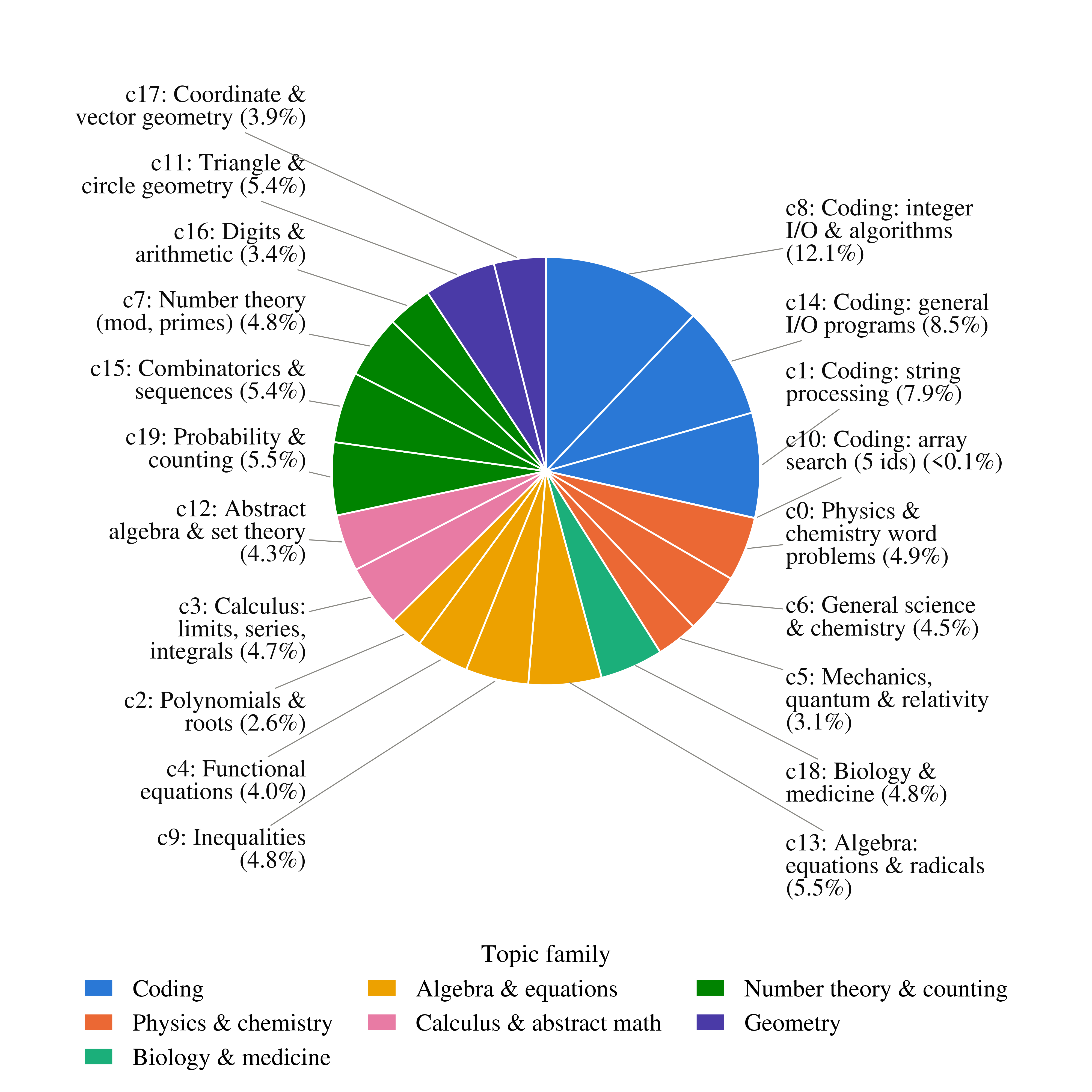}
    \caption{RLVR (GooseReason).}
    \label{fig:cluster_topics_pie_goose}
\end{subfigure}
\caption{Topic composition of the $k=20$ semantic clusters used for proxy-matched diversity sampling. Each slice is one cluster, labeled with its topic and share of the corpus, and colored by topic family.}
\label{fig:cluster_topics_pie}
\end{figure*}

\newpage

\subsection{Additional Diversity Plots}\label{appendix:tables_figures_diversity}
\begin{figure*}[h]
\centering
\begin{subfigure}[t]{0.22\textwidth}
    \centering
    \includegraphics[width=\linewidth]{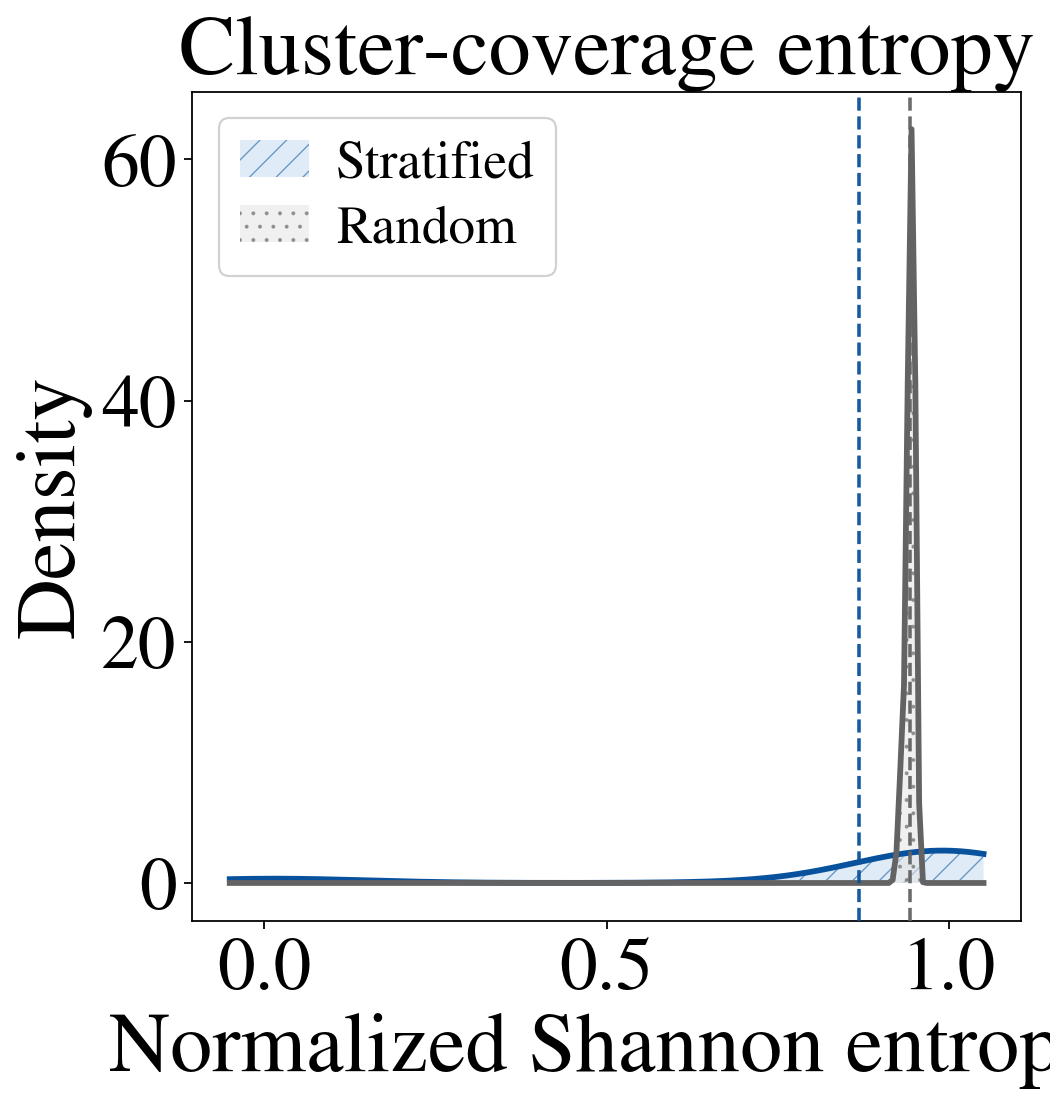}
    \caption{Cluster entropy.}
    \label{fig:divmetric_repro_cluster_entropy}
\end{subfigure}
\hfill
\begin{subfigure}[t]{0.26\textwidth}
    \centering
    \includegraphics[width=\linewidth]{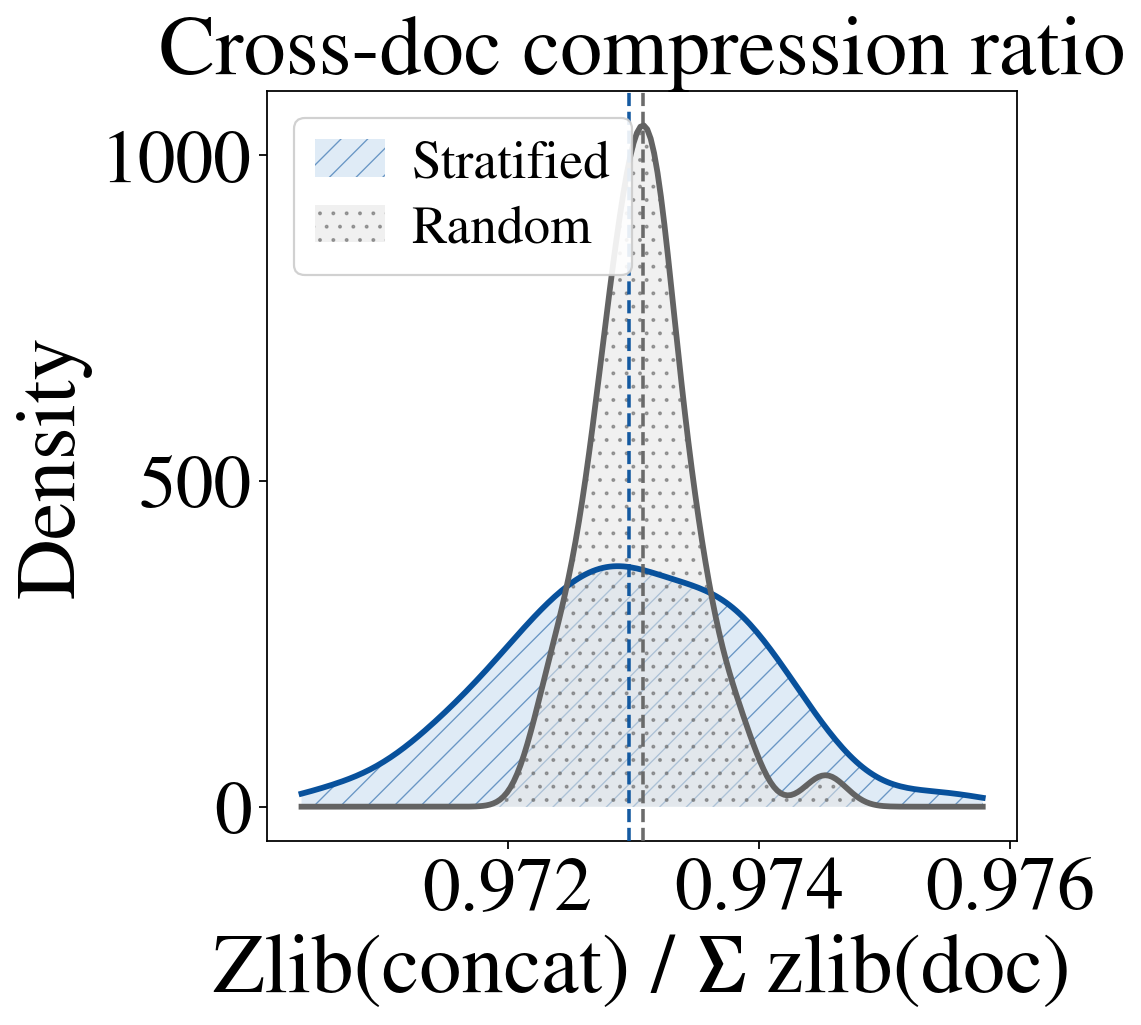}
    \caption{Compression ratio.}
    \label{fig:divmetric_repro_comp_ratio}
\end{subfigure}
\hfill
\begin{subfigure}[t]{0.24\textwidth}
    \centering
    \includegraphics[width=\linewidth]{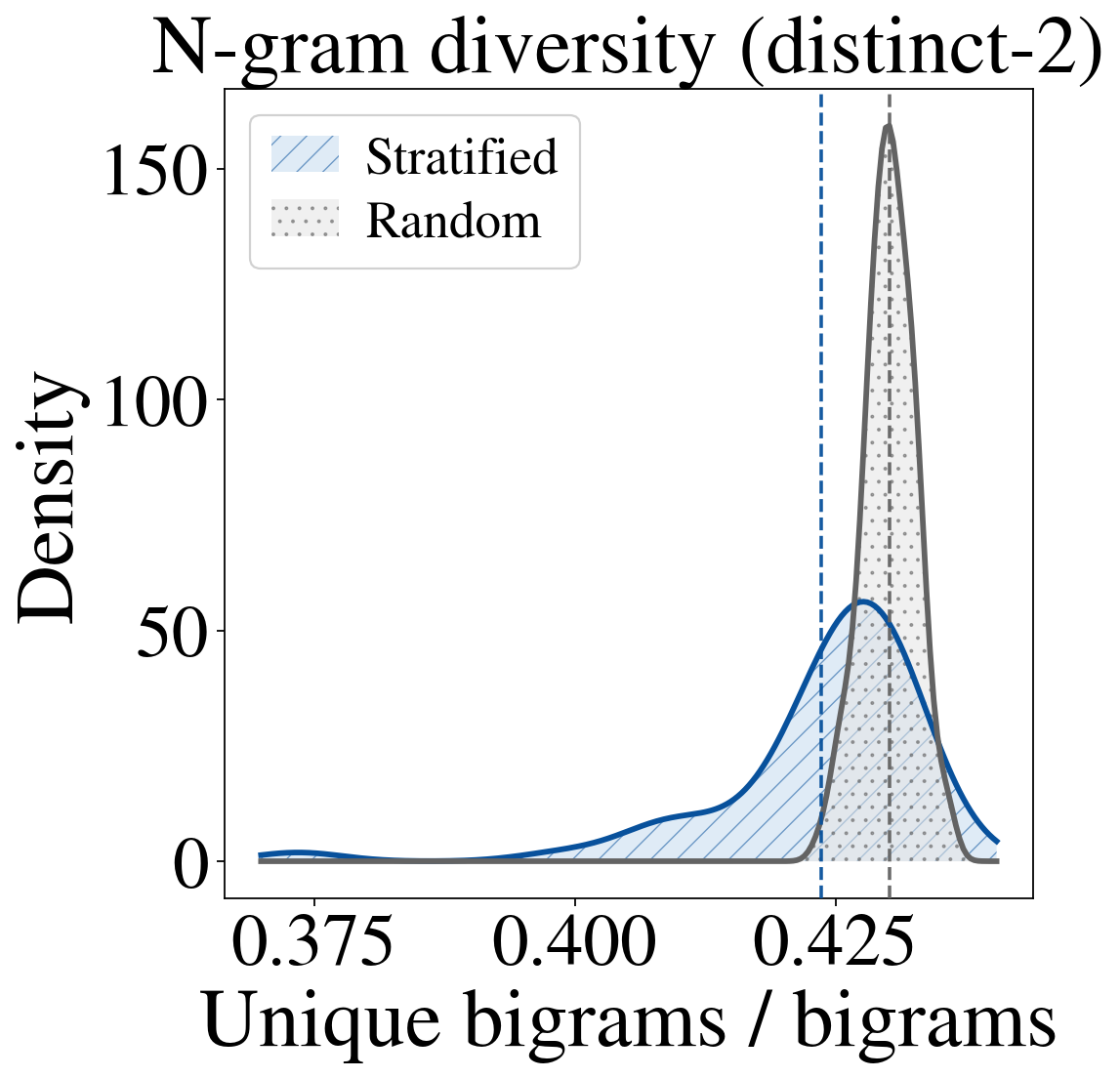}
    \caption{Distinct-2.}
    \label{fig:divmetric_repro_distinct_2}
\end{subfigure}
\hfill
\begin{subfigure}[t]{0.24\textwidth}
    \centering
    \includegraphics[width=\linewidth]{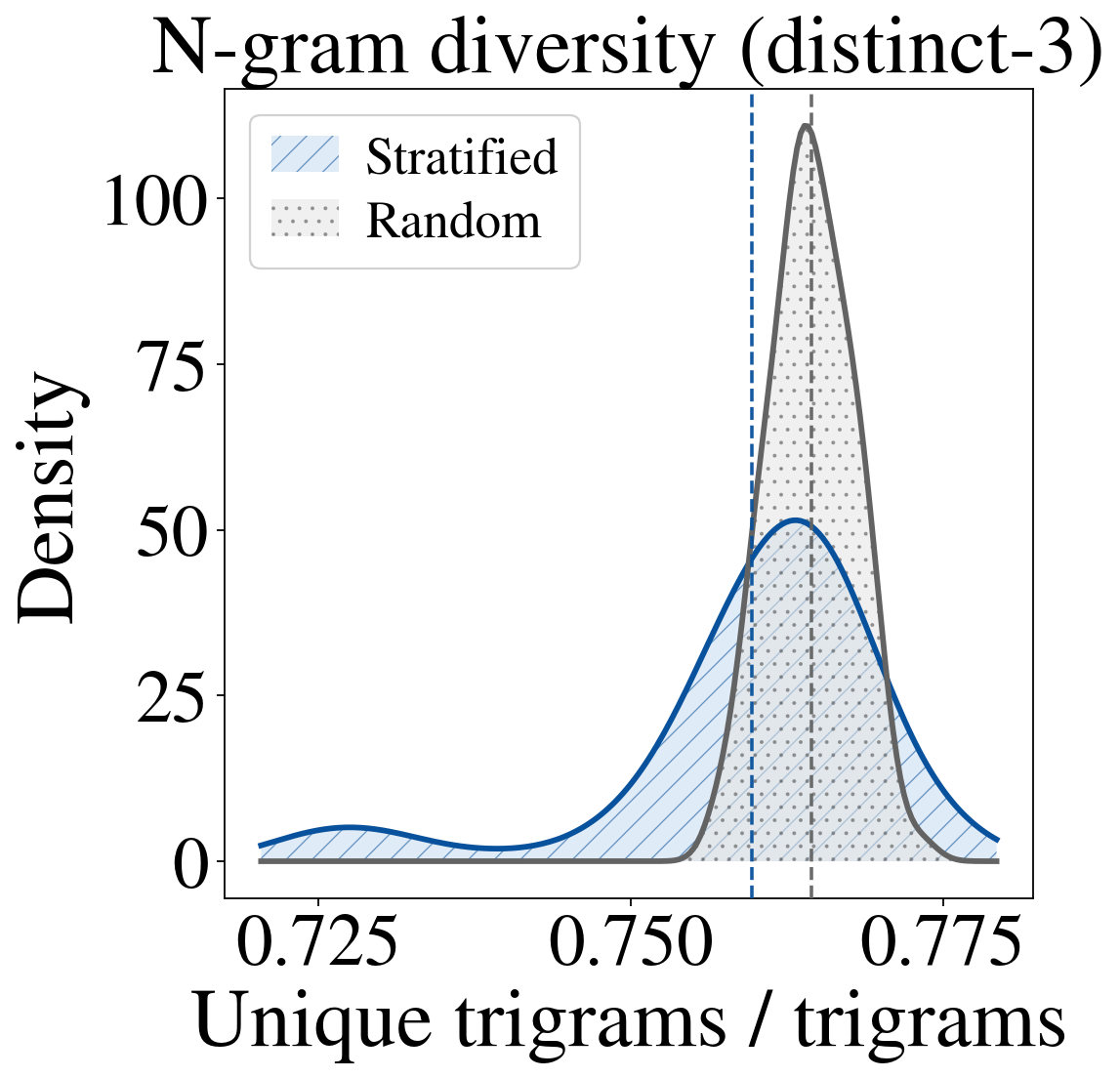}
    \caption{Distinct-3.}
    \label{fig:divmetric_repro_distinct_3}
\end{subfigure}

\vspace{0.5em}
\begin{subfigure}[t]{0.24\textwidth}
    \centering
    \includegraphics[width=\linewidth]{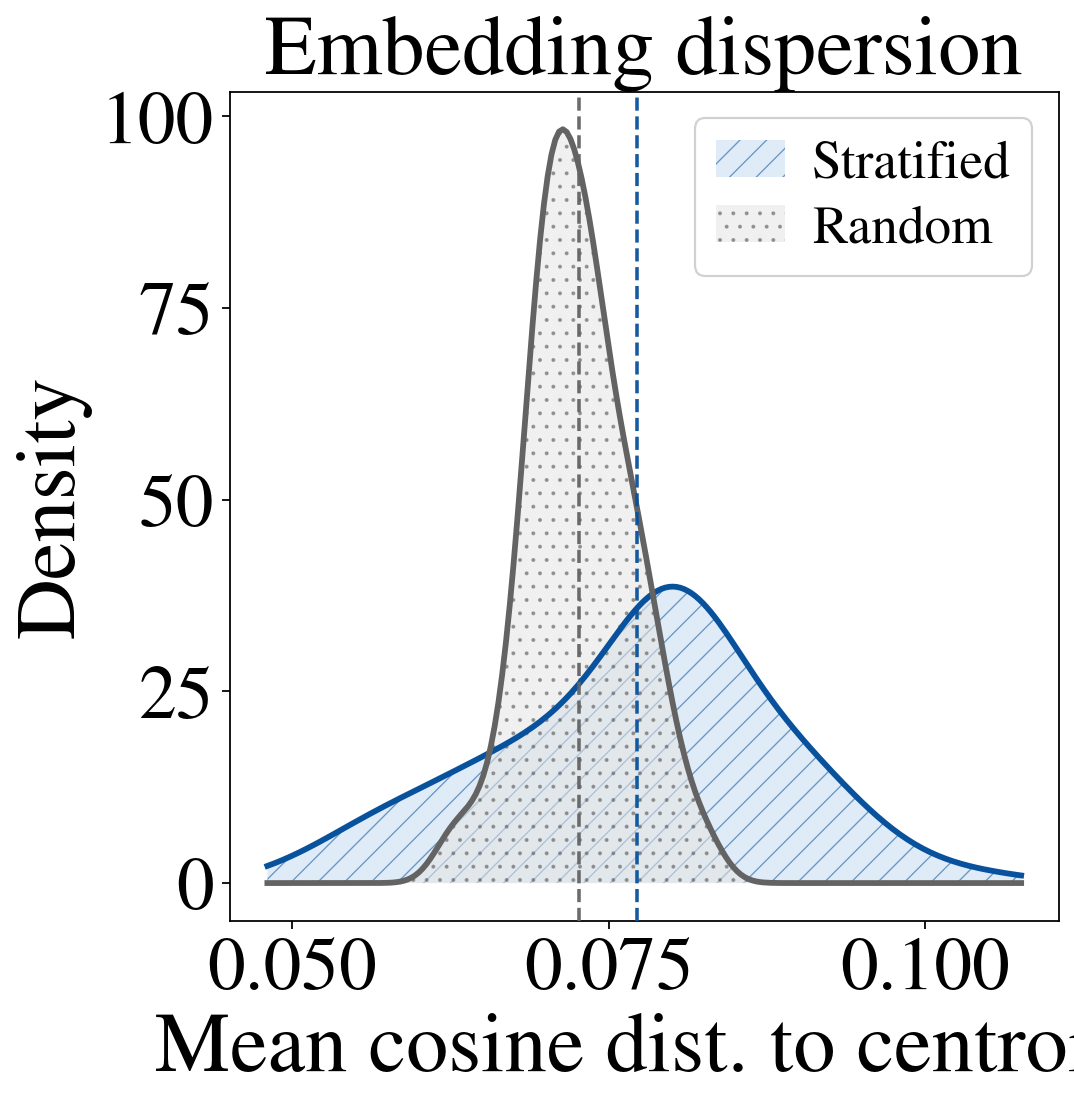}
    \caption{Embedding dispersion.}
    \label{fig:divmetric_repro_emb_dispersion}
\end{subfigure}
\hfill
\begin{subfigure}[t]{0.25\textwidth}
    \centering
    \includegraphics[width=\linewidth]{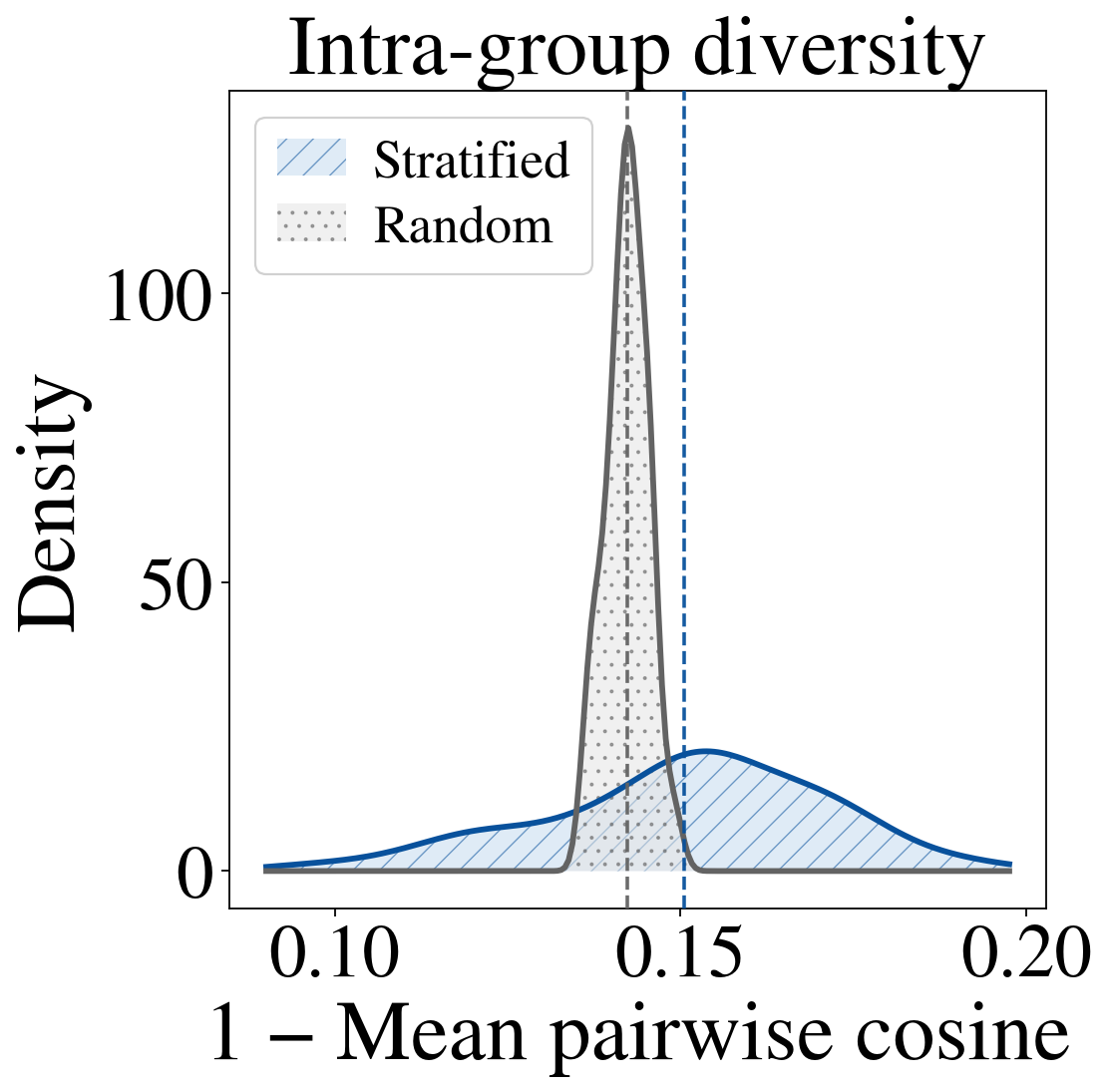}
    \caption{Intra-group diversity.}
    \label{fig:divmetric_repro_intragroup_diversity}
\end{subfigure}
\hfill
\begin{subfigure}[t]{0.24\textwidth}
    \centering
    \includegraphics[width=\linewidth]{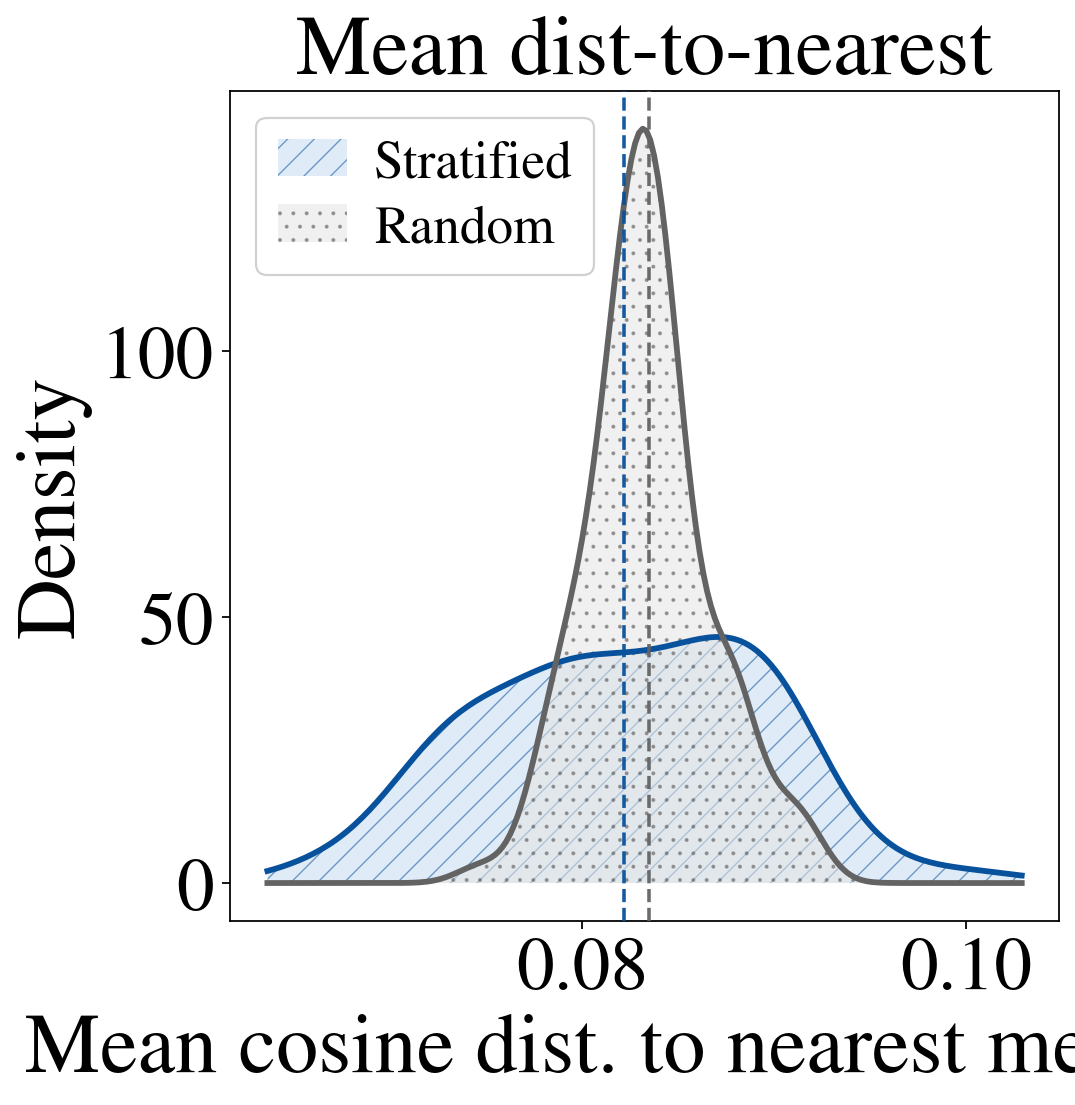}
    \caption{Mean distance to nearest.}
    \label{fig:divmetric_repro_mean_dist_to_nearest}
\end{subfigure}
\hfill
\begin{subfigure}[t]{0.23\textwidth}
    \centering
    \includegraphics[width=\linewidth]{images/background/diversity_metrics/repro/vendi.png}
    \caption{Vendi score.}
    \label{fig:divmetric_repro_vendi}
\end{subfigure}
\caption{Diversity metrics on our constructed groups versus randomly formed groups in the pre-training setting.}
\label{fig:diversity_metrics_repro}
\end{figure*}

\begin{figure*}[h]
\centering
\begin{subfigure}[t]{0.23\textwidth}
    \centering
    \includegraphics[width=\linewidth]{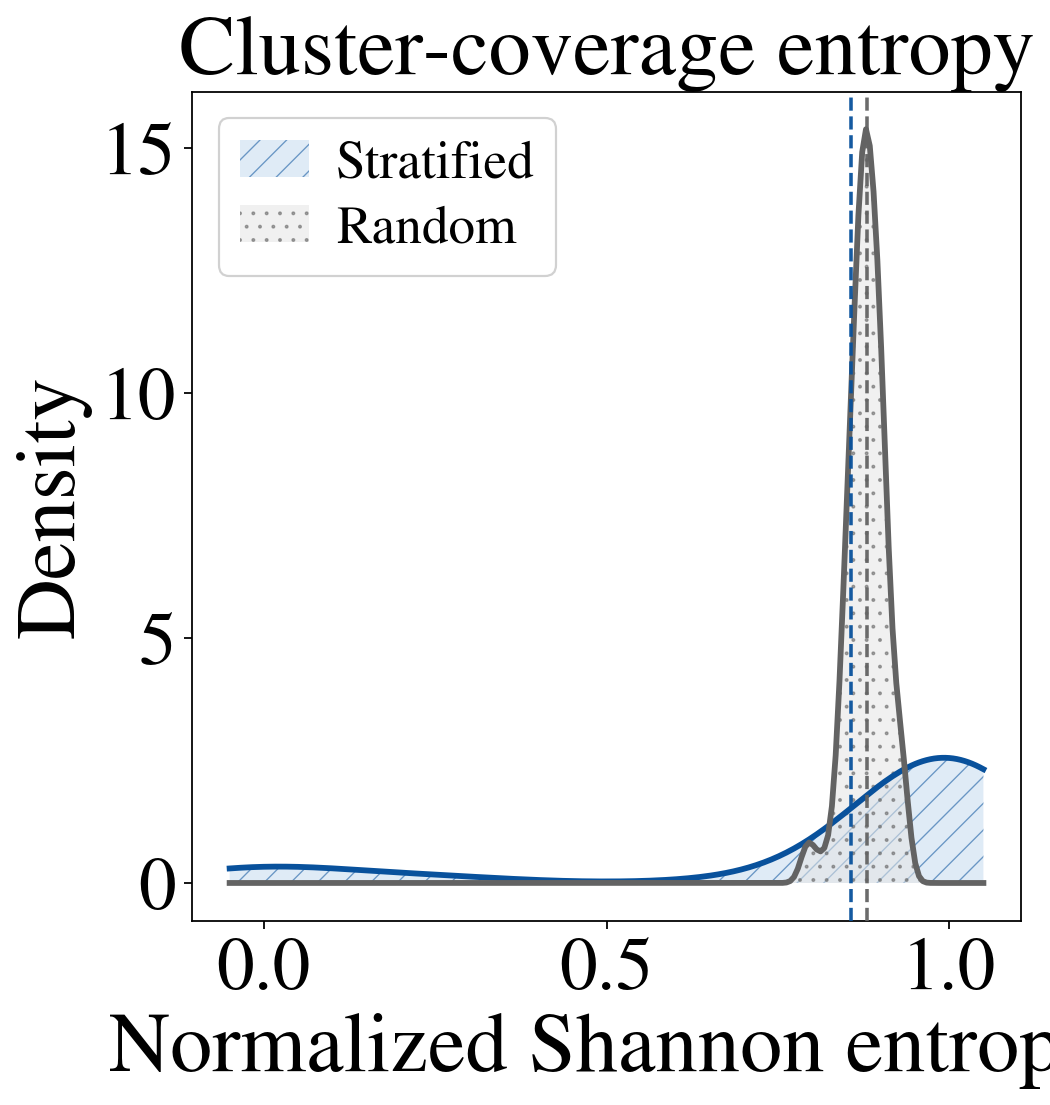}
    \caption{Cluster entropy.}
    \label{fig:divmetric_goose_cluster_entropy}
\end{subfigure}
\hfill
\begin{subfigure}[t]{0.24\textwidth}
    \centering
    \includegraphics[width=\linewidth]{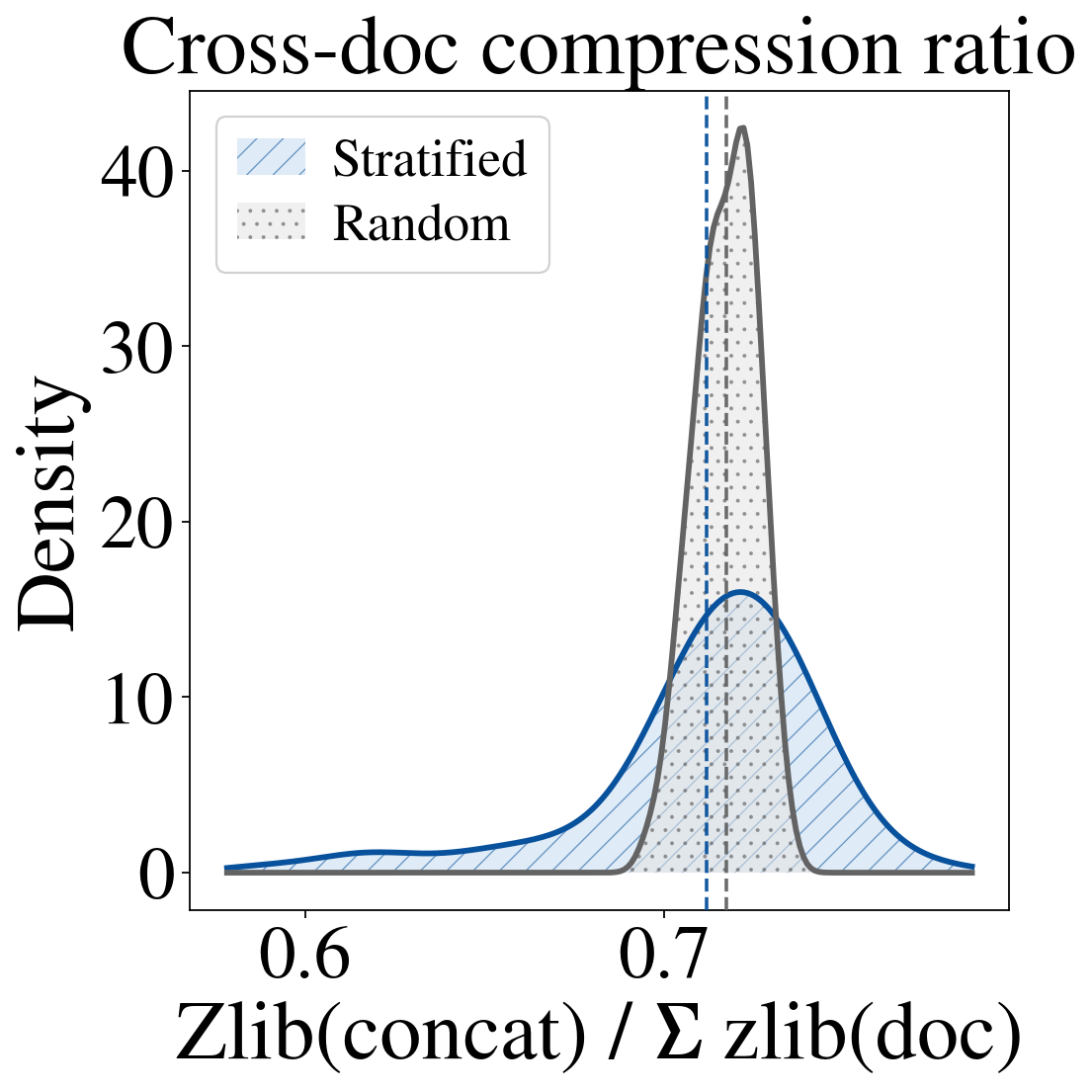}
    \caption{Compression ratio.}
    \label{fig:divmetric_goose_comp_ratio}
\end{subfigure}
\hfill
\begin{subfigure}[t]{0.24\textwidth}
    \centering
    \includegraphics[width=\linewidth]{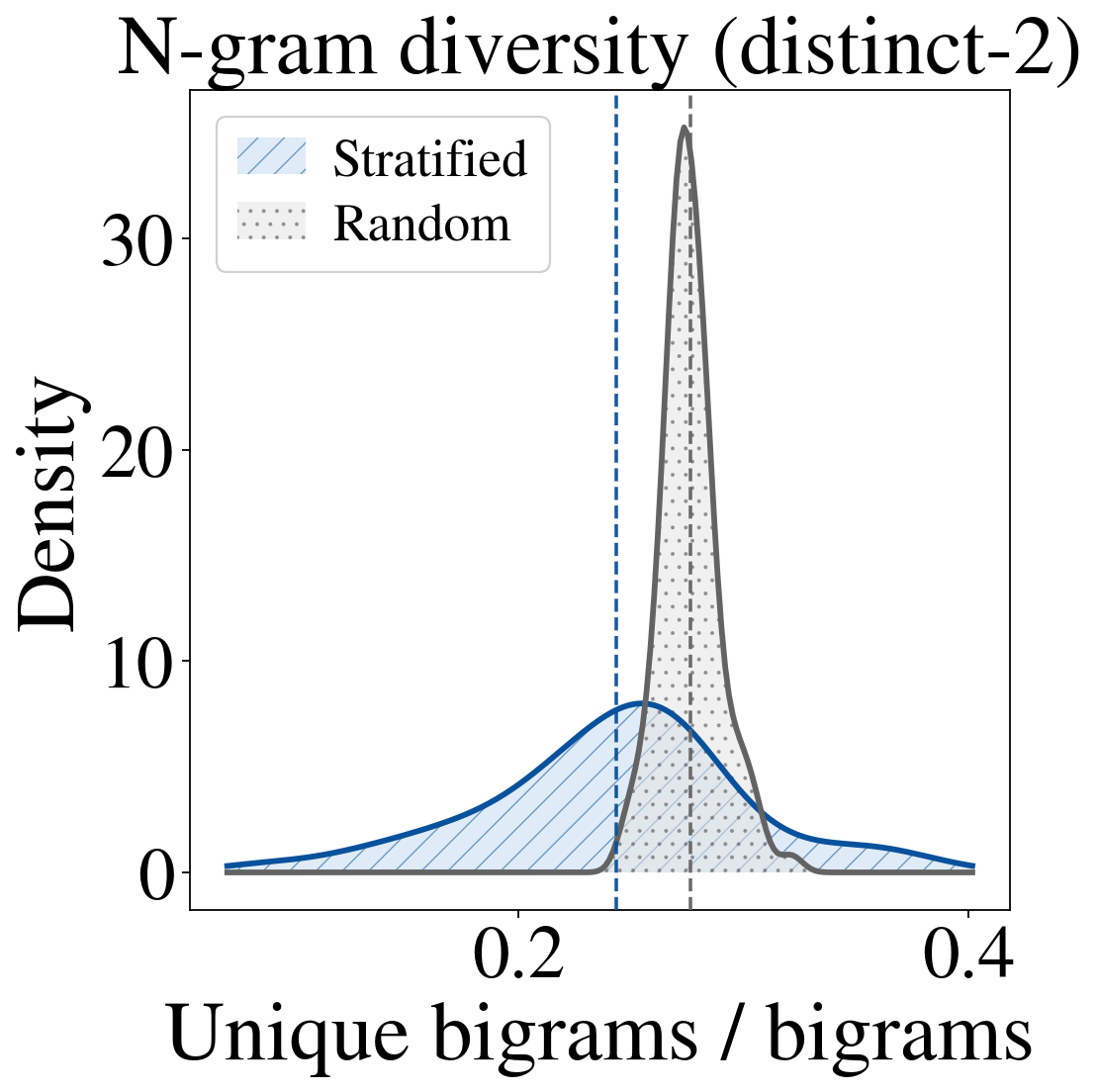}
    \caption{Distinct-2.}
    \label{fig:divmetric_goose_distinct_2}
\end{subfigure}
\hfill
\begin{subfigure}[t]{0.24\textwidth}
    \centering
    \includegraphics[width=\linewidth]{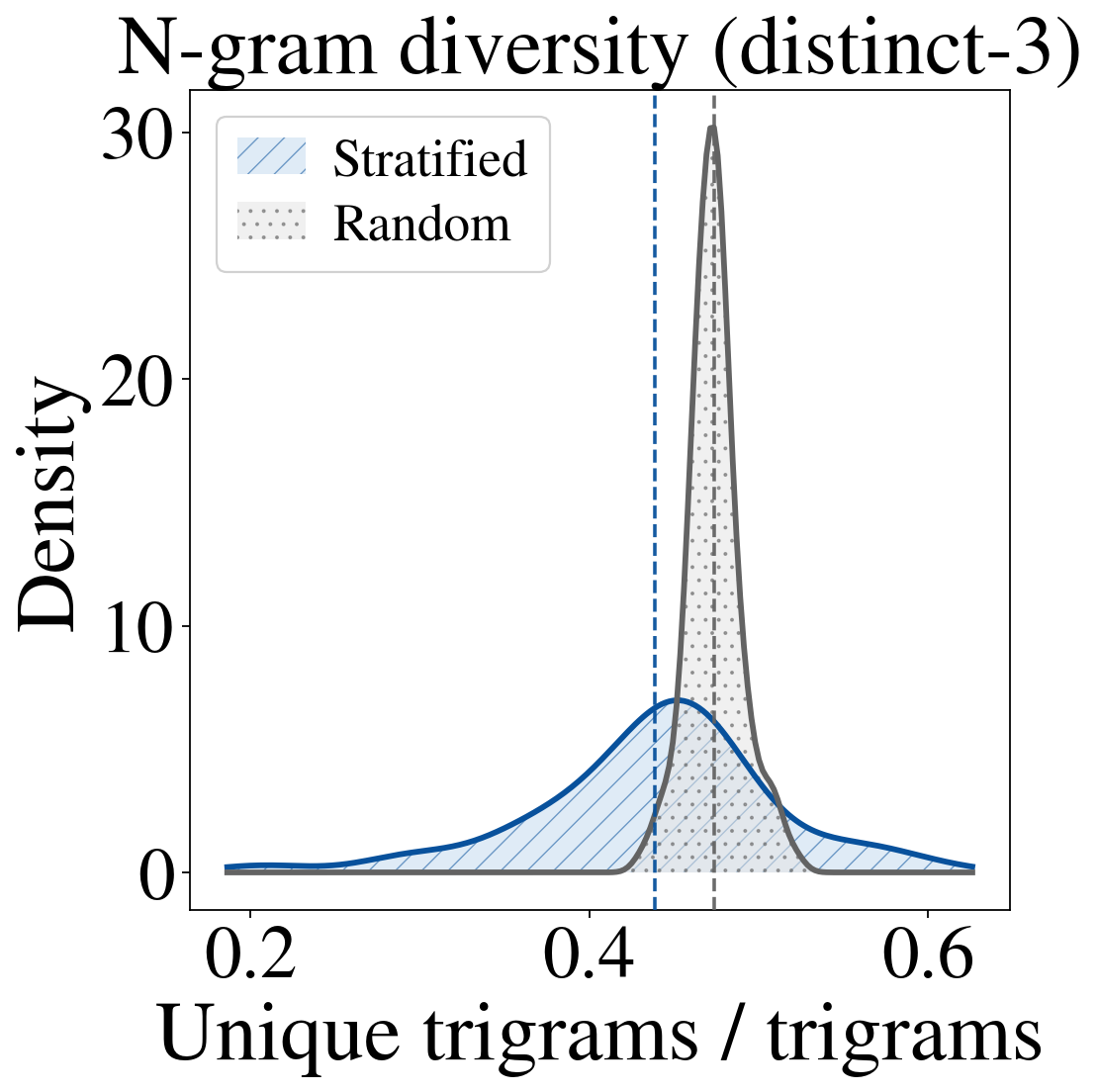}
    \caption{Distinct-3.}
    \label{fig:divmetric_goose_distinct_3}
\end{subfigure}

\vspace{0.5em}
\begin{subfigure}[t]{0.24\textwidth}
    \centering
    \includegraphics[width=\linewidth]{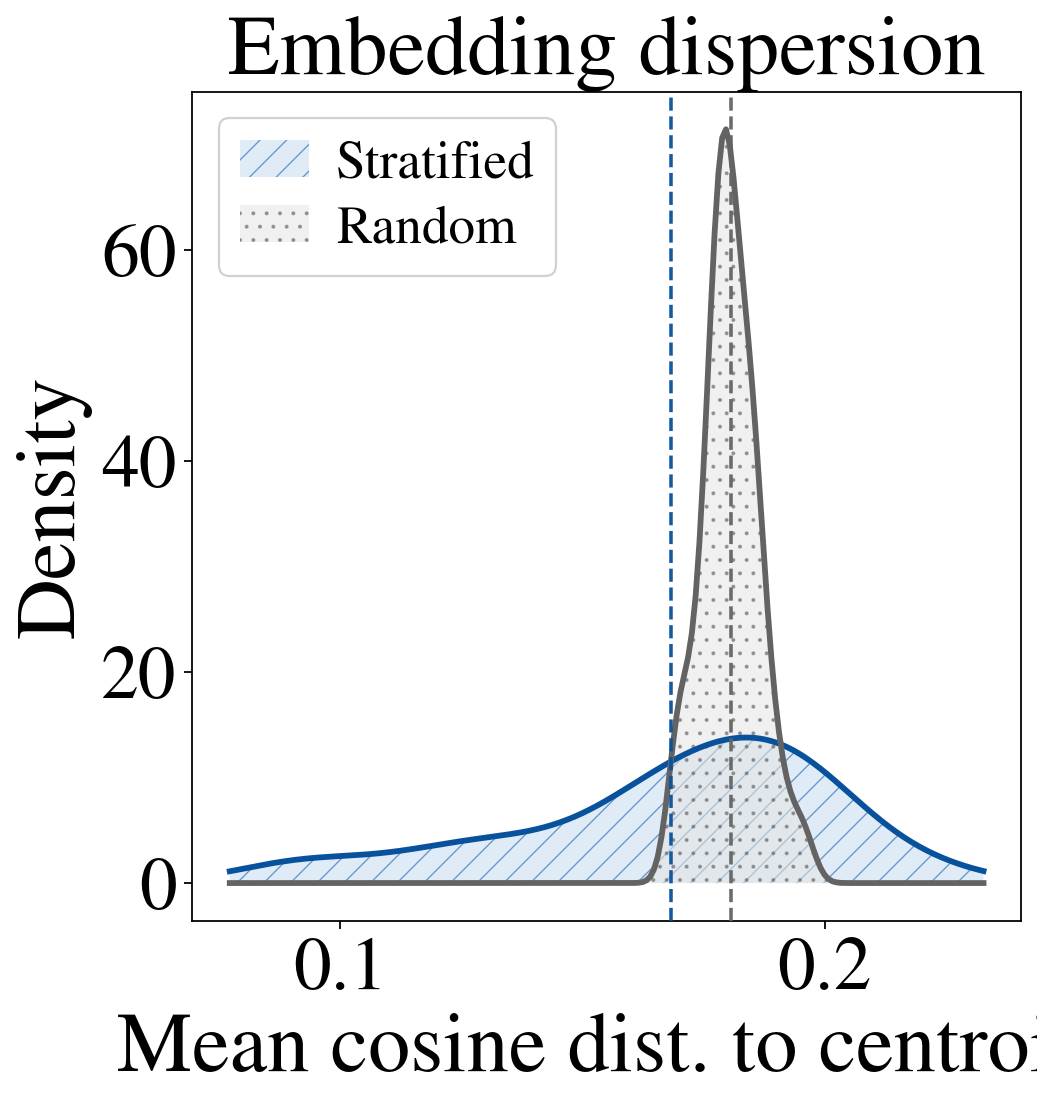}
    \caption{Embedding dispersion.}
    \label{fig:divmetric_goose_emb_dispersion}
\end{subfigure}
\hfill
\begin{subfigure}[t]{0.24\textwidth}
    \centering
    \includegraphics[width=\linewidth]{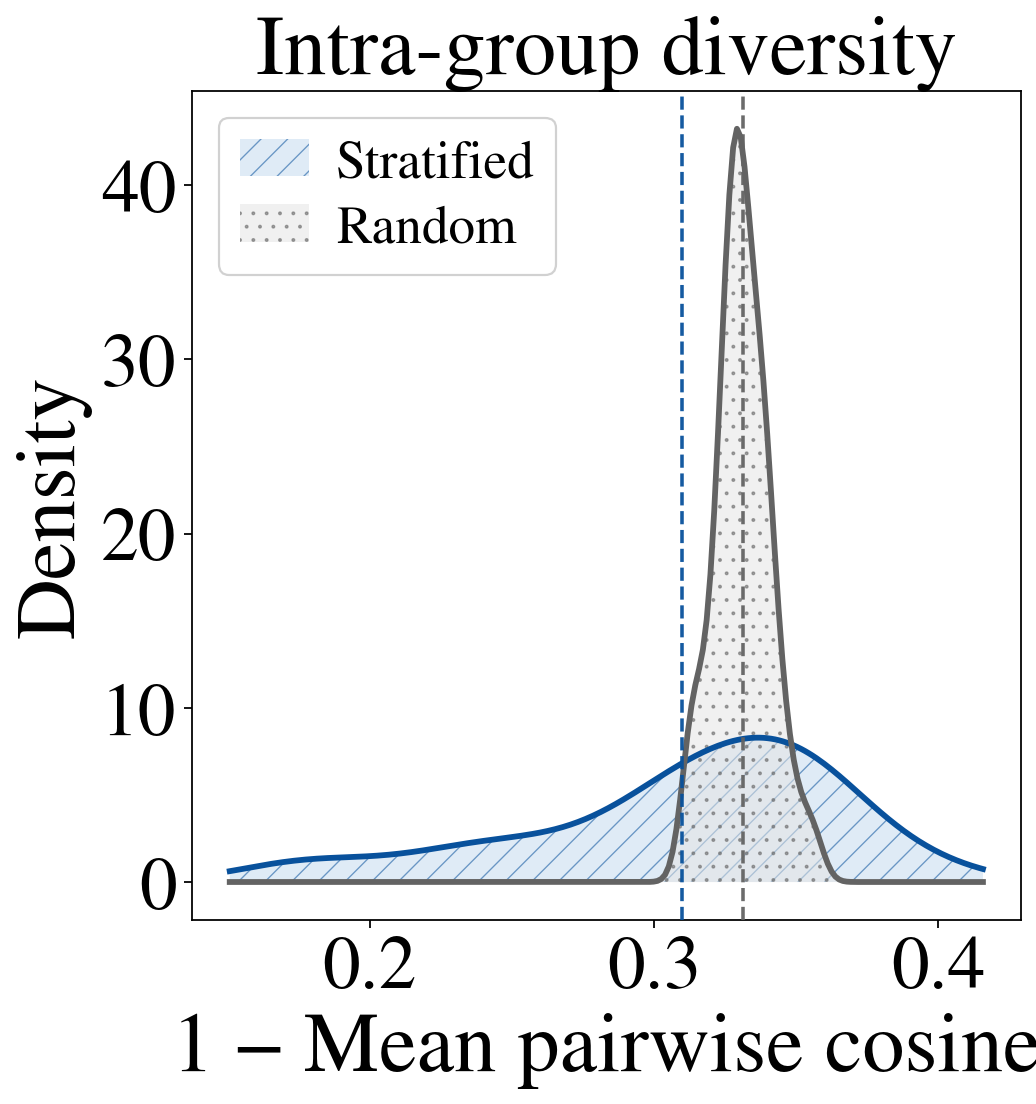}
    \caption{Intra-group diversity.}
    \label{fig:divmetric_goose_intragroup_diversity}
\end{subfigure}
\hfill
\begin{subfigure}[t]{0.24\textwidth}
    \centering
    \includegraphics[width=\linewidth]{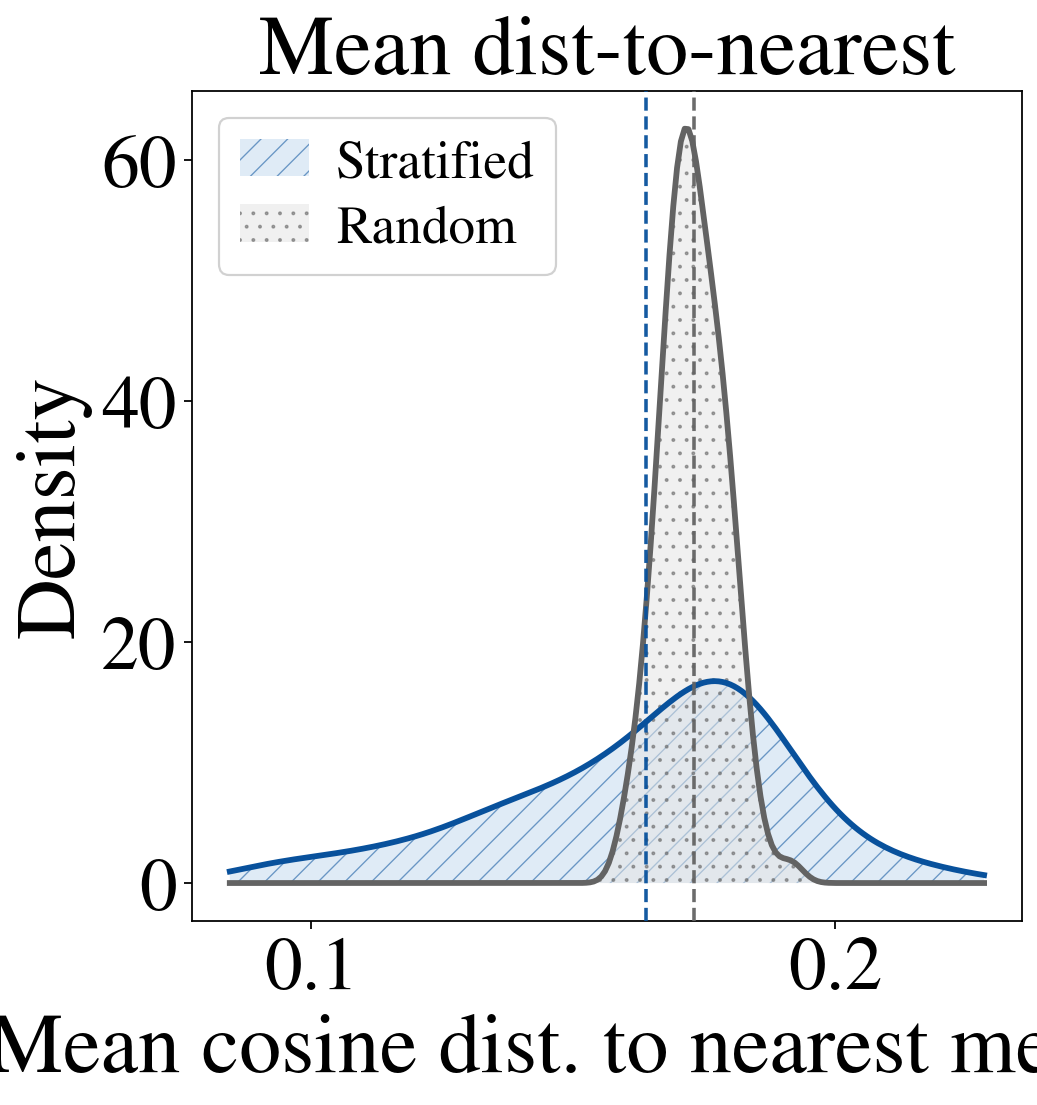}
    \caption{Mean distance to nearest.}
    \label{fig:divmetric_goose_mean_dist_to_nearest}
\end{subfigure}
\hfill
\begin{subfigure}[t]{0.24\textwidth}
    \centering
    \includegraphics[width=\linewidth]{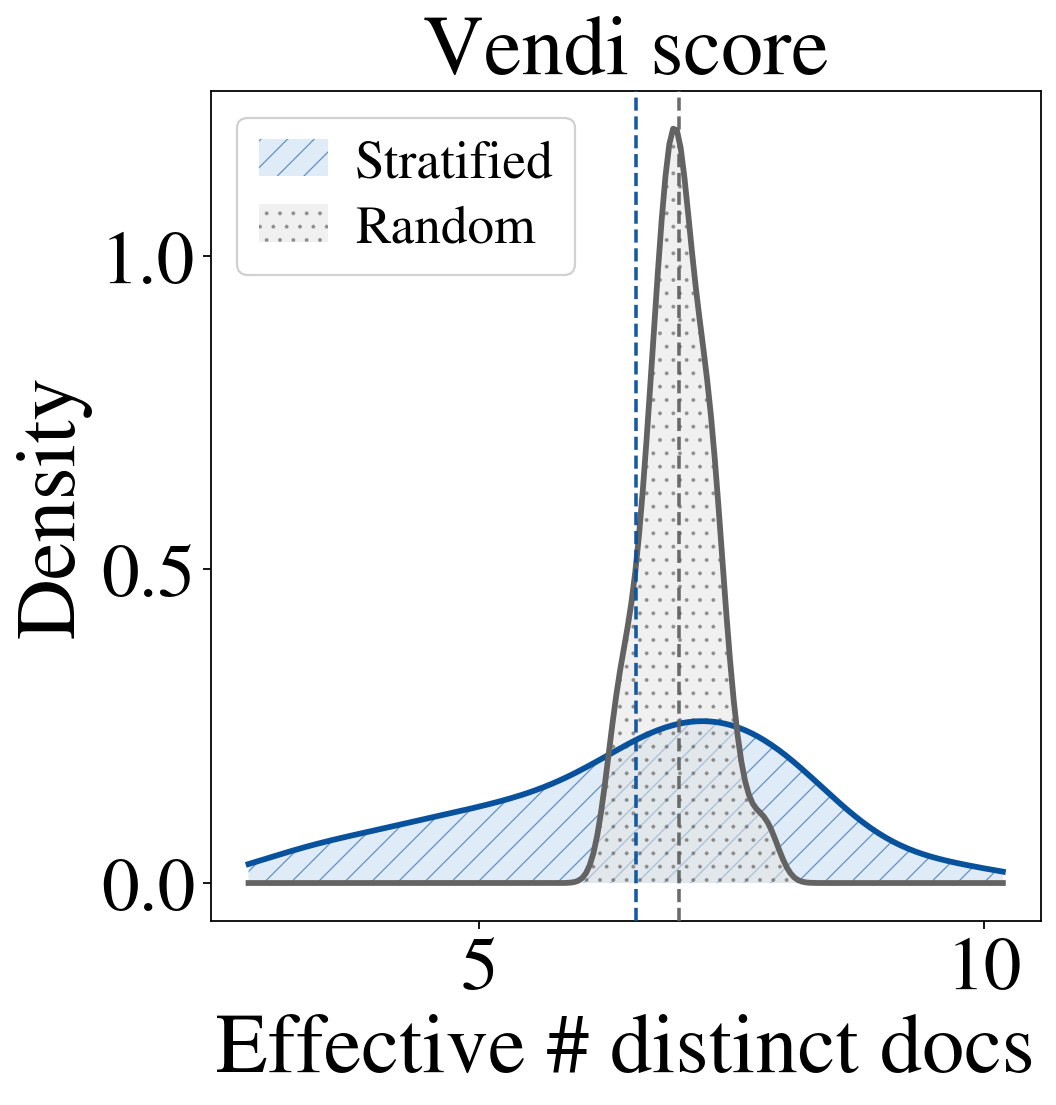}
    \caption{Vendi score.}
    \label{fig:divmetric_goose_vendi}
\end{subfigure}
\caption{Diversity metrics on our constructed groups versus randomly formed data groups in the post-training setting.}
\label{fig:diversity_metrics_goose}
\end{figure*}

\newpage

\subsection{Group Oracle Budget Analysis Plots}\label{appendix:budget_plots}
This section contains the full set of group-oracle budget allocation plots, summarized in Figure~\ref{fig:budget} of the main body.

\paragraph{Choice of diagnostic.} Figure~\ref{fig:allocation_heuristics_all} compares GradDiv against cheap diversity heuristics as the diagnostic used to decide which groups receive a group-oracle call.
\begin{figure}[h]
\centering
\begin{subfigure}[t]{0.24\linewidth}
    \centering
    \includegraphics[width=\linewidth]{images/budget/allocation_heuristics_repro_n5120_recall10.png}
    \caption{Pre-train, top-10\% recall.}
    \label{fig:allocation_heuristics_repro_n5120_recall10_appendix}
\end{subfigure}
\hfill
\begin{subfigure}[t]{0.24\linewidth}
    \centering
    \includegraphics[width=\linewidth]{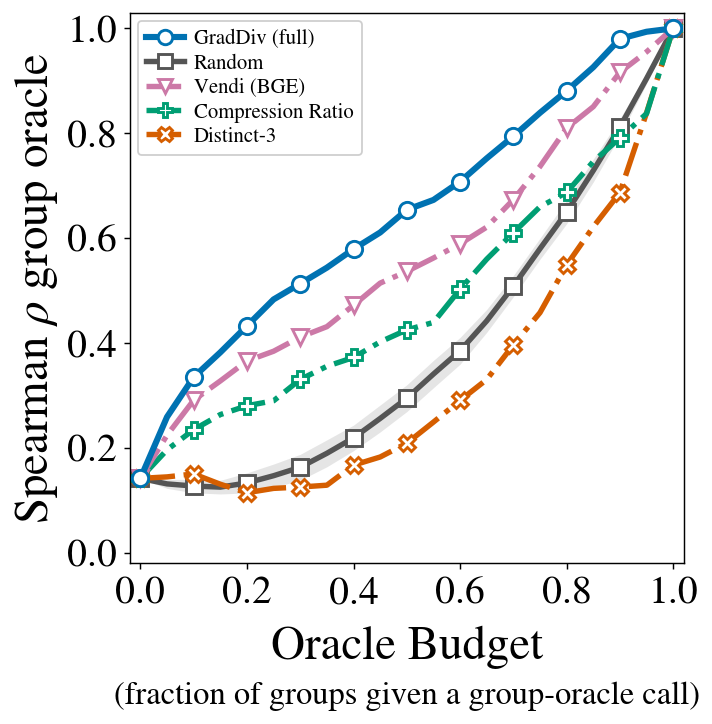}
    \caption{Pre-train, Spearman $\rho$.}
    \label{fig:allocation_heuristics_repro_n5120_spearman}
\end{subfigure}
\hfill
\begin{subfigure}[t]{0.24\linewidth}
    \centering
    \includegraphics[width=\linewidth]{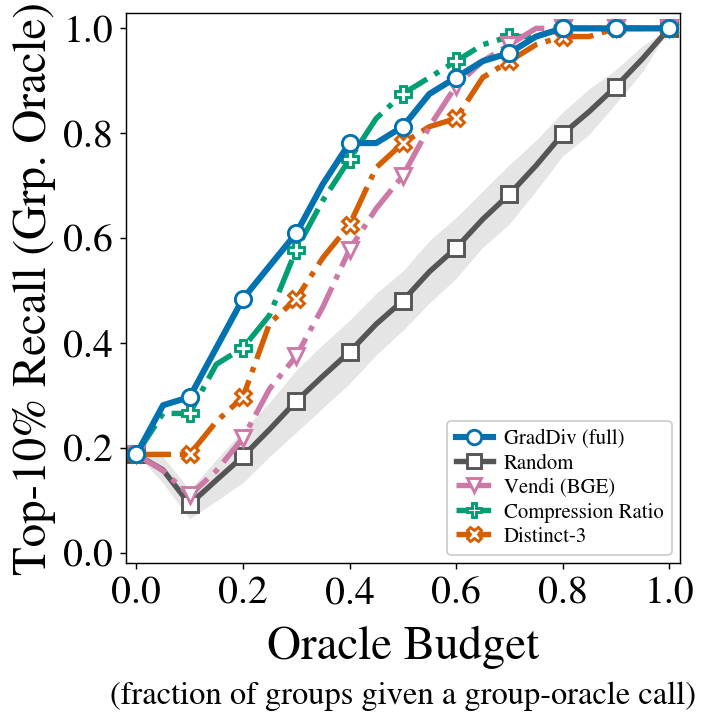}
    \caption{Post-train, top-10\% recall.}
    \label{fig:allocation_heuristics_goose_n512_recall10}
\end{subfigure}
\hfill
\begin{subfigure}[t]{0.24\linewidth}
    \centering
    \includegraphics[width=\linewidth]{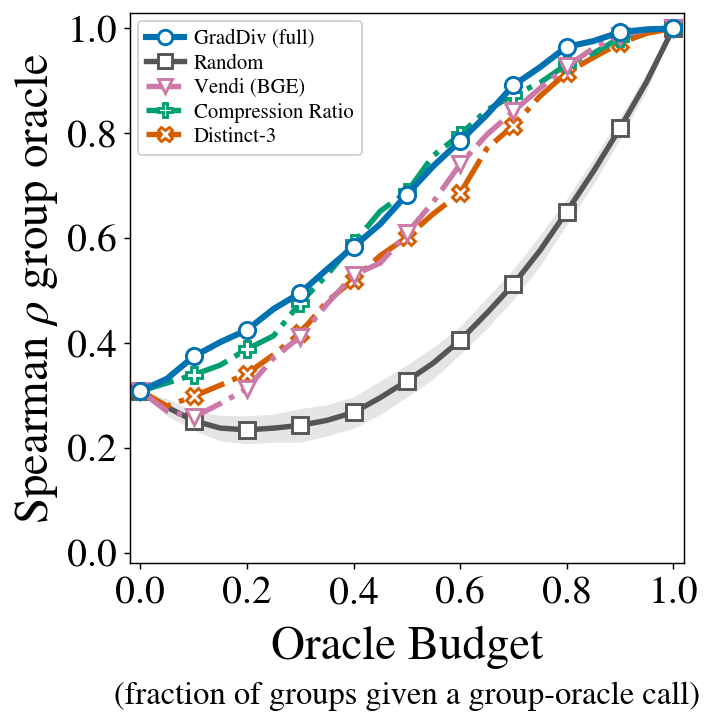}
    \caption{Post-train, Spearman $\rho$.}
    \label{fig:allocation_heuristics_goose_n512_spearman}
\end{subfigure}
\caption{Choice of diagnostic for group-oracle budget allocation, for pre-train and post-train data (group sizes of 5120 and 512 samples, respectively).}
\label{fig:allocation_heuristics_all}
\end{figure}

\paragraph{Gradient subsampling.} Figure~\ref{fig:allocation_fraction_all} shows the effect of subsampling gradients when computing the GradDiv diagnostic.
\begin{figure}[h]
\centering
\begin{subfigure}[t]{0.24\linewidth}
    \centering
    \includegraphics[width=\linewidth]{images/budget/allocation_fraction_repro_n5120_recall10.png}
    \caption{Pre-train, top-10\% recall.}
    \label{fig:allocation_fraction_repro_n5120_recall10_appendix}
\end{subfigure}
\hfill
\begin{subfigure}[t]{0.24\linewidth}
    \centering
    \includegraphics[width=\linewidth]{images/budget/allocation_fraction_repro_n5120_spearman.png}
    \caption{Pre-train (5120), Spearman $\rho$.}
    \label{fig:allocation_fraction_repro_n5120_spearman_appendix}
\end{subfigure}
\hfill
\begin{subfigure}[t]{0.24\linewidth}
    \centering
    \includegraphics[width=\linewidth]{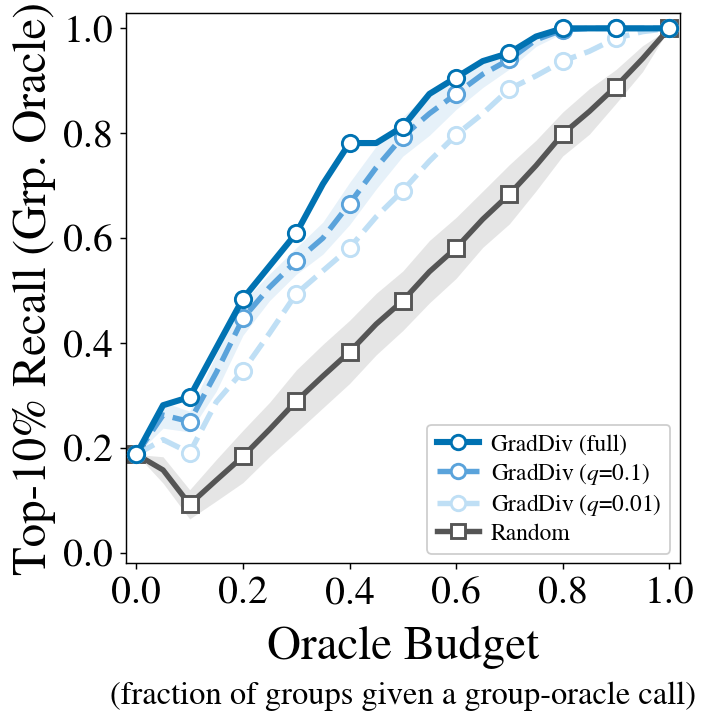}
    \caption{Post-train, top-10\% recall.}
    \label{fig:allocation_fraction_goose_n512_recall10}
\end{subfigure}
\hfill
\begin{subfigure}[t]{0.24\linewidth}
    \centering
    \includegraphics[width=\linewidth]{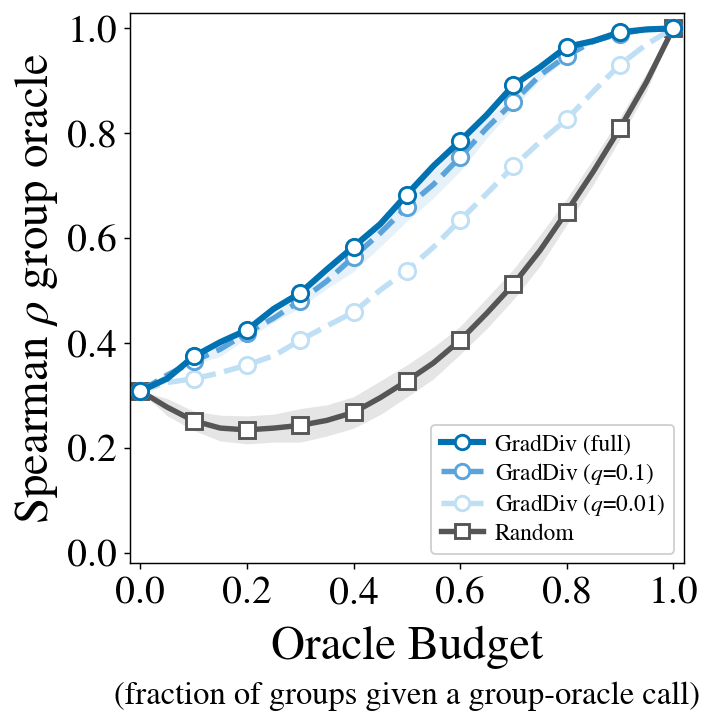}
    \caption{Post-train, Spearman $\rho$.}
    \label{fig:allocation_fraction_goose_n512_spearman}
\end{subfigure}
\caption{Effect of gradient subsampling on group-oracle budget allocation, comparing the full GradDiv diagnostic against subsampled variants ($q = 0.1$, $q = 0.01$) for pre-train and post-train data (group sizes of 5120 and 512 samples, respectively).}
\label{fig:allocation_fraction_all}
\end{figure}

\newpage

\end{document}